\def\11{\mathbbm{1}}
\def\ER{Erd\H{o}s-R\'enyi\ }
\def\Fop{\operatorname{F}}
\newcommand{\op}{\operatorname{op}}
\newtheorem{thm}{Theorem}[section]
\newtheorem{proposition}[thm]{Proposition}
\newtheorem{lemma}[thm]{Lemma}
\newtheorem{defn}[thm]{Definition}
\newtheorem{claim}[thm]{Claim}
\newtheorem{remark}[thm]{Remark}
\numberwithin{equation}{section}
\newenvironment{breakablealgorithm}
{% \begin{breakablealgorithm}
		\begin{center}
			\refstepcounter{algorithm}% New algorithm
			\hrule height.8pt depth0pt \kern2pt% \@fs@pre for \@fs@ruled
			\renewcommand{\caption}[2][\relax]{% Make a new \caption
				{\raggedright\textbf{\ALG@name~\thealgorithm} ##2\par}%
				\ifx\relax##1\relax % #1 is \relax
				\addcontentsline{loa}{algorithm}{\protect\numberline{\thealgorithm}##2}%
				\else % #1 is not \relax
				\addcontentsline{loa}{algorithm}{\protect\numberline{\thealgorithm}##1}%
				\fi
				\kern2pt\hrule\kern2pt
			}
		}{% \end{breakablealgorithm}
		\kern2pt\hrule\relax% \@fs@post for \@fs@ruled
	\end{center}
}
\title{Robust Random Graph Matching in Dense Graphs via an Approximate Message Passing Type Algorithm}
\author{Zhangsong Li\\Peking University}
\date{}
\begin{document}
\maketitle

\begin{abstract}
    In this paper, we focus on the matching recovery problem between a pair of correlated Gaussian Wigner matrices with a latent vertex correspondence. We are particularly interested in a robust version of this problem such that our observation is a perturbed input $(A+E,B+F)$ where $(A,B)$ is a pair of correlated Gaussian Wigner matrices and $E,F$ are adversarially chosen matrices supported on an unknown $\epsilon n * \epsilon n$ principal minor of $A,B$, respectively. We propose an approximate message passing (AMP) type iterative algorithm that succeeds in polynomial time as long as the correlation $\rho$ between $(A,B)$ is a non-vanishing constant and $\epsilon = o\big( \tfrac{1}{(\log n)^{20}} \big)$. A key distinction from standard AMP is the introduction of a time-dependent matrix multiplication step within the iteration, which simultaneously enlarges the feature dimension and cancels the correlation during the iteration. 

    The main methodological inputs for our result are the iterative random graph matching algorithm proposed in \cite{DL22+, DL23+} and the spectral preprocessing procedure proposed in \cite{IS24+}. To the best of our knowledge, our algorithm is the first efficient random graph matching type algorithm that is robust under any adversarial perturbations of $n^{1-o(1)}$ size.\footnote{This paper was presented in part at The 38th Annual Conference on Learning Theory (COLT 2025). This version includes complete mathematical proofs and detailed discussions.}
\end{abstract}

\noindent{\bf Key words:} random graph matching, correlated Gaussian matrices, robust algorithm, approximate message passing

%\tableofcontents

\section{Introduction}

In this paper, we study the problem of matching two correlated random matrices, and we consider the case of symmetric matrices in order to be consistent with the graph matching problem. More precisely, we will let these two matrices be the adjacency matrices of a pair of correlated weighted random graphs, which is defined as follows. Let $\operatorname{U}_n$ be the set of unordered pairs $(i,j)$ with $1 \leq i \neq j \leq n$. 

\begin{defn}[Correlated weighted random graphs]{\label{def-correlated-graphs}}
    Let $\pi_*$ be a latent permutation on $[n]=\{ 1,\ldots,n \}$. We generate two weighted random graphs on the common vertex set $[n]$ with adjacency matrices $A$ and $B$ such that given $\pi_*$, we have $(A_{i,j}, B_{\pi_*(i),\pi_*(j)}) \sim \mathbf F$ independent among all $(i,j) \in \operatorname{U}_n$ where $\mathbf F$ is the law of a pair of correlated random variables. Of particular interest are the following special cases:
    \begin{itemize}
        \item \emph{Correlated Gaussian Wigner model.} In this case, we let $\mathbf F$ be the law of two mean-zero Gaussian random variables with variance $1$ and correlation $\rho$.
        \item \emph{Correlated \ER graph model.} In this case, we let $\mathbf F$ be the law of two Bernoulli random variables with mean $q \leq \frac{1}{2}$ and correlation $\rho$.
    \end{itemize}
\end{defn}

Given two correlated weighted random graphs $(A,B)$, our goal is to recover the latent vertex correspondence $\pi_*$. To achieve such a task, one can consider the following natural optimization problem
\begin{equation}{\label{eq-optimitation}}
    \max_{\Pi} \Big\{ \operatorname{tr}(A\Pi B\Pi^{\top}) \Big\} \quad \mbox{s.t.} \quad \Pi \in\mathfrak S_n
\end{equation}
where $\mathfrak{S}_n$ is the set of $n*n$ permutation matrices. However, solving \eqref{eq-optimitation} is difficult as there are exponentially many choices for $\Pi\in\mathfrak S_n$. For both the correlated Gaussian Wigner model and the correlated \ER graph model, by the collective effort of the community, it is fair to say that our understanding of the statistical and computational aspects of the matching recovery problem in both models are more or less satisfactory (see Section~\ref{subsec:related-works} for a more detailed overview). However, there is a new fascinating issue that arises in the context of the works on matching recovery, namely the \emph{robustness issue}: many of the efficient algorithms used to achieve matching recovery are believed to be fragile in the sense that adversarially modifying a small fraction of edges could fool the algorithm into outputting a result which deviates strongly from the true underlying matching $\pi_*$. The reason is that these algorithms are either based on enumeration of sophisticated subgraph structures (see, e.g., \cite{BCL+19, MWXY23, GMS24} for example) or are based on delicate spectral properties of the adjacency matrices (see, e.g., \cite{FMWX23a, FMWX23b} where the authors design an efficient algorithm based on all the eigenvectors of the adjacency matrix) that can be affected disproportionately by adding small cliques or other ``undesired'' subgraph structure. Thus, a natural question is whether we can find efficient random graph matching algorithms that are robust under a small fraction of \emph{adversarial} perturbations. To be more precise, we will consider the problem of matching the following \emph{corrupted} correlated weighted random graph model.

\begin{defn}[Corrupted correlated weighted random graphs]{\label{def-corrupted-correlated-graphs}}
    We define two weighted random graphs, represented by their adjacency matrices $(A',B')$, as a pair of $\epsilon$-corrupted correlated weighted random graphs if there exists a pair of correlated weighted random graphs $(A,B)$ with correlation $\rho$ such that $(A',B')=(A+E,B+F)$. Here $E,F$ are arbitrary symmetric matrices supported on an (unknown) $\epsilon n * \epsilon n$ principal minor of $A,B$, respectively (note that we allow $E$ and $F$ to depend on $A$ and $B$, and the supports of $E$ and $F$ are not necessarily the same).  
\end{defn}

In this paper we will focus on corrupted correlated Gaussian Wigner model, in which the observations are two $n*n$ matrices $(A',B')$ such that there exists a pair of correlated Gaussian Wigner matrices $(A,B)$ with correlation $\rho$ satisfying $(A',B')=(A+E,B+F)$. Our main result can be summarized as follows:

\begin{thm}{\label{MAIN-THM}}
    Suppose $\rho\in(0,1)$ is a constant and $\epsilon=o\big( \tfrac{1}{(\log n)^{20}} \big)$. Then for a pair of $\epsilon$-corrupted Gaussian Wigner matrices with correlation $\rho$ (we denote them by $A',B'$), there exists a constant $C=C(\rho)$ and an algorithm (See Algorithm~\ref{algo:robust-matching}) with $O(n^{C})$ running time that takes $(A',B')$ as input and outputs the latent matching $\pi_*$ with probability tending to $1$ as $n\to \infty$.
\end{thm}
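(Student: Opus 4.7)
The plan is to adapt the vector approximate message passing (vector-AMP) scheme of~\cite{DL22+, DL23+}, originally designed for the clean correlated Gaussian Wigner model, by interleaving its iterations with the spectral cleaning step of~\cite{IS24+}. I organize the algorithm, and its analysis, in four stages.

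First, I set up the vector-AMP recursion on the corrupted input $(A',B')=(A+E,B+F)$. In the clean case, the recursion produces a sequence of vector pairs $(u^{(t)}, v^{(t)})$ via alternating matrix-vector products with $A, B$ and entrywise non-linearities (with Onsager corrections); after $T=O(\log n)$ steps these vectors become strongly aligned with the latent matching indicator, a consequence of the state-evolution analysis developed in \cite{DL23+}. My first task is to write the analogous recursion with $A,B$ replaced by $A',B'$ and to decompose each iterate as a ``clean'' part plus an ``error'' part. The additive contribution $Eu^{(t)}$ is not small in any $\ell_2$ sense---indeed, it can have norm of order $\sqrt{\epsilon n}\cdot\|E\|_{\mathrm{op}}$, which is far larger than the signal---but crucially it lies in an $O(\epsilon n)$-dimensional subspace, namely the column space of $E$.

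Second, after each AMP step I apply the spectral cleaning routine of \cite{IS24+} to the perturbed iterate. This routine identifies an $O(\epsilon n)$-dimensional subspace capturing essentially all of the adversarial contamination---the principal directions of the deviation between $A'$ and its clean counterpart---and subtracts the projection of the iterate onto this subspace. Because the AMP iterates are delocalized on random matrices (so the clean part has small $\ell_\infty$ norm), the projection removes only a negligible portion of the signal. I then prove inductively that after cleaning, the corrupted iterate lies within $\ell_2$-distance $\epsilon\cdot\mathrm{polylog}(n)$ of the corresponding clean iterate, with the polylogarithmic factor growing by a fixed power of $\log n$ per iteration.

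Combining this stability estimate with the clean-case state evolution, I conclude that after $T=O(\log n)$ cleaned iterations the output vectors have correlation $1-o(1)$ with the true matching indicators. A standard rounding step---e.g.\ solving a maximum-weight bipartite matching on weights built from the outer products of the final iterates---then yields an estimator $\hat{\pi}$ that agrees with $\pi$ on all but $o(n)$ vertices, and a seeded local-refinement procedure building on ideas already present in \cite{DL23+} upgrades $\hat{\pi}$ to the exact latent matching with probability $1-o(1)$. The main hurdle throughout is the error-accumulation analysis in the second stage: a naive bound gives an error growing by a factor of $\|A'\|_{\mathrm{op}}\cdot\mathrm{poly}(\epsilon n)$ per iteration, far too large to survive $\Theta(\log n)$ rounds. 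Exploiting (i) the low effective rank of $E$ and $F$, (ii) the delocalization of the AMP iterates, and (iii) the contractive properties of the AMP non-linearities is what makes the bookkeeping work, and it is precisely this analysis that dictates the scaling $\epsilon=o\bigl(1/(\log n)^{20}\bigr)$ in the statement.
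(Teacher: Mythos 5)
Your proposal departs from the paper's strategy at several load-bearing points, and at least one of them creates a genuine gap.

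\textbf{Iteration count and error accumulation.} You propose running $T=O(\log n)$ AMP iterations, with a per-step error growth of a fixed power of $\log n$. Those two choices are incompatible: after $\Theta(\log n)$ rounds the accumulated multiplicative factor would be $(\mathrm{polylog}\, n)^{\Theta(\log n)}$, which is super-polynomial in $n$ and overwhelms any initial $\epsilon$. The paper avoids this by iterating in a doubly-exponentially growing ``dimension'' $K_{t+1}\asymp K_t^2$, so that only $t^*=O(\log\log\log n)$ iterations are needed to reach signal strength $K_{t^*}\varepsilon_{t^*}^2 \gtrsim (\log n)^{1.01}$, and the accumulated approximation factor $\aleph_{t^*}$ stays $\mathrm{polylog}(n)$ (see \eqref{eq-def-aleph} and the bound $\aleph_{t^*}\varepsilon_{t^*}^{-1}\leq (\log n)^5$). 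Without the doubly-exponential design the bookkeeping you describe does not close; this is precisely the accounting issue you flag as the ``main hurdle'' but do not resolve.

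\textbf{What and where to clean.} You propose spectral cleaning of the \emph{iterates} after each AMP step, by identifying and projecting out an $O(\epsilon n)$-dimensional contamination subspace. The routine in \cite{IS24+} (Algorithm~\ref{alg:spectral-cleaning} here) does something different: it zeroes out entire rows and columns of the \emph{data matrix} to bring its operator norm below $10\sqrt{n}$, and it is run once at the start (Lemma~\ref{lem-bound-card-T,T'} controls the number of zeroed indices). This distinction matters: by preprocessing the matrix rather than the iterates, the paper obtains an operator-norm bound $\|\widehat{\mathscr A}\|_{\op}\leq 10\sqrt{n}$ that can be invoked uniformly across all iterations in the comparison Lemma~\ref{lem-approx-f,g,h,ell}, without arguing afresh at each step that the projection does not eat the signal. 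Your per-step projection argument would additionally require showing the AMP iterates stay delocalized in $\ell_\infty$ after the projections interfere with the Gaussian conditioning, which is not addressed.

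\textbf{Onsager term and seeding.} Two further deviations from the paper: the paper's spectral design of the activation functions (see \eqref{eq-orthogonal}--\eqref{eq-def-Xi-t}) is arranged so that the Onsager correction vanishes identically, so including an Onsager term is not the right recursion; and the paper's algorithm enumerates over all $K_0$-tuples of seed vertices, runs the whole pipeline for each candidate seed pair, and selects the best output via a max-overlap statistic (Lemma~\ref{lem-max-overlap}). Your sketch has no mechanism for obtaining the seed that the initialization step \eqref{eq-def-initial-f,g-Gaussian} requires, so the AMP recursion as you describe it has nothing to start from.
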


%\begin{remark}
%    It is natural to consider the related problem of matching a pair of $\epsilon$-corrupted \ER graph model with correlation $\rho\in(0,1)$ and density $q$, where the observations are two graphs $(A',B')$ (denoted by their adjacency matrices) such that there exists a pair of correlated \ER graphs $(A,B)$ with correlation $\rho$ and edge-density $q$, and two unknown index sets $|Q|,|R| \leq \epsilon n$ with
%    \begin{align*}
%        A_{i,j}' = A_{i,j} \mbox{ for all } (i,j) \not \in Q \times Q, \quad B'_{i,j} = B_{i,j} \mbox{ for all } (i,j) \not \in R \times R \,.
%    \end{align*}
%    For this model, using appropriate modifications similar as modifying \cite[Algorithm~1]{DL22+} to get \cite[Algorithm~2]{DL23+}, we can get a similar algorithm that achieves exact matching when $\epsilon=o(\tfrac{1}{(\log n)^{20}})$ and $\rho,q$ are two constants in $(0,1)$. The detailed arguments are omitted in this paper due to their high similarity to the generalization from \cite{DL22+} to \cite{DL23+}. 
%\end{remark}

\subsection{Related works}{\label{subsec:related-works}}

{\em Random graph matching.} Graph matching (also known as network alignment) refers to the problem of finding the bijection between the vertex sets of two graphs that maximizes the total number of common edges. When the two graphs are exactly isomorphic to each other, this reduces to the classical graph isomorphism problem, for which the best known algorithm runs in quasi-polynomial time \cite{Babai16}. In general, graph matching is an instance of the \emph{quadratic assignment problem} \cite{BCP+98}, which is known to be NP-hard to solve or even approximate \cite{MMS10}. Motivated by real-world applications (such as social network deanonymization \cite{NS08, NS09}, computer vision \cite{BBM05}, natural language processing \cite{HNM05} and computational biology \cite{SXB08}) as well as the need to understand the average-case computational complexity, a recent line of work is devoted to the study of statistical theory and efficient algorithms for graph matching under statistical models, by assuming the two graphs are randomly generated with correlated edges under a \emph{hidden} vertex correspondence.  

From the theoretical point of view, the correlated Gaussian Wigner model and the correlated \ER graph model are arguably the most canonical and the most widely studied models for studying graph matching in the \emph{average setting}. Recent efforts have yielded information-theoretic thresholds for both exact and partial matching recovery \cite{CK16, CK17, CKMP19, HM23, WXY22, WXY23, GML21, DD23a, DD23b, Du25+} and a variety of efficient graph matching algorithms with performance guarantees have been developed \cite{YG13, BSH19, BCL+19, DMWX21, FMWX23a, FMWX23b, GM20, GML24, MRT21, MRT23, GMS24, MWXY21+, MWXY23, DL22+, DL23+}. We now focus on the algorithmic aspect of this problem since it is more relevant to our work. The state-of-the-art algorithm can be summarized as follows: in the sparse regime, efficient matching algorithms are available when the correlation exceeds the square root of Otter’s constant (the Otter’s constant is approximately 0.338) \cite{MWXY21+, MWXY23, GML24, GMS24}; in the dense regime, efficient matching algorithms exist as long as the correlation exceeds an arbitrarily small constant \cite{DL22+, DL23+}. Roughly speaking, the separation between the sparse and dense regimes mentioned above depends on whether the average degree of the graph grows polynomially or sub-polynomially. In addition, while proving the hardness of typical instances of the graph matching problem remains challenging even under the assumption of P$\neq$NP, evidence based on the analysis of a specific class known as low-degree polynomials from \cite{DDL23+, Li25} indicates that the state-of-the-art algorithms may essentially capture the correct computational thresholds.

{\em Robust algorithms.} The problem of finding robust algorithms for solving statistical estimation and random optimization problems has garnered significant attention in recent years. A prominent example in this scope is the problem of robust community recovery in sparse stochastic block models. In recent years, a large body of work has focused on the problem of designing community recovery algorithms where an adversary may arbitrarily modify $\Omega(n)$ edges (see, e.g., \cite{MS16, DdNS22, MRW24}). Other important robust algorithms include linear regression \cite{BP21}, mean and moment estimation \cite{KSS18}, clustering mixture model \cite{HL18}, and so on. 

In the context of random graph matching, previous robustness results mainly focus on the information-theoretic side. For instance, in \cite{AH23+} the authors considered the behavior of the maximum overlap estimator and the $k$-core estimator for matching recovery in a pair of correlated \ER graphs with corruption (although their definition of corruption is a bit different from ours). They also conduct valuable numerical experiments which imply that several widely used graph matching algorithms (e.g., the spectral graph matching algorithm in \cite{FMWX23a, FMWX23b} and the degree profile matching algorithm in \cite{DMWX21}) behave poorly even when only a small portion of the graph is corrupted. In fact, it seems that simply planting an arbitrary $\Theta(\sqrt{n})$ size clique in both graphs will significantly change the spectral properties and the degree distribution of the graph, causing these algorithms to fail. This raises the important question of finding computationally feasible algorithms that are robust in the presence of adversarial corruption. We answer this problem partly by proposing an efficient random graph matching algorithm which is robust under any ${\frac{n}{(\log n)^{20}} * \frac{n}{(\log n)^{20}}}$ adversarial perturbations, thus improving the robustness guarantees by a factor of $\mathrm{poly}(n)$.

{\em Approximate message passing.} Approximate Message Passing (AMP) is a family of algorithmic methods which generalizes matrix power iteration. Originated from statistical physics and graphical models \cite{TAP77, KF09, Montanari12, Bolthausen14}, it has emerged as a popular class of first-order iterative algorithms that find diverse applications in both statistical estimation problems and probabilistic analyses of statistical physics models. Some notable examples include compressed sensing \cite{DMM09}, sparse Principal Components Analysis (PCA) \cite{DM14}, linear regression \cite{DMM09, BM11, KMS+12}, non-negative PCA \cite{MR15}, perceptron models \cite{DS19, FW24, BNSX22, FLS22} and more (a more extensive list can be found in the survey \cite{FVRS22}).

One major limitation of the original AMP algorithms is that they are not robust under small adversarial perturbations. To address this issue, in \cite{IS24, IS24+} the authors propose to apply AMP algorithms using ``suitably preprocessed'' initialization and data matrix. Building on this idea, they found the first robust AMP-based iterative algorithm for the non-negative PCA problem.

\subsection{Notations}

We record in this subsection some notation conventions. Recall that the observation $(A',B')$ are two $n*n$ matrices with $(A',B')=(A+E,B+F)$. Denote $Q,R$ to be the support of $E,F$, respectively. We then have
\begin{align*}
    E_{i,j}=0 \mbox{ for all } (i,j) \not \in Q \times Q \mbox{ and } F_{i,j}=0 \mbox{ for all } (i,j) \not \in R \times R \,.
\end{align*}
Note that $A,B,E,F,Q,R$ are inaccessible to the algorithm. Given two random variables $X,Y$ and a $\sigma$-algebra $\mathfrak{F}$, the notation $X|{\mathfrak{F}} \overset{d}{=} Y|{\mathfrak{F}}$ means that for any integrable function $\phi$ and for any bounded random variable $Z$ measurable on $\mathfrak{F}$, we have $\mathbb{E}[\phi(X)Z] = \mathbb{E}[\phi(Y)Z]$. In words, $X$ is equal in distribution to $Y$ conditioned on $\mathfrak{F}$. When $\mathfrak{F}$ is the trivial $\sigma$-field, we simply write $X \overset{d}{=} Y$. 

We also need some standard notations in linear algebra. For a matrix or a vector $M$, we will use $M^{\top}$ to denote its transpose. For an $m*m$ matrix $M=(a_{ij})_{m*m}$, if $M$ is symmetric we let $\varsigma_1(M) \geq \varsigma_2(M) \geq \ldots \geq \varsigma_m(M)$ be the eigenvalues of $M$. Denote by $\mathrm{rank}(M)$ the rank of the matrix $M$. For two $l*m$ matrices $M_1$ and $M_2$, we define their inner product to be
\begin{align*}
    \langle M_1,M_2 \rangle:=\sum_{i=1}^l \sum_{j=1}^m M_1(i,j)M_2(i,j) \,.
\end{align*}
We also define the Frobenius norm, operator norm, and $\infty$-norm of $M$ respectively by
\begin{align*}
    \| M \|_{\operatorname{F}} = \langle M,M \rangle^{\frac{1}{2}}, \
    \| M \|_{\operatorname{op}} = \varsigma_1(M M^{\top})^{\frac{1}{2}}, \ 
    \| M \|_{\infty} = \max_{ \substack{ 1 \leq i \leq l \\ 1 \leq j \leq m } } |M_{i,j}| 
\end{align*}
where $\mathrm{tr}(\cdot)$ is the trace of a square matrix. Denote $\mathfrak S_n$ to be the set of all permutations on $[n]$. For a bijection $\sigma:U \to V$ and a matrix $M$ with rows and columns indexed by $V,W$ respectively, we define $M(\sigma)$ to be the matrix indexed by $U,W$, with entries given by $M(\sigma)_{i,j} = M_{\sigma(i),j}$. For any $d*l$ matrix $M$ and two index sets $I \subset [d],J\subset [l]$, we denote $M_{I\times J}$ to be the matrix indexed by $I \times J$ with $(M_{I\times J})_{i,j}=M_{i,j}$ for $i\in I,j \in J$. We will use $\mathbb{I}_{d*d}$ to denote the $d*d$ identity matrix (and we drop the subscript if the dimension is clear from the context). Similarly, we denote $\mathbb{O}_{m*d}$ the $m*d$ zero matrix and denote $\mathbb{J}_{m*d}$ the $m*d$ matrix with all entries being 1. The indicator function of a set $A$ is denoted by $\mathbf{1}_{A}$.  

For any two positive sequences $\{a_n\}$ and $\{b_n\}$, we write equivalently $a_n=O(b_n)$, $b_n=\Omega(a_n)$, $a_n\lesssim b_n$ and $b_n\gtrsim a_n$ if there exists a positive absolute constant $c$ such that $a_n/b_n\leq c$ holds for all $n$. We write $a_n=o(b_n)$, $b_n=\omega(a_n)$, $a_n\ll b_n$, and $b_n\gg a_n$ if $a_n/b_n\to 0$ as $n\to\infty$. We write $a_n =\Theta(b_n)$ if both $a_n=O(b_n)$ and $a_n=\Omega(b_n)$ hold.

\section{Algorithms and discussions}{\label{sec:alg-and-discussions}}

\subsection{Overview of our algorithm}

We begin with a high-level overview of our algorithmic approach. The full, formal algorithm is presented in Section~\ref{sec:formal-algorithm}. 
\begin{breakablealgorithm}{\label{algo:robust-matching-informal}}
\caption{Robust Gaussian Matrix Matching Algorithm (Informal)}
    \begin{algorithmic}[1]
    \STATE {\bf Input}: corrupted correlated Gaussian Wigner matrices $A',B'$, correlation $\rho$, corruption fraction $\epsilon$. 
    \STATE Use the preprocessing step in Section~\ref{subsec:preprocessing} to obtain two matrices $\widehat{\mathscr A},\widehat{\mathscr B}$.
    \STATE {Calculate $\varepsilon_0$ according to \eqref{eq-def-varepsilon-0} and choose $K_0$ satisfying \eqref{eq-def-K-0}.}
    \STATE Generate matrices $\Phi^{(t)},\Psi^{(t)},\Xi^{(t)}$ as in Section~\ref{subsec:spec-subroutine}.
    \STATE List all sequences with $K_0$ distinct elements in $[n]$ by $\mathsf{V}_1, \mathsf{V}_2, \ldots, \mathsf{V}_{\mathtt{M}}$.
    \FOR{$\mathtt{i,j}=1,\ldots,\mathtt{M}$}
    \STATE Calculate initializations $\widehat{f}^{(0)},\widehat{g}^{(0)} \in \mathbb R^{n*K_0}$ as in Section~\ref{subsec:initialization}. 
    \STATE Iteratively calculate $(\widehat{f}^{(t)},\widehat{g}^{(t)}, \widehat{h}^{(t)},\widehat{\ell}^{(t)}:t \geq 0)$ as in Section~\ref{subsec:vector-AMP}, where $\widehat{f}^{(t)}, \widehat{g}^{(t)} \in \mathbb R^{n*K_t}$ and $\widehat{h}^{(t)}, \widehat{\ell}^{(t)} \in \mathbb R^{n*\frac{K_t}{12}}$;
    \STATE Stop at some $t=t^{*}$ defined in \eqref{eq-def-t^*};
    \STATE Solve the linear assignment problem between $\widehat{h}^{(t^*)}$ and $\widehat{\ell}^{(t^*)}$ (see \eqref{eq-linear-assignment}); the solution is denoted as $\pi_{\mathtt i,\mathtt j}$.
    \STATE Run the {rounding} algorithm (Algorithm~\ref{algo:seeded-matching}) with input $\pi_{\mathtt i,\mathtt j}$ and obtain $\widehat{\pi}_{\mathtt i,\mathtt j}$.
    \ENDFOR
    \STATE Find ${\widehat{\pi}}_{\mathtt{i_*,j_*}}$ which maximizes the alignment between $A'$ and $B'$ (i.e., maximizes \eqref{equ-def-final-pi-hat}) among all $\{ {\widehat{\pi}}_{\mathtt i,\mathtt j} \}$.
    \STATE {\bf Output:} ${\widehat{\pi}}_{\mathtt{i}_*,\mathtt{j}_*}$.
    \end{algorithmic}
\end{breakablealgorithm}
Our algorithm can be informally summarized in the following steps:

{\bf Step~1: Preprocessing.} We introduce a preprocessing step that transforms the observed symmetric matrices $(A',B')$ into processed matrices $(\widehat{\mathscr A}, \widehat{\mathscr B})$ {that are not symmetric}. This transformation serves two key purposes: first, it effectively decouples the slight statistical dependencies induced by the symmetry of $A'$ and $B'$; second, it allows us to zero out a small fraction of rows and columns while rigorously controlling the operator norm of the resulting matrices $(\widehat{\mathscr A},\widehat{\mathscr B})$. This operator norm bound is essential, as it directly governs the effective strength of the adversarial corruption.

{\bf Step~2: Initialization.} The core algorithmic idea is to iteratively construct vertex features $\{ \widehat{f}^{(t)}_i, \widehat{g}^{(t)}_i \in \mathbb R^{K_t}: i \in [n], t \geq 0 \}$ such that the inner product $\langle \widehat{f}^{(t)}_i, \widehat{g}^{(t)}_j \rangle$ is large precisely when $j=\pi_*(i)$ and small otherwise. If a set of $K_0$ correct seed pairs $\{ (i,\pi_*(i)):i \in \mathsf V \}$ were available, one could initialize these features using the correlation between $(\widehat{\mathscr A}_{i,j}:i \in \mathsf V)$ and $(\widehat{\mathscr B}_{\pi_*(i),\pi_*(j)}:i \in \mathsf V)$. In order to address the fact that we do not have seeds, we essentially just take arbitrary $K_0$ vertices from $[n]$ and try all possible pairings $(V,\mathsf V)$ to these vertices; this incurs a multiplicative cost of ${n^{2K_0}}$, which remains polynomial in $n$.

{\bf Step~3: Spectral subroutine.} We will introduce a spectral subroutine which enables us to efficiently construct matrices $\Phi^{(t)},\Psi^{(t)},\Xi^{(t)}$ and prove certain spectral properties regarding them. These matrices play a crucial role in our iterative construction of the features $\{ \widehat{f}^{(t)}_i, \widehat{g}^{(t)}_i: i \in [n] \}$, as they simultaneously enlarge the feature dimension and cancel {the correlation during the iteration} (see Step~4 below for further explanation).

{\bf Step~4: Iteration.} The core of our algorithm is an iterative update of the feature vectors $\{ \widehat{f}^{(t)}_i, \widehat{g}^{(t)}_i, \widehat{h}^{(t)}_i, \widehat{\ell}^{(t)}_i: i \in [n] \}$ starting from the initialization $\{ \widehat{f}^{(0)}_i, \widehat{g}^{(0)}_i: i \in [n] \}$. This iteration is a variant of approximate message passing (AMP) applied separately to $\widehat{\mathscr A}$ and $\widehat{\mathscr B}$ respectively, and it differs from standard AMP in two key ways. First, although AMP can handle vectorial features \cite{RSF19, LM24+, CLM25+}, our iteration incorporates a time-dependent matrix multiplication that intentionally increases the feature dimension. This increase compensates for the decay in per-coordinate correlation between $\widehat{h}^{(t)}_i$ and $\widehat{\ell}^{(t)}_{\pi_*(i)}$ over time (see Remark~\ref{remark-increase-dimension}). Second, instead of using an Onsager term to cancel the iterative correlation, we cancel the correlation directly via careful choice of the multiplicative matrix (see Remark~\ref{remark-no-Onsager-term}). This eliminates the need for the Onsager correction, substantially simplifying the analysis. We note that similar techniques to avoid introducing Onsager correction terms have also been used in the field of orthogonal AMP \cite{ML17, CLM25+}.

{\bf Step~5: Finishing and rounding.} Once our iteration progresses to suitable time $t=t^*$, enough ``total signal'' has been accumulated. At this stage, the quadratic assignment problem can be essentially reduced to a linear assignment problem: we find the permutation $\widehat{\pi}$ that maximizes the alignment between $\sum_i \langle \widehat{h}^{(t^*)}_{\widehat{\pi}(i)}, \widehat{\ell}^{(t^*)}_i \rangle$. In this way, we have essentially reduced the quadratic assignment problem into an easier linear assignment problem, which can be solved efficiently by the Hungarian algorithm. We prove that the resulting permutation matches a $(1-o(1))$ fraction of the coordinates of the true permutation $\pi_*$. Finally, we apply a rounding scheme to refine the almost-correct matching $\widehat{\pi}$ into an exact recovery of the underlying permutation. {As mentioned in Step~11 of Algorithm~\ref{algo:robust-matching-informal}, for each candidate seeded pairing $(V,\mathsf V)$, we apply the rounding procedure to obtain an estimator $\widehat{\pi}_{(V,\mathsf V)}$. When $(V,\mathsf V)$ coincides with the correct seeded pairs, the resulting estimator satisfies $\widehat{\pi}_{(V,\mathsf V)}=\pi_*$. Among all such candidates, we then select the final estimator by choosing the $\widehat{\pi}_{(V,\mathsf V)}$ that maximizes the alignment score between $A$ and $B$.}

In the rest of the section, we describe in detail our algorithm, which consists of a few steps including preprocessing (see Section~\ref{subsec:preprocessing}), initialization (see Section~\ref{subsec:initialization}), spectral subroutine (see Section~\ref{subsec:spec-subroutine}), iteration (see Section~\ref{subsec:vector-AMP}), finishing and rounding (see Section~\ref{subsec:rounding}). We formally present our algorithm and analyze the time complexity of the algorithm in Section~\ref{sec:formal-algorithm} (see Algorithm~\ref{algo:robust-matching} and Proposition~\ref{prop-time-complexity}).

\subsection{Preprocessing}{\label{subsec:preprocessing}}

The first step of our algorithm is to make some preprocessing on $A',B'$ for technical convenience. We first make a technical assumption that we only need to consider the case when $\rho$ is a sufficiently small constant, which can be easily achieved by deliberately adding i.i.d.\ noise to each $\{ A'_{i,j} \}$ and $\{ B'_{i,j} \}$. Sample i.i.d.\ $\mathcal N(0,1)$ random variables $\{ G_{i,j}, H_{i,j}:1\leq i<j\leq n \}$ and let
\begin{equation}{\label{eq-def-widehat-mathscr-A,B-Gaussian}}
\begin{aligned}
    \widehat{A}'_{i,j} = \frac{ A'_{i,j} + G_{i,j} }{ \sqrt{2} }, \ \widehat{B}'_{i,j} = \frac{ B'_{i,j} + H_{i,j} }{ \sqrt{2} } \mbox{ for } i<j \,, \\
    \widehat{A}'_{i,j} = \frac{ A'_{i,j} - G_{j,i} }{ \sqrt{2} }, \ \widehat{B}'_{i,j} = \frac{ B'_{i,j} - H_{j,i} }{ \sqrt{2} } \mbox{ for } i>j \,.
\end{aligned}
\end{equation}
Note that $\widehat{A}'$ and $\widehat{B}'$ are no longer symmetric matrices, at variance with $A'$ and $B'$.
Now we introduce the spectral cleaning procedure. Informally speaking, this procedure enables us to zero-out $4\epsilon n$ rows and columns of $\widehat{A}', \widehat{B}'$ respectively to get two ``cleaned'' matrices $\widehat{\mathscr A},\widehat{\mathscr B}$ with $\| \widehat{\mathscr A} \|_{\operatorname{op}}, \| \widehat{\mathscr B} \|_{\op}\leq 10\sqrt{n}$ with probability $1-o(1)$. Note that $(\widehat A',\widehat B')=(\widehat A, \widehat B) + (\widehat E,\widehat F)$, where 
\begin{align}
    & \widehat{A}_{i,j} = \frac{ A_{i,j} + G_{i,j} }{ \sqrt{2} }, \quad \widehat{B}_{i,j} = \frac{ B_{i,j} + H_{i,j} }{ \sqrt{2} } \mbox{ for } i<j \,, \label{eq-de-f-widehat-A'} \\
    & \widehat{A}_{i,j} = \frac{ A_{i,j} - G_{j,i} }{ \sqrt{2} }, \quad \widehat{B}_{i,j} = \frac{ B_{i,j} - H_{j,i} }{ \sqrt{2} } \mbox{ for } i>j \,. \label{eq-de-f-widehat-B'} \\
    & \widehat{E}_{i,j} = \frac{1}{\sqrt{2}} E_{i,j}, \quad \widehat{F}_{i,j} = \frac{1}{\sqrt{2}} F_{i,j} \mbox{ for } i<j \,, \label{eq-de-f-widehat-E} \\
    & \widehat{E}_{i,j} = \frac{1}{\sqrt{2}} E_{i,j}, \quad \widehat{F}_{i,j} = \frac{1}{\sqrt{2}} F_{i,j} \mbox{ for } i>j \,. \label{eq-de-f-widehat-F}
\end{align}
It is straightforward to verify that $\big\{ \widehat{A}_{i,j} \big\}$ and $\big\{ \widehat B_{i,j} \big\}$ are two families of i.i.d.\ standard normal random variables. Also, we have
\begin{align*}
    \mathrm{Cov}( \widehat{A}_{i,j}, \widehat{B}_{\pi_*(i),\pi_*(j)} ) = \mathrm{Cov}( \widehat{A}_{i,j}, \widehat{B}_{\pi_*(j),\pi_*(i)} ) = \tfrac{\rho}{2} \,.
\end{align*}
We further employ a ``spectral cleaning'' procedure to $\widehat A',\widehat B'$ respectively. Note by \eqref{eq-de-f-widehat-E}, \eqref{eq-de-f-widehat-F} that $\widehat E,\widehat F$ are still supported on $Q,R \subset [n]$ with $|Q|,|R| \leq \epsilon n$ respectively. In addition, since $\widehat A, \widehat B$ are random matrices with i.i.d.\ sub-Gaussian entries, from \cite[Theorem~4.4.5]{Vershynin18} we see that with probability $1-o(1)$ we have $\| \widehat A \|_{\op},\| \widehat B \|_{\op} \leq (2+o(1)) \sqrt{n}$. Our spectral cleaning procedure is a modified version of \cite[Algorithm~3.7]{IS24+}:

\begin{breakablealgorithm}{\label{alg:spectral-cleaning}}
\caption{Spectral Cleaning Algorithm}
    \begin{algorithmic}[1]
    \STATE \textbf{Input}: $n*n$ Matrix $M'$.
    \STATE Let $\mathscr{M}=M'$.
    \WHILE{$\| \mathscr{M} \|_{\op} \geq 10\sqrt{n}$}
    \STATE Compute the unit left singular eigenvector $v=(v_1,\ldots,v_n)$ and unit right singular eigenvector $u=(u_1,\ldots,u_n)$ of $\mathscr{M}$ corresponding to the leading singular value. 
    \STATE Sample $i \in [n]$ with probability $\frac{v_i^2+u_i^2}{2}$. 
    \STATE Zero-out the $i$'th row and column of $\mathscr{M}$.
    \ENDWHILE
    \STATE \textbf{Output}: $\mathscr{M}$. 
    \end{algorithmic}
\end{breakablealgorithm}

Clearly, by running Algorithm~\ref{alg:spectral-cleaning} with input $\widehat{A}',\widehat{B}'$ respectively we get two matrices $\widehat{\mathscr A},\widehat{\mathscr B}$ with $\| \widehat{\mathscr A} \|_{\op}, \| \widehat{\mathscr B} \|_{\op} \leq 10\sqrt{n}$. In addition, denote $S,T \subset [n]$ to be the set of indices of $\widehat{A}',\widehat{B}'$ which are zeroed-out by the algorithm, the following lemma (similar to \cite[Lemma~3.5]{IS24+}) controls the cardinality of $S$ and $T$.
\begin{lemma}{\label{lem-bound-card-T,T'}}
    If the input matrix $M'=M+E$ with $\| M \|_{\op} \leq (2+o(1))\sqrt{n}$ and the support of $E$ (denoted as $Q$) is bounded by $\epsilon n$, then with probability $1-o(1)$ we have Algorithm~\ref{alg:spectral-cleaning} terminates in $4\epsilon n$ steps. In particular, with probability $1-o(1)$ we have $|S|,|T| \leq 4\epsilon n$.
\end{lemma}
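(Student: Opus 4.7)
The plan is to show that whenever the while-loop is still active, i.e.\ $\sigma_t := \|\mathscr{M}^{(t)}\|_{\op} \geq 10\sqrt{n}$, the leading singular vectors of the current matrix are almost entirely supported on the ``still-corrupted'' set $Q_t := Q\setminus S_t$, where $S_t$ denotes the indices already zeroed out. Hence each iteration zeros out an index of $Q$ with probability $\geq 0.96$, and a Chernoff bound guarantees that within $4\epsilon n$ iterations all of $Q$ has been hit, at which point $\mathscr{M}^{(t)}$ reduces to a principal submatrix of $M$ with norm $\leq 2\sqrt{n} < 10\sqrt{n}$, forcing termination.

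The main technical step is the localization estimate. Write $\mathscr{M}^{(t)} = M^{(t)} + E^{(t)}$, where $M^{(t)}, E^{(t)}$ are obtained from $M, E$ by zeroing the rows and columns indexed by $S_t$; note that $\|M^{(t)}\|_{\op} \leq \|M\|_{\op} \leq 2\sqrt{n}$ (operator norm does not increase under zeroing rows and columns) and that $E^{(t)}$ is still supported on $Q_t \times Q_t$. Let $(v,u)$ be a unit leading singular pair with $\sigma_t v = \mathscr{M}^{(t)} u$. The crucial observation is that for any $i \notin Q_t$ the $i$-th row of $E^{(t)}$ vanishes: either $i \in S_t$, in which case the entire $i$-th row of $\mathscr{M}^{(t)}$ is zero, or $i \notin Q$, in which case $E$ has no entries in row $i$. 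Consequently $(\mathscr{M}^{(t)} u)_i = (M^{(t)} u)_i$ for all $i \notin Q_t$, which gives
\[
\sigma_t^2 \|v_{[n] \setminus Q_t}\|_2^2 \;=\; \sum_{i \notin Q_t}(M^{(t)} u)_i^2 \;\leq\; \|M^{(t)}\|_{\op}^2 \|u\|_2^2 \;\leq\; 4n.
\]
Dividing by $\sigma_t^2 \geq 100n$ yields $\|v_{Q_t}\|_2^2 \geq 0.96$, and the symmetric identity $\sigma_t u = (\mathscr{M}^{(t)})^\top v$ yields $\|u_{Q_t}\|_2^2 \geq 0.96$. Therefore the probability of sampling an index in $Q_t$ at this iteration equals $(\|v_{Q_t}\|_2^2 + \|u_{Q_t}\|_2^2)/2 \geq 0.96$.

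Given the per-step success probability of at least $0.96$, couple the success indicator at each iteration to an independent $\mathrm{Bernoulli}(0.96)$ variable, so that the number of successful iterations within $T = 4\epsilon n$ steps stochastically dominates $\mathrm{Binomial}(T, 0.96)$. A standard Chernoff bound gives that this count exceeds $\epsilon n \geq |Q|$ with probability $1 - o(1)$; once this happens, $Q_t = \emptyset$ and $\mathscr{M}^{(t)} = M^{(t)}$ has operator norm at most $2\sqrt{n}$, so the while-loop halts. Since each iteration zeroes out at most one index, the bound $|S|, |T| \leq 4\epsilon n$ follows after applying the above argument separately to $\widehat{A}'$ and $\widehat{B}'$ and taking a union bound.

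The main obstacle is the localization estimate, which crucially exploits the spectral gap $\sigma_t \geq 10\sqrt{n} \gg 2\sqrt{n} \geq \|M^{(t)}\|_{\op}$ together with the structural fact that outside $Q_t$ the current matrix $\mathscr{M}^{(t)}$ agrees with the clean $M^{(t)}$; the remainder is a short coupling and Chernoff computation. A minor technical nuisance is the possibility of a non-simple leading singular value: this is harmless because the argument only invokes the defining identities $\sigma_t v = \mathscr{M}^{(t)} u$ and $\sigma_t u = (\mathscr{M}^{(t)})^\top v$, which hold for any valid leading singular pair returned by the subroutine.
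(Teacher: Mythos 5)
Your proof is correct, and it follows the same two-stage blueprint as the paper (and as \cite[Lemma~3.5]{IS24+}): first a deterministic localization estimate showing the sampled index lands in the ``still-corrupted'' set $Q_t$ with probability at least a fixed constant whenever the loop is active, then a coupling/Chernoff argument to bound the number of iterations. The difference is in how the localization estimate is obtained. The paper bounds the bilinear form $v^{\top}E^{(t)}u$ from below by $\| \mathscr M^{(t)} \|_{\op} - \| M \|_{\op}$ and from above by $\| E^{(t)} \|_{\op}\, \|\tilde v\|\,\|\tilde u\|$, then uses $\| E^{(t)} \|_{\op} \leq \| \mathscr M^{(t)} \|_{\op} + \| M \|_{\op}$ and AM--GM to get $\tfrac{1}{2}(\|\tilde v\|^2 + \|\tilde u\|^2) \geq \tfrac{1}{2}$. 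You instead argue coordinate-wise: since the $i$-th row of $E^{(t)}$ vanishes for every $i \notin Q_t$, the singular-value identity $\sigma_t v = \mathscr M^{(t)} u$ gives $\sigma_t v_i = (M^{(t)}u)_i$ on $[n]\setminus Q_t$, hence $\sigma_t^2 \| v_{[n]\setminus Q_t} \|^2 \leq \| M^{(t)} \|_{\op}^2 \leq 4n$. This is cleaner, avoids any appeal to $\| E^{(t)} \|_{\op}$, and yields the sharper constant $0.96$ in place of $\tfrac12$ (though both suffice for the Chernoff step). The remainder of your argument---stochastic domination of the hit counter by $\mathrm{Binomial}(4\epsilon n, 0.96)$, termination once $Q_t = \emptyset$, applying the lemma separately to $\widehat A'$ and $\widehat B'$---matches the paper, including the implicit reliance on $\epsilon n \to \infty$ (guaranteed by the standing assumption $\epsilon \geq (\log n)^{-100}$).
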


The proof of Lemma~\ref{lem-bound-card-T,T'} is postponed to Section~\ref{sec:statement-spectral-cleaning-alg} of the appendix. From now on we will work on $\widehat{\mathscr A}$ and $\widehat{\mathscr B}$.

\subsection{Initialization}{\label{subsec:initialization}}

Before presenting our initialization procedure, we first choose a suitable smooth function $\varphi$ which will be used as the ``denoiser function'' throughout our algorithm. 

\begin{defn}{\label{def-denoiser-function}}
    We choose a smooth function $\varphi(x)$ such that the following conditions hold:
    \begin{enumerate}
        \item[(1)] $\big| \varphi(x) \big|, \big| \varphi'(x) \big|, \big| \varphi''(x) \big| \leq 100$ for all $x \in \mathbb R$ (here $100$ is somewhat arbitrarily chosen). Also $\big| \varphi^{(k)}(x) \big| \leq (100+|x|)^k$ for all $x \in \mathbb R$ and $k \in \mathbb N$.
        \item[(2)] for a standard normal variable $X$, we have $\mathbb E[\varphi(X)]=\mathbb E[\varphi'(X)]=0$ and $\mathbb E[\varphi(X)^2]=1$. In addition, we have $\mathbb E[X^2\varphi(X)],\mathbb E[\varphi''(X)]\neq 0$.
    \end{enumerate}
    In addition, for a pair of standard bivariate normal variables $(X,Y)$ with correlation $u$, we define $\phi : [-1,1] \to \mathbb R$ by
    \begin{equation}{\label{eq-def-phi}}
        \phi(u):= \mathbb E\big[ \varphi(X)\varphi(Y) \big]  \,.
    \end{equation} 
\end{defn}
\begin{remark}
    To see that our choice of $\varphi(x)$ in Definition~\ref{def-denoiser-function} is not empty, we note that we can simply choose
    \begin{align*}
        \varphi(x) = \frac{ \cos(x)-\mathbb E_{X \sim \mathcal N(0,1)}[\cos(X)] }{ \sqrt{\operatorname{Var}_{X \sim\mathcal N(0,1)}[\cos(X)]} } \,.
    \end{align*}
    In this case, using $\mathbb E_{X,Z\sim\mathcal N(0,1)}[\cos(tX+sZ)]=e^{-\frac{1}{2}(t^2+s^2)}$ we can explicitly calculate that
    \begin{align*}
        \phi({u}) = \frac{ e^{-1+u}+e^{-1-u}-2e^{-1} }{ 1+e^{-2}-2e^{-1} } \,.
    \end{align*}
    However, for the simplicity of notations and for the emphasis on the generality of our choice of $\varphi(x)$, throughout this paper we will work on a general $\varphi(x)$ satisfying Items~(1) and (2).
\end{remark}

We need the following properties of $\phi(u)$.
\begin{lemma}{\label{lem-control-Taylor-expansion}}
    We have the following results:
    \begin{enumerate}
        \item[(1)] If we write $\phi(u)=\sum_{m=0}^{\infty} c_m u^m$, then we have $c_0=c_{1}=0,c_2>0$ and there exists a constant $\Lambda=\Lambda(\varphi)$ such that $|c_{k}| \leq \Lambda \cdot 2^k$ for all $k\geq 2$. 
        \item[(2)] We have ${\frac{1}{4}}\phi''(0) \cdot u^2 \leq \phi(u) \leq 2\phi''(0) \cdot u^2$ for all sufficiently small $u$.
    \end{enumerate}    
\end{lemma}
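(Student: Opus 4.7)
My plan is to expand $\varphi$ in the orthonormal basis of Hermite polynomials in $L^2(\gamma)$, where $\gamma$ is the standard Gaussian law; this is legitimate since $|\varphi|\leq 1$ in particular places $\varphi$ in $L^2(\gamma)$. Writing $\varphi(x)=\sum_{k\geq 0}\alpha_k h_k(x)$ with $\alpha_k=\mathbb{E}[\varphi(X)h_k(X)]$, the central tool is Mehler's identity: for $(X,Y)$ a standard bivariate normal with correlation $u$, one has $\mathbb{E}[h_k(X)h_\ell(Y)]=\delta_{k\ell}\,u^k$. Substituting the expansion into $\phi(u)=\mathbb{E}[\varphi(X)\varphi(Y)]$ and using this orthogonality yields
\[
\phi(u)=\sum_{k\geq 0}\alpha_k^2\,u^k,
\]
so the Taylor coefficients of $\phi$ at $0$ are simply $c_k=\alpha_k^2\geq 0$.

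Part (1) is then a direct translation of the three hypotheses in Definition~\ref{def-denoiser-function} into constraints on the $\alpha_k$'s. The condition $\mathbb{E}[\varphi(X)]=0$ gives $\alpha_0=0$, so $c_0=0$. The evenness of $\varphi$, combined with the fact that $h_k$ is an odd function for each odd $k$, forces $\alpha_k=0$ for all odd $k$, so in particular $c_1=0$. Finally, $\mathbb{E}[\varphi(X)^2]=1$ reads $\sum_k\alpha_k^2=1$, which gives $0\leq c_k\leq 1$ for every $k$. The required bound $|c_k|\leq \Lambda\cdot 2^k$ then holds trivially with $\Lambda=1$.

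For Part (2), I differentiate the series to obtain $\phi''(0)=2c_2$, so $|\phi''(0)|/2=c_2$. Since $c_0=c_1=0$ and $\sum_{k\geq 3}c_k\leq 1$, one gets $\phi(u)=c_2 u^2 + O(|u|^3)$ as $u\to 0$, and the claimed quadratic bound follows once $|u|$ is small enough for the cubic remainder to be controlled by the quadratic leading term.

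No substantive obstacle is anticipated: the heart of the argument is Mehler's identity, and everything else is routine bookkeeping on the Hermite coefficients. The only technical point worth a line is justifying term-by-term evaluation in Mehler's identity, which reduces to absolute convergence via $\sum_k\alpha_k^2<\infty$ and $|u|\leq 1$. A self-contained alternative to citing Mehler would be to differentiate $\phi(u)=\mathbb{E}\bigl[\varphi(X)\varphi(uX+\sqrt{1-u^2}Z)\bigr]$, with $Z$ an independent standard normal, and invoke Gaussian integration by parts iteratively to extract $\phi^{(k)}(0)$ directly; but the Hermite-series route is cleaner and makes both nonnegativity of the $c_k$ and vanishing of the odd coefficients immediate.
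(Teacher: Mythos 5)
Your Hermite-expansion approach is genuinely different from the paper's and is, for Part (1), both cleaner and stronger. The paper instead writes $Y=uX+\sqrt{1-u^2}\,Z$, computes $c_0=\phi(0)$ and $c_1=\phi'(0)$ by direct differentiation, and then derives $|c_k|\leq \Lambda\cdot 2^k$ from the analyticity assumption on $\varphi$ (via the radius of convergence of $\phi$ exceeding $1/2$). Your route via Mehler's formula gives $c_k=\alpha_k^2\geq 0$, $\sum_k c_k=\mathbb{E}[\varphi(X)^2]=1$, hence $0\leq c_k\leq 1$ — a strictly stronger bound that does not even require analyticity of $\varphi$. It also shows, for free, that all odd $c_k$ vanish and that $\phi\geq 0$. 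That is a genuine improvement.

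However, your Part (2) has a real gap, and in fact your own expansion exposes a sign error in the statement you are asked to prove. Since $\phi(u)=\sum_{k\geq 2}c_k u^k$ with every $c_k\geq 0$ (and, as you note, all odd $c_k=0$), the remainder $R(u)=\sum_{k\geq 4}c_k u^k$ is nonnegative, so $\phi(u)=c_2 u^2+R(u)\geq c_2 u^2=\tfrac{|\phi''(0)|}{2}u^2$ for \emph{all} $u\in[-1,1]$. Your reasoning that ``the claimed quadratic bound follows once $|u|$ is small enough for the cubic remainder to be controlled'' cannot yield the stated $\leq$: the remainder has a definite sign, and controlling its magnitude only gives $\phi(u)\leq (1+o(1))c_2u^2$, never $\phi(u)\leq c_2u^2$ (strict inequality $\phi(u)>c_2u^2$ holds unless $\alpha_k=0$ for all $k\geq 4$). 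What the paper actually needs — see how Item~(2) is invoked to derive the lower bound $\varepsilon_{t+1}\geq \tfrac{\rho^2|\phi''(0)|}{16}\varepsilon_t^2$ in \eqref{eq-bound-varepsilon_t} — is exactly the lower bound $\phi(u)\geq \tfrac{|\phi''(0)|}{2}u^2$, which your Hermite expansion proves immediately and with no ``sufficiently small $u$'' caveat. You should state this explicitly rather than claim the $\leq$ inequality as written, which is false.
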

\begin{proof}
    Note that for bivariate standard normal variables $X,Y$ with correlation $u$, we can write $Y=uX+\sqrt{1-u^2} Z$ where $Z$ is independent with $X$. Thus 
    \begin{align*}
        \phi(u) = \mathbb E\Bigg[ \varphi(X) \varphi\Big( uX+\sqrt{1-u^2} Z \Big) \Bigg] \,.
    \end{align*}
    Thus, direct calculation yields that (note that we have $\mathbb E[X f(X)]=\mathbb E[f'(X)]$ from Gaussian integration by parts)
    \begin{align*}
        c_0 &= \phi(0) = \mathbb E\Big[ \varphi(X) \varphi(Z) \Big] \overset{\text{Item~(2)}}{=} 0 \,; \\
        c_1 &= \phi'(0) = \mathbb E\Big[ X \varphi(X) \varphi'(Z) \Big] \overset{\text{Item~(2)}}{=} 0 \,; \\
        2c_2 &= \phi''(0) = \mathbb E\Big[ X^2 \varphi(X) \varphi''(Z) - \varphi(X) \varphi'(Z) Z \Big] \overset{\text{Item~(2)}}{=} \mathbb E\Big[ X^2 \varphi(X) \Big] \mathbb E\Big[ \varphi''(X) \Big] \\
        &= \mathbb E\Big[ X \varphi'(X) + \varphi(X) \Big] \mathbb E\Big[ \varphi''(X) \Big] = \mathbb E\Big[ X \varphi'(X) \Big] \mathbb E\Big[ \varphi''(X) \Big] = \mathbb E\Big[ \varphi''(X) \Big]^2 >0 \,.
    \end{align*}
    In addition, since $\varphi(x)$ satisfies Definition~\ref{def-denoiser-function}, Item~(1), we see that $\phi(u)$ is analytic for all $u \in (-0.9,0.9)$. This implies that
    \begin{align*}
        \lim_{k \to \infty} |c_k| \cdot \big( \tfrac{1}{2} \big)^k < \infty \,,
    \end{align*}
    which shows that $|c_k| \leq \Lambda\cdot 2^k$ for a constant $\Lambda$ and thus verifies Item~(1). Based on Item~(1), we immediately see that Item~(2) holds.
\end{proof}

We now describe the initialization. Let
\begin{equation}{\label{eq-def-varepsilon-0}}
    \varepsilon_0 = \phi(\tfrac{\rho}{2}) 
\end{equation}
and let $K_0 \in \mathbb N$ be a sufficiently large constant depending on $\rho$ such that 
\begin{equation}{\label{eq-def-K-0}}
    K_0 \geq 10^{30} \rho^{-30} |\phi''(0)|^4 \Lambda^4 \varepsilon_0^{-2} \mbox{ and } \frac{ \log(10^{-30}|\phi''(0)|^2\Lambda^2\rho^{20}K_0) }{ \log(10^{40}|\phi''(0)|^4\Lambda^{-4}\rho^{24}K_0\varepsilon_0^2) } < 1.01 \,.
\end{equation}
We then list all the sequences of length $K_0$ with distinct elements in $[n]$ as $\mathsf V_1,\ldots, \mathsf V_\mathtt M$ where $\mathtt M=\mathtt M(n,K_0)=n(n-1)\ldots (n-K_0+1)$. for each $\mathtt 1 \leq \mathtt i,\mathtt j \leq \mathtt M$, we will run a procedure of initialization and iteration for each $(\mathsf V_{\mathtt i},\mathsf V_{\mathtt j})$ and we know that for at least one of them (although we cannot decide which one it is a \emph{priori}) we are running an algorithm as if we have $K_0$ true pairs as seeds (i.e., $\mathsf V_{\mathtt j}=\pi(\mathsf V_{\mathtt i})$ and $\mathsf V_{\mathtt i} \cap (Q \cup S) = \mathsf V_{\mathtt j} \cap (R\cup T)=\emptyset$). For notation convenience, when describing the initialization and iteration we will drop $\mathtt i,\mathtt j$ from notations, but we should keep in mind that this procedure is applied to each pair $(\mathsf V_{\mathtt i},\mathsf V_{\mathtt j})$. With this clarified, we take a pair of fixed $\mathtt i,\mathtt j$ and denote $\mathsf V_{\mathtt i}=(u_1, \ldots, u_{K_0}), \mathsf V_{\mathtt j}=(v_1,\ldots,v_{K_0})$. Define two $(n-K_0)*K_0$ matrices $\widehat{f}^{(0)},\widehat{g}^{(0)}$ as  
\begin{equation}{\label{eq-def-initial-f,g-Gaussian}}
\begin{aligned}
    & \widehat{f}^{(0)}_{i,k} = \varphi\big( \widehat{\mathscr A}_{i,u_k} \big) \mbox{ for } i \in [n] \setminus \mathsf V_{\mathtt i}, k \in [K_0] \,; \\
    & \widehat{g}^{(0)}_{i,k} = \varphi\big( \widehat{\mathscr B}_{i,v_k} \big) \mbox{ for } i \in [n] \setminus \mathsf V_{\mathtt j}, k \in [K_0] \,.
\end{aligned}
\end{equation}

\subsection{Spectral subroutine}{\label{subsec:spec-subroutine}}

Now we further introduce a spectral subroutine which enables us to construct certain matrices independent of $(A',B')$ and the choice of $\mathsf V_{\mathtt i},\mathsf V_{\mathtt j}$, and we will show several spectral properties for the matrices we construct. Recall \eqref{eq-def-varepsilon-0}, \eqref{eq-def-K-0}. Define two $K_0*K_0$ matrices
\begin{equation}{\label{eq-def-Phi,Psi-0}}
    \Phi^{(0)} = \mathbb I \mbox{ and } \Psi^{(0)} = \varepsilon_0 \mathbb I \,.
\end{equation}
Now, assuming that we have already constructed $(\Phi^{(t)},\Psi^{(t)})$ {and $(\varepsilon_t, K_t)$} such that
\begin{equation}{\label{eq-spectral-assumption}}
    \begin{aligned}
        & \Phi^{(t)} \mbox{ has at least } \frac{3K_t}{4} \mbox{ eigenvalues between } 0.9 \mbox{ and } 1.1 \,; \\
        & \Psi^{(t)} \mbox{ has at least } \frac{3K_t}{4} \mbox{ eigenvalues between } 0.9\varepsilon_t \mbox{ and } 1.1\varepsilon_t 
    \end{aligned}    
\end{equation}
our algorithm will construct $(\Phi^{(t+1)},\Psi^{(t+1)})$ {and $(\varepsilon_{t+1}, K_{t+1})$} satisfying \eqref{eq-spectral-assumption} for $t+1$. Our first step of constructing $(\Phi^{(t+1)},\Psi^{(t+1)})$ is to show the following simple facts in linear algebra.
\begin{lemma}{\label{lem-existence-Xi-t}}
    Given any $K_t*K_t$ symmetric matrices $\Phi^{(t)},\Psi^{(t)}$ satisfying \eqref{eq-spectral-assumption}. There exists a matrix $\Xi^{(t)}$ of size $K_t*\frac{K_t}{12}$ such that the following conditions hold:
    \begin{enumerate}
        \item[(1)] $(\Xi^{(t)})^{\top} \Phi^{(t)} \Xi^{(t)}=\mathbb I_{K_t/12}$;
        \item[(2)] $(\Xi^{(t)})^{\top} \Psi^{(t)} \Xi^{(t)}$ is a diagonal matrix with diagonal entries in $(0.9\varepsilon_t,1.1\varepsilon_t)$.
    \end{enumerate}
\end{lemma}
The proof of Lemma~\ref{lem-existence-Xi-t} is incorporated in Section~\ref{subsec:existence-Xi} of the appendix. Now, based on Lemma~\ref{lem-existence-Xi-t}, we define
\begin{align}
    K_{t+1} &= 10^{-20} \rho^{20} |\phi''(0)|^2 \Lambda^{-2} K_t^2 \mbox{ for } t \geq 0 \,. \label{eq-def-K-t} \\
    \varepsilon_{t+1} &= \phi\Big( \tfrac{\rho}{2} \cdot \tfrac{12}{K_t} \mathrm{tr} \Big( \big(\Xi^{(t)}\big)^{\top} \Psi^{(t)} \Xi^{(t)} \Big) \Big)  \,. \label{eq-def-varepsilon-t}
\end{align}
Using Item~(2) in Lemma~\ref{lem-control-Taylor-expansion}, we see that when $\rho$ is sufficiently small we have 
\begin{equation}{\label{eq-bound-varepsilon_t}}
    \frac{\rho^2|\phi''(0)|}{2} \cdot \varepsilon_t^2 \geq \varepsilon_{t+1} \geq \frac{\rho^2|\phi''(0)|}{16} \cdot \varepsilon_t^2 \,.
\end{equation}
Thus, we have $K_{t+1}=\Theta(1) \cdot K_t^2$ and $\epsilon_{t+1}=\Theta(1) \cdot \epsilon_t^2$. In addition, let $\beta^{(t)}$ be a $\frac{K_t}{12}*K_{t+1}$ matrix whose entries are i.i.d.\ sampled uniformly from $\{ -\sqrt{12/K_t},\sqrt{12/K_t}\}$ (note that this sampling method ensures that the columns of $\beta^{(t)}$ are ``nearly orthogonal'' unit vectors). And define
\begin{equation}{\label{eq-def-Phi,Psi-t}} 
    \begin{aligned}
        &\Phi^{(t+1)}_{i,j} = \phi\Big(  \big(\beta^{(t)}_i\big)^{\top} \beta^{(t)}_j \Big) \,, \\
        &\Psi^{(t+1)}_{i,j} = \phi\Big( \tfrac{\rho}{2} \cdot \big(\beta^{(t)}_i \big)^{\top} \big(\Xi^{(t)}\big)^{\top} \Psi^{(t)} \Xi^{(t)} \beta^{(t)}_j \Big) \,.
    \end{aligned}
\end{equation}
The construction of $\Phi^{(t)}$ and $\Psi^{(t)}$ is to capture certain concentration phenomena in our iteration later. We refer the readers to Remark~\ref{remark-intuition-iteration} for the intuition behind such construction. We will show the following result which argues that $\Phi^{(t+1)},\Psi^{(t+1)}$ satisfies \eqref{eq-spectral-assumption} with positive probability.
\begin{lemma}{\label{lem-spectral-condition}}
    Let $K_t,\varepsilon_t$ be initialized as in \eqref{eq-def-K-0}, \eqref{eq-def-varepsilon-0} and inductively defined as in \eqref{eq-def-K-t}, \eqref{eq-def-varepsilon-t}. Suppose $\Phi^{(t)},\Psi^{(t)}$ satisfy \eqref{eq-spectral-assumption}. Then with probability at least $\frac{1}{2}$ over $\beta^{(t)}$ we have $\Phi^{(t+1)},\Psi^{(t+1)}$ satisfy \eqref{eq-spectral-assumption}.
\end{lemma}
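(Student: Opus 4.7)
My plan is to show that $\Phi^{(t+1)}$ and $\Psi^{(t+1)}$ are close (in Frobenius norm) to scalar multiples of the identity, and then convert that closeness into a bound on the number of ``bad'' eigenvalues via the elementary inequality that a symmetric matrix $M$ has at most $\|M\|_{\operatorname{F}}^2/\delta^2$ eigenvalues of magnitude exceeding $\delta$. I first observe that the diagonal entries are entirely deterministic. Since each $\beta^{(t)}_i \in \mathbb R^{K_t/12}$ has every coordinate equal to $\pm \sqrt{12/K_t}$, we have $(\beta^{(t)}_i)^{\top} \beta^{(t)}_i = 1$, so Definition~\ref{def-denoiser-function}(2) yields $\Phi^{(t+1)}_{ii} = \phi(1) = \mathbb E[\varphi(X)^2] = 1$. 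Similarly, since $(\Xi^{(t)})^{\top} \Psi^{(t)} \Xi^{(t)}$ is diagonal by \eqref{eq-orthogonal}, the quadratic form $(\beta^{(t)}_i)^{\top}(\Xi^{(t)})^{\top} \Psi^{(t)} \Xi^{(t)} \beta^{(t)}_i$ equals $\tfrac{12}{K_t} \mathrm{tr}((\Xi^{(t)})^{\top} \Psi^{(t)} \Xi^{(t)})$ regardless of the signs of $\beta^{(t)}_i$, and thus $\Psi^{(t+1)}_{ii} = \varepsilon_{t+1}$ exactly.

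Next I would bound the second moment of each off-diagonal entry. For $i \neq j$, the inner product $(\beta^{(t)}_i)^{\top} \beta^{(t)}_j$ is a sum of $K_t/12$ i.i.d.\ mean-zero $\pm 12/K_t$-valued random variables, so it has variance $12/K_t$ and fourth moment $O(1/K_t^2)$. Splitting on whether $|u| \leq u_0$, where $u_0$ is a small enough constant so that Lemma~\ref{lem-control-Taylor-expansion}(2) applies, and using Hoeffding on the tail together with $|\phi| \leq 1$, I obtain $\mathbb E[(\Phi^{(t+1)}_{ij})^2] \lesssim |\phi''(0)|^2/K_t^2$. An analogous computation, using that the diagonal matrix $M := (\Xi^{(t)})^{\top}\Psi^{(t)}\Xi^{(t)}$ satisfies $\sum_k M_{kk}^2 \leq (K_t/12)(1.1\varepsilon_t)^2$, yields $\mathbb E[(\Psi^{(t+1)}_{ij})^2] \lesssim |\phi''(0)|^2 \rho^4 \varepsilon_t^4/K_t^2$.

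Summing over the $K_{t+1}(K_{t+1}-1)$ off-diagonal pairs gives $\mathbb E\|\Phi^{(t+1)} - \mathbb I\|_{\operatorname{F}}^2 \lesssim K_{t+1}^2 |\phi''(0)|^2/K_t^2$ and $\mathbb E\|\Psi^{(t+1)} - \varepsilon_{t+1}\mathbb I\|_{\operatorname{F}}^2 \lesssim K_{t+1}^2 |\phi''(0)|^2 \rho^4 \varepsilon_t^4/K_t^2$. By Markov's inequality, with probability at least $1 - \tfrac{1}{4} - \tfrac{1}{4} = \tfrac{1}{2}$ I then get both $\|\Phi^{(t+1)} - \mathbb I\|_{\operatorname{F}}^2 \leq K_{t+1}/400$ and $\|\Psi^{(t+1)} - \varepsilon_{t+1}\mathbb I\|_{\operatorname{F}}^2 \leq \varepsilon_{t+1}^2 K_{t+1}/400$. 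The first bound implies that at most $K_{t+1}/4$ eigenvalues of $\Phi^{(t+1)} - \mathbb I$ exceed $0.1$ in absolute value, so at least $\tfrac{3K_{t+1}}{4}$ eigenvalues of $\Phi^{(t+1)}$ lie in $(0.9,1.1)$; the same argument with threshold $0.1\varepsilon_{t+1}$ handles $\Psi^{(t+1)}$. The main obstacle is the constant-chasing in the Markov step: I must verify that the ratios $\mathbb E\|\Phi^{(t+1)}-\mathbb I\|_{\operatorname{F}}^2 \big/ (K_{t+1}/400)$ and its $\Psi$-counterpart are both at most $\tfrac{1}{4}$. Using \eqref{eq-def-K-t} to substitute $K_{t+1} = 10^{-20}\rho^{20}|\phi''(0)|^2\Lambda^{-2}K_t^2$ and \eqref{eq-bound-varepsilon_t} to lower bound $\varepsilon_{t+1}^2 \gtrsim \rho^4|\phi''(0)|^2\varepsilon_t^4$, both inequalities reduce to numerical bounds of the form $10^{-20}\rho^{20}|\phi''(0)|^4\Lambda^{-2} \ll 1$, which hold precisely by the very choice of constants built into the recursion.
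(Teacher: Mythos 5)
Your proof is correct and takes a genuinely different (and more elementary) route than the paper's.

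The paper proves this lemma by (i) invoking an auxiliary lemma (Lemma~\ref{lem-control-random-sampling}, imported from \cite[Proposition~2.4]{DL22+}) that with probability $\tfrac12$ gives deterministic $\ell^\infty$ and fourth-moment control on the Gram-type quantities formed from $\beta^{(t)}$; (ii) Taylor-expanding $\phi$ and bounding $\|c_2\Phi_2\|_{\Fop}$ and $\|\sum_{k\ge 3}c_k\Phi_k\|_{\Fop}$ separately under that event; and (iii) converting the resulting Frobenius bound on $\Phi^{(t+1)}-\mathbb I$ into an eigenvalue count via the low-rank-plus-small-operator decomposition and eigenvalue interlacing machinery of \cite[Lemmas~2.10--2.12]{DL22+}. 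You sidestep both (i) and (iii): for (i), you work directly with the second moment of each off-diagonal entry, splitting at a small threshold $u_0$ and using Lemma~\ref{lem-control-Taylor-expansion}(2) in the bulk and $|\phi|\le 1$ plus Hoeffding on the exponentially unlikely tail, then applying Markov with failure probability $\tfrac14$ to each of $\Phi^{(t+1)}$ and $\Psi^{(t+1)}$; for (iii), you observe that since the ``baseline'' matrix is a multiple of $\mathbb I$, which commutes with everything, shifting by it shifts all eigenvalues rigidly, so the elementary bound $\#\{i:|\lambda_i(M)|\ge\delta\}\le\|M\|_{\Fop}^2/\delta^2$ applied to $M=\Phi^{(t+1)}-\mathbb I$ (resp.\ $M=\Psi^{(t+1)}-\varepsilon_{t+1}\mathbb I$) suffices, and the interlacing lemmas are unnecessary. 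Your exact computation of the diagonal ($\Phi^{(t+1)}_{ii}=\phi(1)=1$, $\Psi^{(t+1)}_{ii}=\varepsilon_{t+1}$, both deterministic because every coordinate of $\beta^{(t)}_i$ has the same magnitude) also cleans up a small notational sloppiness in the paper, where the decomposition $\Phi=\mathbb I+\sum_{k\ge 2}c_k\Phi_k$ is only consistent if $\Phi_k$ is implicitly taken to have zero diagonal. The tradeoff is that the paper's route reuses the $\beta^{(t)}$-control lemma and the spectral lemmas that are part of the standing toolkit inherited from \cite{DL22+}; your route is self-contained and avoids that dependence, at the cost of the constant-chasing in the Markov step, which---as you note---closes because the recursion \eqref{eq-def-K-t} and the bound $|\phi''(0)|\le 8\Lambda$ (from $|c_2|\le 4\Lambda$) make the relevant ratio a fixed small multiple of $\rho^{20}$.
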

The proof of Lemma~\ref{lem-spectral-condition} is incorporated in Section~\ref{subsec:spectral-preprocess} of the appendix. Based on Lemma~\ref{lem-spectral-condition}, since $K_t,\varepsilon_t$ and $\Phi^{(t)},\Psi^{(t)}$ are accessible by our algorithm, we can resample $\beta^{(t)}$ if the condition \eqref{eq-spectral-assumption} is not satisfied. This will increase the sampling complexity by a constant factor thanks to Lemma~\ref{lem-spectral-condition}. For this reason in what follows, we assume that we have performed resampling until \eqref{eq-spectral-assumption} is satisfied. Thus, throughout the rest part of the paper we will simply regard $\beta^{(t)}$ as fixed and our results hold for general $\beta^{(t)}$ as long as \eqref{eq-spectral-assumption} is satisfied.

\subsection{Iteration}{\label{subsec:vector-AMP}}

We remind here again that we will run the iteration procedure for all pairs $\mathsf V_{\mathtt i},\mathsf V_{\mathtt j}$. Recall \eqref{eq-def-initial-f,g-Gaussian}. Define iteratively $(n-K_0)*\frac{K_t}{12}$ matrices 
\begin{align*}
    &\widehat{h}^{(t)}_{a,b} = \sum_{ \substack{ c \in [n] \setminus \mathsf V_{\mathtt i}, 1 \leq d \leq K_t } } \tfrac{1}{\sqrt{n}} \widehat{\mathscr A}_{a,c} \widehat{f}^{(t)}_{c,d} {\Xi}^{(t)}_{d,b} \mbox{ for } a \in [n] \setminus \mathsf V_{\mathtt i},\ 1 \leq b \leq K_t/12 \,; \\
    &\widehat{\ell}^{(t)}_{a,b} = \sum_{ \substack{ c \in [n] \setminus \mathsf V_{\mathtt j}, 1 \leq d \leq K_t } } \tfrac{1}{\sqrt{n}} {\widehat{\mathscr B}}_{a,c} {\widehat{g}}^{(t)}_{c,d} {\Xi}^{(t)}_{d,b} \mbox{ for } a \in [n] \setminus \mathsf V_{\mathtt j},\ 1 \leq b \leq K_t/12 \,.
\end{align*}
Also iteratively define $(n-K_0)*K_{t+1}$ matrices
\begin{align*}
    &\widehat{f}^{(t+1)}_{a,b} = \varphi\Bigg( \sum_{ \substack{ 1 \leq c \leq K_t/12 } } \widehat{h}^{(t)}_{a,c} \beta^{(t)}_{c,b} \Bigg) \mbox{ for } a \in [n] \setminus \mathsf V_{\mathtt i},\ 1 \leq b \leq K_{t+1} \,; \\
    &\widehat{g}^{(t+1)}_{a,b} = \varphi\Bigg( \sum_{ \substack{ 1 \leq c \leq K_t/12 } } \widehat{\ell}^{(t)}_{a,c} \beta^{(t)}_{c,b} \Bigg) \mbox{ for } a \in [n] \setminus \mathsf V_{\mathtt j},\ 1 \leq b \leq K_{t+1} \,.
\end{align*}
In the matrix form, we can write
\begin{align}
    & \widehat{h}^{(t)} = \tfrac{1}{\sqrt{n}} \widehat{\mathscr A}_{([n]\setminus \mathsf V_{\mathtt i} \times [n]\setminus \mathsf V_{\mathtt i})} \widehat{f}^{(t)} \Xi^{(t)} \,, \quad \widehat{\ell}^{(t)} = \tfrac{1}{\sqrt{n}} \widehat{\mathscr B}_{ ([n]\setminus \mathsf V_{\mathtt j} \times [n]\setminus \mathsf V_{\mathtt j}) } \widehat{g}^{(t)} \Xi^{(t)} \,; \label{eq-def-iter-h-ell} \\
    & \widehat{f}^{(t+1)} = \varphi \circ \big( \widehat{h}^{(t)} \beta^{(t)} \big) \,, \quad \widehat{g}^{(t+1)} = \varphi \circ \big( \widehat{\ell}^{(t)} \beta^{(t)} \big) \,, \label{eq-def-iter-f,g}
\end{align}
where for a matrix $A=(A_{i,j})$ we use $\varphi \circ (A)$ to denote the matrix $(\varphi(A_{i,j}))$. 

\begin{remark}{\label{remark-intuition-iteration}}
    We now provide a brief explanation on the heuristics behind our iteration \eqref{eq-def-iter-h-ell}, \eqref{eq-def-iter-f,g}. Without loss of generality, we may assume that $\pi_*=\mathsf{id}$. The main intuition is that we expect the following concentration phenomenon. Informally speaking, we expect the following results hold:
    \begin{align}
        \tfrac{1}{n}\big( \widehat f^{(t)} \big)^{\top} \widehat f^{(t)}, \tfrac{1}{n} \big( \widehat g^{(t)} \big)^{\top} \widehat g^{(t)} \approx \Phi^{(t)}, \quad \tfrac{1}{n} \big( \widehat f^{(t)} \big)^{\top} \widehat g^{(t)} \approx \Psi^{(t)} \,. \label{eq-intuition}
    \end{align}
    To get a feeling about \eqref{eq-intuition}, let us assume that \eqref{eq-intuition} holds at time $t$ and try to verify \eqref{eq-intuition} for $t+1$ in a non-rigorous way. We first employ a non-rigorous simplification by regarding $\widehat f^{(t)}, \widehat g^{(t)}$ as fixed and simply ignore the adversarial corruption and the spectral preprocessing (i.e., by viewing $E,F=\mathbb O$ and $S,T=\emptyset$ in Section~\ref{subsec:preprocessing}). Under this simplification, by \eqref{eq-def-iter-h-ell} we see that $\widehat h^{(t)}$ and $\widehat \ell^{(t)}$ are two Gaussian matrices, with sample covariance structure given by
    \begin{align}
        & \mathbb E\Big[ \big( \widehat h^{(t)} \big)^{\top} \widehat h^{(t)} \Big] \overset{\eqref{eq-def-iter-h-ell}}{\approx} \tfrac{1}{n} \big( \Xi^{(t)} \big)^{\top} \big( \widehat f^{(t)} \big)^{\top} \widehat f^{(t)} \Xi^{(t)} \overset{\eqref{eq-intuition}}{\approx} \big( \Xi^{(t)} \big)^{\top} \Phi^{(t)} \Xi^{(t)} = \mathbb I_{K_t/12} \,; \label{eq-intuition-2} \\
        & \mathbb E\Big[ \big( \widehat \ell^{(t)} \big)^{\top} \widehat \ell^{(t)} \Big] \overset{\eqref{eq-def-iter-h-ell}}{\approx} \tfrac{1}{n} \big( \Xi^{(t)} \big)^{\top} \big( \widehat g^{(t)} \big)^{\top} \widehat g^{(t)} \Xi^{(t)} \overset{\eqref{eq-intuition}}{\approx} \big( \Xi^{(t)} \big)^{\top} \Phi^{(t)} \Xi^{(t)} = \mathbb I_{K_t/12} \,; \label{eq-intuition-3} \\
        & \mathbb E\Big[ \big( \widehat h^{(t)} \big)^{\top} \widehat \ell^{(t)} \Big] \overset{\eqref{eq-def-iter-h-ell}}{\approx} {\tfrac{\rho}{2} \cdot} \tfrac{1}{n} \big( \Xi^{(t)} \big)^{\top} \big( \widehat f^{(t)} \big)^{\top} \widehat g^{(t)} \Xi^{(t)} \overset{\eqref{eq-intuition}}{\approx} {\tfrac{\rho}{2} \cdot} ( \Xi^{(t)} )^{\top} \Psi^{(t)} \Xi^{(t)}  \,. \label{eq-intuition-4}
    \end{align}
    Thus, we further expect that
    \begin{align*}
        \tfrac{1}{n} \big( (\widehat f^{(t+1)})^{\top} \widehat f^{(t+1)} \big)_{i,j} &= \tfrac{1}{n} \sum_{u} \widehat f^{(t+1)}_{u,i} \widehat f^{(t+1)}_{u,j} \overset{\eqref{eq-def-iter-f,g}}{=} \tfrac{1}{n} \sum_{u} \varphi\Big( \sum_{k} \widehat h^{(t)}_{u,k} \beta^{(t)}_{k,i} \Big) \varphi\Big( \sum_{k} \widehat h^{(t)}_{u,k} \beta^{(t)}_{k,j} \Big) \\
        &\approx \mathbb E\Big[ \varphi(X)\varphi(Y) : X = \sum_{k} \widehat h^{(t)}_{u,k} \beta^{(t)}_{k,i}, Y = \sum_{k} \widehat h^{(t)}_{u,k} \beta^{(t)}_{k,j} \Big] \,,
    \end{align*}
    where in the ``$\approx$'' we use the law of large numbers. Note that $X,Y$ are approximately two normal random variables with variance and covariance given by 
    \begin{align*}
        \mathbb E[X^2] \approx (\beta^{(t)}_i)^{\top} \beta^{(t)}_i=1, \ \mathbb E[{Y}^2] \approx (\beta^{(t)}_j)^{\top} \beta^{(t)}_j=1, \ \mathbb E[XY] \approx (\beta^{(t)}_i)^{\top} \beta_j^{(t)} \,,
    \end{align*}
    where $\beta^{(t)} = (\beta^{(t)}_1,\ldots,\beta^{(t)}_{K_{t+1}})$. Similarly, we expect that
    \begin{align*}
        \tfrac{1}{n} \big( (\widehat f^{(t+1)})^{\top} \widehat g^{(t+1)} \big)_{i,j} &= \tfrac{1}{n} \sum_{u} \widehat f^{(t+1)}_{u,i} \widehat g^{(t+1)}_{u,j} \overset{\eqref{eq-def-iter-f,g}}{=} \tfrac{1}{n} \sum_{u} \varphi\Big( \sum_{k} \widehat h^{(t)}_{u,k} \beta^{(t)}_{k,i} \Big) \varphi\Big( \sum_{k} \widehat \ell^{(t)}_{u,k} \beta^{(t)}_{k,j} \Big) \\
        &\approx \mathbb E\Big[ \varphi(X)\varphi(Y) : X = \sum_{k} \widehat h^{(t)}_{u,k} \beta^{(t)}_{k,i}, Y = \sum_{k} \widehat \ell^{(t)}_{u,k} \beta^{(t)}_{k,j} \Big] \,,
    \end{align*}
    where $X,Y$ are approximately two normal random variables with variance and covariance given by 
    \begin{align*}
        \mathbb E[X^2] \approx (\beta^{(t)}_i)^{\top} \beta^{(t)}_i=1, \ \mathbb E[{Y}^2] \approx (\beta^{(t)}_j)^{\top} \beta^{(t)}_j=1, \ \mathbb E[XY] \approx {\tfrac{\rho}{2}} (\beta^{(t)}_i)^{\top} (\Xi^{(t)})^{\top} \Psi^{(t)} \Xi^{(t)} \beta_j^{(t)} \,,
    \end{align*}
    Thus, using \eqref{eq-def-Phi,Psi-t} we expect that \eqref{eq-intuition} holds for $t+1$.
\end{remark}

\begin{remark}{\label{remark-increase-dimension}}
    We now elaborate on the differences between our iteration and standard AMP. The first key difference is the introduction of multiplication by a time-dependent matrix $\Xi^{(t)}$, which actively increases the feature dimension. This increase is crucial for the algorithm's success. To see why, assume for simplicity that $\pi_*=\mathsf{id}$. Recall that our aim is that our constructed features $\widehat{h}^{(t)}, \widehat{\ell}^{(t)}$ capture the correlation in the sense that $\langle \widehat{h}^{(t)}_i, \widehat{\ell}^{(t)}_j \rangle$ is large if and only if $j=i$. Using \eqref{eq-intuition-2}--\eqref{eq-intuition-4}, at time $t$ we have 
    \begin{align*}
        & \big\langle \widehat{h}^{(t)}_i, \widehat{\ell}^{(t)}_i \big\rangle \mbox{ has variance } K_{t} \mbox{ and mean } K_{t} \varepsilon_{t}  \,;   \\
        & \big\langle \widehat{h}^{(t)}_i, \widehat{\ell}^{(t)}_j \big\rangle \mbox{ has variance } K_{t} \mbox{ and mean } 0 \mbox{ for } i \neq j \,.
    \end{align*}
    Thus, we have $\widehat{h}^{(t)}_i$ and $\widehat{\ell}^{(t)}_j$ are $K_t$-dimensional vectors with coordinate-wise correlation $\varepsilon_t$. However, as we shall see the per-coordinate correlation $\varepsilon_t$ inevitably decays during the iteration, regardless of the choice of denoiser. Consequently, to preserve a detectable signal in the inner product, we must compensate by increasing the feature dimension.
\end{remark}

\begin{remark}{\label{remark-no-Onsager-term}}
    Another key difference between our iteration and standard AMP iteration is the absence of the Onsager correction term, and we explain a bit more here. The standard AMP introduces the Onsager correction term to guarantee that the different iteration steps are approximately independent (i.e., the features $(\widehat{h}^{(t)}, \widehat{\ell}^{(t)})$ are approximately independent of $(\widehat{h}^{(t-1)}, \widehat{\ell}^{(t-1)})$). However, in our setting such approximate independence is already guaranteed due to our delicate choice of the matrix $\Xi^{(t)}$, i.e., we expect that
    \begin{equation}{\label{eq-intuition-independence}}
        \tfrac{1}{n} \big( \widehat f^{(t)} \big)^{\top} \widehat f^{(s)}, \ \tfrac{1}{n} \big( \widehat g^{(t)} \big)^{\top} \widehat g^{(s)}, \ \tfrac{1}{n} \big( \widehat f^{(t)} \big)^{\top} \widehat g^{(s)} \approx \mathbb O \mbox{ for } s<t \,. 
    \end{equation}
    To get a feeling about \eqref{eq-intuition-independence}, let us assume that \eqref{eq-intuition-independence} holds at time $t$ and try to verify \eqref{eq-intuition-independence} for $t+1$ in a non-rigorous way. Again, we will first employ a non-rigorous simplification by simply ignoring the adversarial corruption and spectral preprocessing (i.e., by viewing $E,F=\mathbb O$). Under this simplification, by \eqref{eq-def-iter-h-ell} we see that $\widehat h^{(t)},\widehat h^{(s)}$ and $\widehat \ell^{(t)},\widehat \ell^{(s)}$ are Gaussian matrices, with cross covariance structure given by
    \begin{align}
        & \mathbb E\Big[ \big( \widehat h^{(t)} \big)^{\top} \widehat h^{(s)} \Big] \overset{\eqref{eq-def-iter-h-ell}}{\approx} \tfrac{1}{n} \big( \Xi^{(t)} \big)^{\top} \big( \widehat f^{(t)} \big)^{\top} \widehat f^{(s)} \Xi^{(s)} \overset{\eqref{eq-intuition-independence}}{\approx} \mathbb O \,; \label{eq-intuition-independence-2} \\
        & \mathbb E\Big[ \big( \widehat \ell^{(t)} \big)^{\top} \widehat \ell^{(s)} \Big] \overset{\eqref{eq-def-iter-h-ell}}{\approx} \tfrac{1}{n} \big( \Xi^{(t)} \big)^{\top} \big( \widehat g^{(t)} \big)^{\top} \widehat g^{(s)} \Xi^{(s)} \overset{\eqref{eq-intuition-independence}}{\approx} \mathbb O \,; \label{eq-intuition-independence-3} \\
        & \mathbb E\Big[ \big( \widehat h^{(t)} \big)^{\top} \widehat \ell^{(s)} \Big] \overset{\eqref{eq-def-iter-h-ell}}{\approx} \tfrac{1}{n} \big( \Xi^{(t)} \big)^{\top} \big( \widehat f^{(t)} \big)^{\top} \widehat g^{(s)} \Xi^{(s)} \overset{\eqref{eq-intuition-independence}}{\approx} \mathbb O  \,. \label{eq-intuition-independence-4}
    \end{align}
    Thus, we further expect that
    \begin{align*}
        \tfrac{1}{n} \big( (\widehat f^{(t+1)})^{\top} \widehat f^{(s+1)} \big)_{i,j} &= \tfrac{1}{n} \sum_{u} \widehat f^{(t+1)}_{u,i} \widehat f^{(s+1)}_{u,j} \overset{\eqref{eq-def-iter-f,g}}{=} \tfrac{1}{n} \sum_{u} \varphi\Big( \sum_{k} \widehat h^{(t)}_{u,k} \beta^{(t)}_{k,i} \Big) \varphi\Big( \sum_{k} \widehat h^{(s)}_{u,k} \beta^{(s)}_{k,j} \Big) \\
        &\approx \mathbb E\Big[ \varphi(X)\varphi(Y) : X = \sum_{k} \widehat h^{(t)}_{u,k} \beta^{(t)}_{k,i}, Y = \sum_{k} \widehat h^{(s)}_{u,k} \beta^{(s)}_{k,j} \Big] \,,
    \end{align*}
    where in the ``$\approx$'' we use the law of large numbers. Note that $X,Y$ are approximately two normal random variables with variance and covariance given by 
    \begin{align*}
        \mathbb E[X^2]= (\beta^{(t)}_i)^{\top} \beta^{(t)}_i=1, \ \mathbb E[Y^2]= (\beta^{(t)}_j)^{\top} \beta^{(t)}_j=1, \ \mathbb E[XY] \approx 0 \,,
    \end{align*}
    where $\beta^{(t)} = (\beta^{(t)}_1,\ldots,\beta^{(t)}_{K_{t+1}})$. Thus, we expect that \eqref{eq-intuition-independence} holds for $t+1$. This eliminates the need for the Onsager correction, substantially simplifying the analysis. We note that similar techniques to avoid introducing Onsager correction terms have also been used in the field of orthogonal AMP \cite{ML17, CLM25+}.
\end{remark}

%\begin{remark}
%    We remark here that the iteration \eqref{eq-def-iter-h-ell}, \eqref{eq-def-iter-f,g} is intrinsically the same as the iteration in \cite[Equation~(2.13), (2.25)]{DL22+}. The main change is that in \cite{DL22+} we choose $\varphi(x)=\mathbf 1_{|x| \geq 1}-\mathbb P(|Z|\geq 1: Z \sim \mathcal N(0,1))$, but in this paper we choose a smooth function $\varphi$ to further assist the analysis (although we also make some other slight modifications along the way). This change is helpful when we establish Lemma~\ref{lem-approx-f,g,h,ell} later, for example, we may apply Taylor expansion to bound the influence of the corruption on $\widehat{f}$ and $\widehat{g}$.
%\end{remark}

\subsection{Finishing and rounding}{\label{subsec:rounding}}

Define
\begin{equation}{\label{eq-def-t^*}}
    t^*= \min\Big\{ t \geq 0: K_t \geq (\log n)^{1.1} \Big\} \,.
\end{equation}
Using \eqref{eq-def-K-t} we see that
\begin{equation}{\label{eq-bound-K_t^*}}
    (\log n)^{1.1} \leq K_{t^*} \leq (\log n)^{2.2} \,.
\end{equation}
Recall that for each $1\leq \mathtt i,\mathtt j\leq \mathtt M$, we run the procedure of initialization and then run the AMP-iteration up to time $t^*$, and then we construct a permutation $\pi_{\mathtt i,\mathtt j}$ (with respect to $\mathsf V_{\mathtt i},\mathsf V_{\mathtt j}$) as follows. For $\mathsf V_{\mathtt i}=(u_1, \ldots, u_{K_0})$ and $\mathsf V_{\mathtt j}=(v_1,\ldots,v_{K_0})$ we set $\pi_{\mathtt i,\mathtt j}(u_k)=v_k$ for $1 \leq k \leq K_0$. And we let the restriction for $\pi_{\mathtt i,\mathtt j}$ on $[n] \setminus \mathtt V_{\mathtt i}$ to be the solution of 
\begin{align}
    \max\Big\{ \big\langle \widehat{h}^{(t^*)}, \widehat{\ell}^{(t^*)}(\sigma) \big\rangle \Big\} \mbox{ for all bijections } \sigma: [n] \setminus \mathsf V_{\mathtt i} \to [n] \setminus \mathsf V_{\mathtt j} \,. \label{eq-linear-assignment}
\end{align}
(If there are multiple maximizers we just arbitrarily choose one of them). Note that the above optimization problem \eqref{eq-linear-assignment} is a \emph{linear assignment problem}, which can be solved in time $O(n^3)$ by a linear program (LP) over doubly stochastic matrices or by the Hungarian algorithm \cite{Kuhn55}.

We say a pair of sequences $\mathsf V_{\mathtt i}=(u_1, \ldots, u_{K_0})$ and $\mathsf V_{\mathtt j}=(v_1,\ldots,v_{K_0})$ is \emph{a good pair} if
\begin{equation}{\label{eq-def-good-pair}}
    \mathsf V_{\mathtt i} \cap (Q\cup S) = \mathsf V_{\mathtt j} \cap (R\cup T) = \emptyset \mbox{ and } v_j = {\pi_*}(u_j) \mbox{ for } 1 \leq j \leq K_0 \,.
\end{equation}
The success of our algorithm lies in the following proposition which states that starting from a good pair we have that $\pi_{\mathtt i,\mathtt j}$ correctly recovers almost all vertices.

\begin{proposition}{\label{prop-almost-correct-matching}}
    Fix any sequence $\beta^{(t)}$ such that \eqref{eq-spectral-assumption} is satisfied. For any $\mathsf U,\mathsf V \subset [n]$ with cardinality $K_0$, define $\pi(\mathsf U,\mathsf V)=\pi_{\mathtt i,\mathtt j}$ if $(\mathsf U,\mathsf V)=(\mathsf V_{\mathtt i},\mathsf V_{\mathtt j})$. Then for a good pair $\mathsf U,\mathsf V$ we have with probability $1-o(1)$ over the randomness of $A$ and $B$,
    \begin{equation}{\label{eq-almost-correct-matching}}
        \#\{ v: \pi(\mathsf U,\mathsf V)(v) = \pi_*(v) \} \geq \big( 1-\tfrac{10}{\log n} \big) n \,.
    \end{equation}
\end{proposition}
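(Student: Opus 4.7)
The plan is to prove Proposition~\ref{prop-almost-correct-matching} via a three-step reduction: (i) show that the AMP iterates on the spectrally cleaned, adversarially corrupted matrices $\widehat{\mathscr A},\widehat{\mathscr B}$ stay close to the iterates one would obtain on the pristine correlated Gaussian matrices $\widehat A,\widehat B$; (ii) characterize the joint law of the pristine iterates via a state evolution whose covariance parameters are exactly $\Phi^{(t)}$ and $\tfrac{\rho}{2}\Psi^{(t)}$; (iii) argue that at time $t^*$ the signal strength is large enough that the linear assignment in \eqref{eq-linear-assignment} coincides with $\pi$ outside a vanishing fraction of vertices. Because the seed pair $(\mathsf V_{\mathtt i},\mathsf V_{\mathtt j})$ is good, it avoids both the adversarial support $Q\cup R$ and the cleaned-out indices $S\cup T$, so the initialization $f^{(0)},g^{(0)}$ coincides with its pristine counterpart on every row; this is the anchor that lets the error bounds in step (i) be propagated.

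For step (i), I would define idealized iterates $h^{(t)},\ell^{(t)},f^{(t)},g^{(t)}$ by running the same recursion with $\widehat A,\widehat B$ in place of $\widehat{\mathscr A},\widehat{\mathscr B}$. Writing $\widehat{\mathscr A}=\widehat A+\widehat E-\widehat A_{S\cup(\cdot)}$, the perturbation is supported on at most $(|Q|+|S|)\leq 5\epsilon n$ rows/columns, while by Lemma~\ref{lem-bound-card-T,T'} and the sub-Gaussian bound we have $\|\widehat{\mathscr A}\|_{\op},\|\widehat A\|_{\op}=O(\sqrt n)$. Using that $\varphi,\varphi',\varphi''$ are $1$-bounded one proves inductively (presumably the content of Lemma~\ref{lem-approx-f,g,h,ell} referenced in the text) that the relative Frobenius error of the iterates after $t$ steps is at most $\epsilon\cdot\operatorname{poly}(\log n)^{O(t)}$ on the rows outside $Q\cup R\cup S\cup T$. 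Since $t^*=\Theta(\log\log n)$ and $\epsilon=o((\log n)^{-20})$, the final relative error is $o(1/\operatorname{poly}(\log n))$, which is harmless when we later compare inner products of magnitude $\varepsilon_{t^*}K_{t^*}$.

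For step (ii), I would run the Bolthausen Gaussian-conditioning analysis separately on $\widehat A$ and $\widehat B$ and couple them through the joint covariance of $(\widehat A_{ij},\widehat B_{\pi(i),\pi(j)})$, which is $\tfrac{\rho}{2}$. Combined with the definitions in \eqref{eq-def-Phi,Psi-t}, \eqref{eq-def-varepsilon-t} and the analyticity of $\varphi$ (so Taylor expansion around a Gaussian is legitimate termwise), this yields that $(h^{(t)}_i,\ell^{(t)}_{\pi(i)})$ are asymptotically $K_t/12$-dimensional jointly Gaussian with within-matrix covariance $(\Xi^{(t)})^\top\Phi^{(t)}\Xi^{(t)}\approx \mathbb I$ and cross covariance $\tfrac{\rho}{2}(\Xi^{(t)})^\top\Psi^{(t)}\Xi^{(t)}$, whose trace-normalized value is $\varepsilon_{t+1}$'s precursor. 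Lemma~\ref{lem-spectral-condition} keeps the spectra of $\Phi^{(t)},\Psi^{(t)}$ controlled at every step so the recursion \eqref{eq-bound-varepsilon_t} gives $\varepsilon_{t^*}\ge (\log n)^{-o(1)}$ by the careful choice of $K_0$ in \eqref{eq-def-K-0}.

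For step (iii), the bilinear objective $\langle \widehat h^{(t^*)},\sigma\circ\widehat\ell^{(t^*)}\rangle$ splits into a correctly-matched contribution $\sum_{i:\sigma(i)=\pi(i)}(\widehat h^{(t^*)}_i)^\top \widehat\ell^{(t^*)}_{\pi(i)}$ of expected size $\tfrac{\rho}{2}\cdot\tfrac{K_{t^*}}{12}\cdot\varepsilon_{t^*}\cdot\#\{i:\sigma(i)=\pi(i)\}$, and a mismatched contribution whose entries are approximately mean-zero Gaussians with variance $O(K_{t^*}/n)$ per pair. A union bound over permutations that disagree with $\pi$ on $\alpha n$ vertices (with $\alpha\ge 10/\log n$) shows that any such $\sigma$ is beaten by $\pi$ in expectation by a margin that dominates the Gaussian fluctuations, since $K_{t^*}\ge(\log n)^{1.1}$ and $\varepsilon_{t^*}\ge(\log n)^{-o(1)}$ give a per-vertex signal $\gg (\log n)^{0.5}/\sqrt n$ while the noise per vertex is $O(\sqrt{K_{t^*}/n})$; summing the extra entropy of permutations uses the classical bound $\alpha n\log(1/\alpha)$.

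The main obstacle is step (i): propagating the cleaning-plus-adversarial perturbation through $\Theta(\log\log n)$ AMP steps while keeping the relative error smaller than the signal $1/\operatorname{poly}(\log n)$. A naive operator-norm bound blows up by a factor growing in $K_t$ per iteration because the matrices $\beta^{(t)}$ and $\Xi^{(t)}$ can redistribute errors across coordinates; one must exploit the Lipschitz smoothing by $\varphi$ and the $O(\sqrt n)$ spectral control from Algorithm~\ref{alg:spectral-cleaning} jointly, rather than bounding each iteration independently. This is precisely the step where the smooth denoiser in Definition~\ref{def-denoiser-function} is essential, as signaled by the remark following \eqref{eq-def-iter-f,g}.
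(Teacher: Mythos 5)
Your three-step skeleton (comparison to clean iterates, state evolution, conclusion from the bilinear objective) is the right high-level plan and steps (i) and (ii) track the paper closely: the paper indeed defines clean iterates $f^{(t)},g^{(t)},h^{(t)},\ell^{(t)}$ from $\mathscr A=\widehat A,\mathscr B=\widehat B$ and proves a Frobenius-norm comparison (Lemma~\ref{lem-approx-f,g,h,ell}) by induction using the good event of Lemma~\ref{lem-good-event-clean} and the Lipschitz smoothness of $\varphi$, and it imports the state-evolution computation from \cite{DL22+} to obtain Lemma~\ref{lem-final-analysis-clean}. Your remark about why the seed pair being good makes $f^{(0)}=\widehat f^{(0)}$ is exactly the anchor the paper uses.

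However, step (iii) diverges from the paper in a way that introduces a genuine gap. You propose a global union bound over all permutations $\sigma$ disagreeing with $\pi$ on $\alpha n$ vertices and appeal to Gaussian concentration of the mismatched part of $\langle\widehat h^{(t^*)},\sigma\circ\widehat\ell^{(t^*)}\rangle$. But the corrupted iterates $\widehat h^{(t^*)},\widehat\ell^{(t^*)}$ contain an adversarial component, so the fluctuations of this objective are not sub-Gaussian around a predictable mean, and a union bound over the $\exp(\alpha n\log(1/\alpha))$ many permutations does not come for free. (There is also a scaling slip: the off-diagonal inner product $\langle h^{(t^*)}_i,\ell^{(t^*)}_j\rangle$ is a sum of $\Theta(K_{t^*})$ $O(1)$ terms, so its fluctuation scale is $\Theta(\sqrt{K_{t^*}})$, not $O(\sqrt{K_{t^*}/n})$.) The paper avoids this entirely: it never union-bounds over permutations. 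Instead it uses the Frobenius bound of Lemma~\ref{lem-approx-f,g,h,ell} plus Chebyshev/Markov to show that the number of rows $i$ with $\|\widehat h^{(t^*)}_i-h^{(t^*)}_i\|$ or $\|\widehat\ell^{(t^*)}_i-\ell^{(t^*)}_i\|$ too large is $O(n/\log n)$, combines this with the deterministic all-pairs bound of Lemma~\ref{lem-final-analysis-clean} to define a small bad vertex set $\mathtt U$ and bad edge set $\mathtt E$, and then runs a local-exchange argument: any mismatch $\widehat\pi(u)=v\neq u$ that is not ``explained'' by $\mathtt U$ or $\mathtt E$ would admit a transposition improving the linear-assignment objective, contradicting optimality. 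A directed-chain counting argument then turns the per-vertex bad-set sizes into the global bound $m\leq 5n/\log n$. This local-swap-plus-combinatorics route is what makes the adversarial corruption tractable, and it is the piece missing from your write-up; replacing it by a union bound over permutations would require a separate concentration argument for the adversarially perturbed objective that you have not supplied.
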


Based on Proposition~\ref{prop-almost-correct-matching}, we will further employ a seeded graph matching algorithm to enhance an almost exact matching to an exact matching. Our matching algorithm is a simplified version of \cite[Algorithm 4]{BCL+19}. Let
\begin{align}
    & \alpha= \mathbb P(X \geq 1) \mbox{ where } X \overset{d}{=} \mathcal N(0,1) \,. \label{eq-def-alpha} \\
    &\psi(\rho) = \mathbb P( X \geq 1, Y \geq 1 ) \mbox{ where } (X,Y) \overset{d}{=} \mathcal N\Bigg( \begin{pmatrix} 0 & 0 \end{pmatrix}, \begin{pmatrix} 1 &\rho \\ \rho &1 \end{pmatrix} \Bigg) \,. \label{eq-def-psi-rho}
\end{align}
\begin{breakablealgorithm}{\label{algo:seeded-matching}}
    \caption{Seeded Matching Algorithm}
    \begin{algorithmic}[1]
    \STATE {\textbf{Input:}} A triple $(A',B',\Tilde{\pi},\rho)$ where $(A',B')$ are from the corrupted Gaussian Wigner model with correlation $\rho$, and $\Tilde{\pi}$ agrees with ${\pi_*}$ on $1-o(1)$ fraction of vertices.
    \STATE Define $\alpha$ as in \eqref{eq-def-alpha} and define $\psi(\rho)$ as in \eqref{eq-def-psi-rho}.
    \STATE Define $\Delta=\frac{\psi(\rho)n}{10}$ and set $\widehat{\pi} = \Tilde{\pi}$.
    \STATE For $u,v \in [n]$, define their $\widehat \pi$-neighborhood
    \begin{align*}
        N_{\widehat \pi}(u,v)= \sum_{w \in [n]} \Big( \mathbf 1_{ \{ A'_{u,w} \geq 1 \} } - \alpha \Big) \Big( \mathbf 1_{ \{ B'_{v,\widehat{\pi}(w)} \geq 1 \} } - \alpha \Big) \,.
    \end{align*}
    \STATE Repeat the following: if there exists a pair $u,v$ such that $N_{\widehat \pi}(u,v) \geq \Delta$ and $N_{\widehat \pi}(u,\widehat{\pi}(u))$, $N_{\widehat \pi}(\widehat{\pi}^{-1}(v),v) < \tfrac{\Delta}{10}$ (if there are several pairs satisfying this property, we simply choose arbitrary one of them), then modify $\widehat{\pi}$ to map $u$ to $v$ and map $\widehat{\pi}^{-1}(v)$ to $\widehat{\pi}(u)$; otherwise, move to Step 6.
    \STATE {\textbf{Output:}} $\widehat{\pi}$.
    \end{algorithmic}
\end{breakablealgorithm}

\begin{lemma}{\label{lem-boost-algorithm-works}}
    With probability $1-o(1)$ over the randomness of $A$ and $B$, for all possible $\widetilde{\pi} \in \mathfrak S_n$ that agrees with ${\pi_*}$ on at least $(1-\tfrac{10}{\log n})n$ coordinates, we have $\widehat{\pi}=\pi_*$.
\end{lemma}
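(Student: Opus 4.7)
The plan is to decouple the randomness (living in the model matrices $A,B$) from the combinatorial choice of $\widetilde\pi$ by first reducing to an ``oracle'' statistic $N_\pi(u,v)$, defined identically to $N_{\widetilde\pi}(u,v)$ but with $\widetilde\pi$ replaced by the true $\pi$, via a deterministic perturbation bound, and then establishing concentration of $N_\pi$ uniformly in $(u,v)$. The deterministic step proceeds by writing
\begin{align*}
    N_{\widetilde\pi}(u,v)-N_\pi(u,v)=\sum_{w:\widetilde\pi(w)\neq\pi(w)}\bigl(\mathbf 1_{\{A'_{u,w}\geq 1\}}-\alpha\bigr)\bigl(\mathbf 1_{\{B'_{v,\widetilde\pi(w)}\geq 1\}}-\mathbf 1_{\{B'_{v,\pi(w)}\geq 1\}}\bigr),
\end{align*}
and noting that each summand is bounded by $1$ while there are at most $10n/\log n$ differing indices. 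Hence $|N_{\widetilde\pi}(u,v)-N_\pi(u,v)|\leq 10n/\log n$ for every $\widetilde\pi$ in the allowed class, and it suffices to analyse $N_\pi$, which no longer depends on $\widetilde\pi$.

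For the probabilistic step, I would fix $(u,v)$ and note that since $E,F$ are supported on $Q\times Q$ and $R\times R$ with $|Q|,|R|\leq\epsilon n$, at most $O(\epsilon n)$ summands in $N_\pi(u,v)$ involve corrupted entries, each bounded by $1$ in absolute value. The remaining summands equal $(\mathbf 1_{\{A_{u,w}\geq 1\}}-\alpha)(\mathbf 1_{\{B_{v,\pi(w)}\geq 1\}}-\alpha)$. If $v=\pi(u)$, the pair $(A_{u,w},B_{\pi(u),\pi(w)})$ is standard bivariate Gaussian with correlation $\rho$, so each uncorrupted summand has mean $\psi(\rho)-\alpha^2>0$, a positive constant; these summands are mutually independent across $w$ since they involve disjoint entries of $A$ and $B$, and Hoeffding's inequality yields $N_\pi(u,\pi(u))\geq n(\psi(\rho)-\alpha^2)-C\sqrt{n\log n}-C\epsilon n$ with probability $1-n^{-100}$. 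If instead $v\neq\pi(u)$, then for all but at most two indices $w\in\{u,\pi^{-1}(v)\}$, the pair $(A_{u,w},B_{v,\pi(w)})$ consists of two independent standard normals, each summand has mean zero, and Hoeffding gives $|N_\pi(u,v)|\leq C\sqrt{n\log n}+C\epsilon n$ with the same tail. A union bound over the $\leq n^2$ pairs $(u,v)$, together with $\epsilon=o((\log n)^{-20})$, establishes both bounds uniformly on an event of probability $1-o(1)$.

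On this good event, combining with the deterministic comparison, for every $\widetilde\pi$ in the allowed class one has $N_{\widetilde\pi}(u,\pi(u))\geq\Delta$ and $|N_{\widetilde\pi}(u,v)|<\Delta/10$ for all $u$ and $v\neq\pi(u)$, since the $\Theta(n)$-sized signal gap dominates fluctuations of order $O(\sqrt{n\log n}+\epsilon n+n/\log n)$ for fixed $\rho$. The dynamics of Algorithm~\ref{algo:seeded-matching} then conclude the argument: whenever $\widehat\pi(u)\neq\pi(u)$, the swap to $v=\pi(u)$ is valid because the true pair $(u,\pi(u))$ satisfies $N_{\widehat\pi}(u,\pi(u))\geq\Delta$ while the wrong pairs $(u,\widehat\pi(u))$ and $(\widehat\pi^{-1}(\pi(u)),\pi(u))$ both satisfy the $<\Delta/10$ bound. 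After each swap the vertex $u$ becomes correctly matched and the swap partner is no worse than before, so the Hamming distance between $\widehat\pi$ and $\pi$ strictly decreases; this preserves the $(1-10/\log n)n$-agreement invariant throughout execution and forces termination with $\widehat\pi=\pi$ after at most $10n/\log n$ iterations. The main technical delicacy is the uniform-in-$(u,v)$ concentration in the presence of adversarial corruption, handled by splitting off the at most $O(\epsilon n)$ corrupted summands, whose aggregate contribution is $o(\Delta)$.
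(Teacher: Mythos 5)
Your proposal is correct and follows essentially the same route as the paper: you reduce to the oracle statistic $N_\pi(u,v)$ via the deterministic bound $|N_{\widetilde\pi}-N_\pi|\leq 10n/\log n$, establish uniform concentration of $N_\pi$ by splitting off the at most $O(\epsilon n)$ corrupted summands and applying a Chernoff-type bound to the remaining i.i.d.\ terms (the paper's Lemma~\ref{lem-large-neighborhood}, which uses Bernstein in place of your Hoeffding), and then argue that each swap in Step~5 of Algorithm~\ref{algo:seeded-matching} must correct a coordinate, so the procedure terminates at $\widehat\pi=\pi$. Your write-up is, if anything, slightly more explicit than the paper's in tracking the $10n/\log n$ perturbation and the monotone decrease of the Hamming distance.
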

The proof of Lemma~\ref{lem-boost-algorithm-works} is incorporated in Section~\ref{sec:rounding} of the appendix.
At this point, we can run Algorithm~\ref{algo:seeded-matching} for each $\pi_{\mathtt{i},\mathtt j}$ (which serves as input), and obtain the corresponding refined matching $\Hat{\pi}_{\mathtt{i},\mathtt j}$ (which is the output $\Hat{\pi}$). By Proposition~\ref{prop-almost-correct-matching}, we see that $\Hat{\pi}_{\mathtt{i},\mathtt j}= \pi_*$ with probability $1-o(1)$ if $(\mathsf V_{\mathtt i}, \mathsf V_{\mathtt j})$ is a good pair. Finally, we set 
\begin{align}{\label{equ-def-final-pi-hat}}
    \Hat{\pi}_\diamond = \arg \max_{ \Hat{\pi}_{\mathtt{i},\mathtt j} } \Bigg\{  \sum_{(u,v) \in E(V)}  \mathbf 1_{ \{ A'_{u,v} \geq 1 \} } \cdot \mathbf 1_{ \{ B'_{\Hat{\pi}_{\mathtt{i,j}}(u),\Hat{\pi}_{\mathtt{i,j}}(v)} \geq 1 \} }  \Bigg\}\,.
\end{align}
(If there are multiple maximizers we just arbitrarily choose one of them).
Our main result is the following theorem, which states that the estimator achieves exact matching with probability $1-o(1)$.
\begin{thm}{\label{main-thm}}
    Fix any sequence $\beta^{(t)}$ such that \eqref{eq-spectral-assumption} is satisfied. With probability $1-o(1)$ over the randomness of $A$ and $B$, we have $\Hat{\pi}_\diamond = \pi_*$.
\end{thm}

\subsection{Formal description of the algorithm and running time analysis} {\label{sec:formal-algorithm}}

In this section we present our algorithm formally. 

\begin{breakablealgorithm}{\label{algo:robust-matching}}
\caption{Robust Gaussian Matrix Matching Algorithm}
    \begin{algorithmic}[1]
    \STATE {\bf Input}: corrupted correlated Gaussian Wigner matrices $A',B'$, correlation $\rho$, corruption fraction $\epsilon$.
    \STATE Define $\widehat A', \widehat B'$ as in \eqref{eq-def-widehat-mathscr-A,B-Gaussian}. 
    \STATE Run Algorithm~\ref{alg:spectral-cleaning} with input $\widehat A',\widehat B'$ respectively; the output is denoted as $\widehat{\mathscr A}, \widehat{\mathscr B}$.
    \STATE Define $\phi, \mathtt{M}, K_0, \varepsilon_0,  \Phi^{(0)}, \Psi^{(0)}$ as above.
    \STATE Define $t^*$ as in \eqref{eq-def-t^*}.
    \STATE For $1 \leq t \leq t^*$ calculate $\Phi^{(t)},\Psi^{(t)},\Xi^{(t)}$ according to \eqref{eq-def-Phi,Psi-t}, \eqref{eq-def-Xi-t}; sample $\beta^{(t)}$ according to Lemma~\ref{lem-spectral-condition}.
    \STATE List all sequences with $K_0$ distinct elements in $[n]$ by $\mathsf{V}_1, \mathsf{V}_2, \ldots, \mathsf{V}_{\mathtt{M}}$.
    \FOR{$\mathtt{i,j}=1, \ldots, \mathtt{M}$}
    \STATE Define $\widehat f^{(0)},\widehat g^{(0)}$ as in \eqref{eq-def-initial-f,g-Gaussian}. 
    \STATE Set $\pi_{\mathtt{i},\mathtt j}(u_j) = v_j$ where $u_j, v_j$ are the $j$-th coordinate of $\mathsf V_{\mathtt i}, \mathsf{V}_{\mathtt{j}}$ respectively.
    \WHILE{ $K_t \leq {(\log n)^{1.1}}$ }
    \STATE Calculate $K_{t+1},\varepsilon_{t+1}$ according to \eqref{eq-def-K-t}, \eqref{eq-def-varepsilon-t}.
    \STATE Define $\widehat h^{(t)}, \widehat \ell^{(t)}, \widehat f^{(t+1)}, \widehat g^{(t+1)}$ for $1 \leq k \leq K_{t+1}$ according to \eqref{eq-def-iter-f,g}, \eqref{eq-def-iter-h-ell}; 
    \ENDWHILE
    \STATE Suppose we stop at $t=t^{*}$;
    \STATE Solve the linear assignment problem; the solution is denoted as $\pi_{\mathtt i,\mathtt j}$.
    \STATE Run Algorithm~\ref{algo:seeded-matching} with input $\pi_{\mathtt i,\mathtt j}$ and obtain $\widehat{\pi}_{\mathtt i,\mathtt j}$.
    \ENDFOR
    \STATE Find $\pi_{\mathtt{i,j}}^{*}$ which maximizes \eqref{equ-def-final-pi-hat}. 
    \STATE {\bf Output}: $\Hat{\pi}= {\pi}_{\mathtt{i,j}}^{*}$.
    \end{algorithmic}
\end{breakablealgorithm}

We now show that Algorithm~\ref{algo:robust-matching} runs in polynomial time.

\begin{proposition}{\label{prop-time-complexity}}
     The running time for computing each $\pi_{\mathtt i,\mathtt j}$ is $O(n^{3+o(1)})$. Furthermore, the running time for Algorithm~\ref{algo:robust-matching} is $O(n^{2K_0+3+o(1)})$.
\end{proposition}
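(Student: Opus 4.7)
The proof is essentially a careful bookkeeping exercise, breaking Algorithm~\ref{algo:robust-matching} into its constituent steps and bounding the cost of each line. The plan is as follows.

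First, I would account for the one-time preprocessing costs (lines 1--6 of Algorithm~\ref{algo:robust-matching}). The symmetrization in \eqref{eq-def-widehat-mathscr-A,B-Gaussian} is $O(n^2)$. For the spectral cleaning (Algorithm~\ref{alg:spectral-cleaning}), each iteration computes a leading singular vector pair, which can be done to sufficient accuracy by power iteration in $O(n^2\log n)$ time; by Lemma~\ref{lem-bound-card-T,T'} the loop terminates in at most $4\epsilon n = o(n/(\log n)^{20})$ steps, giving total cost $O(n^3/(\log n)^{18})$. The definitions of $\phi, \Phi^{(0)}, \Psi^{(0)}$ and all the quantities $\Phi^{(t)}, \Psi^{(t)}, \Xi^{(t)}, \beta^{(t)}$ for $t\le t^*$ involve only matrices of size at most $K_{t^*}\le(\log n)^{2.2}$ (by \eqref{eq-bound-K_t^*}), and finding $\Xi^{(t)}$ amounts to an $O(K_t^3)$ generalized eigenvalue computation; so this contributes only $(\log n)^{O(1)}$, and by Lemma~\ref{lem-spectral-condition} the resampling step blows this up by only a constant factor in expectation.

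Next, for a single pair $(\mathtt i,\mathtt j)$ I would bound the cost of the inner loop (lines 8--16). Initializing $\widehat f^{(0)},\widehat g^{(0)}$ via \eqref{eq-def-initial-f,g-Gaussian} costs $O(nK_0)$. At iteration $t$, the dominant cost is evaluating $\widehat h^{(t)}=\frac{1}{\sqrt n}\widehat{\mathscr A}\,\widehat f^{(t)}\Xi^{(t)}$ as in \eqref{eq-def-iter-h-ell}: multiplying $\widehat{\mathscr A}\in\mathbb R^{n\times n}$ by $\widehat f^{(t)}\in\mathbb R^{n\times K_t}$ takes $O(n^2K_t)$, then multiplying on the right by $\Xi^{(t)}\in\mathbb R^{K_t\times K_t/12}$ and applying $\beta^{(t)}$ together with $\varphi$ (see \eqref{eq-def-iter-f,g}) costs only $O(nK_tK_{t+1})$. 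Summing over $t=0,\ldots,t^*-1$, since $K_{t+1}\asymp K_t^2$ the sequence $K_t$ grows doubly exponentially and is dominated by $K_{t^*}\le(\log n)^{2.2}$, so the total AMP cost is $O(n^2(\log n)^{2.2})=O(n^{2+o(1)})$. The linear assignment problem \eqref{eq-linear-assignment} is solved by the Hungarian algorithm in $O(n^3)$ time, and Algorithm~\ref{algo:seeded-matching} can be implemented to run in $O(n^3)$ time (precomputing all inner products $N_{\widehat\pi}(u,v)$ once in $O(n^3)$, then incrementally updating as swaps happen, with only $o(n)$ swaps being needed since the input already agrees with $\pi$ on $(1-o(1))n$ coordinates). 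Adding these up yields $O(n^{3+o(1)})$ per pair, which establishes the first claim.

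Finally, for the global running time I would count $\mathtt M^2=(n(n-1)\cdots(n-K_0+1))^2=O(n^{2K_0})$ pairs $(\mathtt i,\mathtt j)$, each contributing $O(n^{3+o(1)})$, for a total of $O(n^{2K_0+3+o(1)})$. The comparison step \eqref{equ-def-final-pi-hat} requires only $O(n^2)$ per candidate $\widehat\pi_{\mathtt i,\mathtt j}$ since the sums $\mathbf 1_{\{A'_{u,v}\ge1\}}-\alpha$ can be thresholded once in $O(n^2)$, so the final arg-max contributes $O(n^{2K_0+2})$, which is absorbed into the previous bound. I do not expect any genuine obstacle in this argument; the only mildly delicate point is verifying that the $t^*=O(\log\log n)$ iterations of the AMP loop collectively cost $n^{2+o(1)}$ rather than, say, $n^2\cdot K_{t^*}^{O(1)}$, which follows from the doubly exponential growth of $K_t$ dictated by the recursion \eqref{eq-def-K-t}.
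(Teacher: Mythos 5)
Your proposal is correct and follows essentially the same line-by-line bookkeeping as the paper's proof: bound the spectral cleaning, the $O(n^{o(1)})$ setup of $\Phi^{(t)},\Psi^{(t)},\Xi^{(t)}$, the AMP iterations at $O(K_t n^2)$ each, the $O(n^3)$ linear assignment and seeded matching, and then multiply by $\mathtt M^2=O(n^{2K_0})$. You give somewhat more detail in places (power iteration for the spectral cleaning, explicit treatment of the cost-matrix construction), and you more cleanly separate the one-time preprocessing from the per-pair loop, but the decomposition and the resulting bounds are the same.
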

\begin{proof}
    We first prove the first claim. Algorithm~\ref{alg:spectral-cleaning} takes time $O(n^{3+o(1)})$. We can compute $\widehat{f}^{(0)}, \widehat g^{(0)}$ in $O(K_0 n)$ time. Calculating $\Phi^{(t)},\Psi^{(t)},\Xi^{(t)}$ takes time 
    \begin{align*}
        \sum_{t \leq t^*}  O(K_{t}^3) = O(n^{o(1)}) \,.
    \end{align*}
    In addition, the iteration has $t^*=O(\log\log\log n)$ steps, and in each step for $t \leq t^*$ calculating $\widehat h^{(t)}, \widehat \ell^{(t)}, \widehat{f}^{(t+1)}, \widehat g^{(t+1)}$ takes $O(K_t n^2)$ time. Furthermore, in the linear assignment step calculating $\pi_{\mathtt{i,j}}$ takes $O(K_{t+1}^2 n^3)$ time and Algorithm~\ref{algo:seeded-matching} takes time $O(n^3)$. Therefore, the total amount of time spent on computing each $\pi_{\mathtt{i,j}}$ is upper-bounded by
    \begin{align*}
        O(K_0 n) + O(n^{o(1)}) + \sum_{t \leq t^*}  O(K_{t} n^2)  + O(K_{t^*}^2 n^3) + O(n^3) = O(n^{3+o(1)}) \,.
    \end{align*}
    We now prove the second claim. Since $\mathtt M \leq n^{K_0}$, the running time for computing all $\pi_{\mathtt{i,j}}$ is $O(n^{2K_0+3+o(1)})$. In addition, finding $\widehat{\pi}$ from $\{ \pi_{\mathtt{i,j}} \}$ takes $O(n^{2K_0+2})$ time. So the total running time is $O(n^{2K_0 +3+o(1)})$.
\end{proof}

It is straightforward to verify that Theorem~\ref{MAIN-THM} follows directly from Theorem~\ref{main-thm} and Proposition~\ref{prop-time-complexity}.

\section{Analysis of the algorithm}{\label{sec:analysis}}

\subsection{Heuristics}{\label{subsec:discussions}}

Before moving to the formal proof of Theorem~\ref{main-thm}, we outline some of the main ideas in analyzing our matching algorithm. Without loss of generality, we may assume throughout the rest of this paper that $\pi_*=\mathsf{id}$. Recall the heuristics behind our iteration explained in Remark~\ref{remark-intuition-iteration}. Using \eqref{eq-intuition-2}--\eqref{eq-intuition-4}, we see that at time $t^*$, we have 
\begin{align*}
    & \big\langle h^{(t^*)}_i, \ell^{(t^*)}_i \big\rangle \mbox{ has variance } K_{t^*} \mbox{ and mean } K_{t^*} \varepsilon_{t^*}  \,;   \\
    & \big\langle h^{(t^*)}_i, \ell^{(t^*)}_j \big\rangle \mbox{ has variance } K_{t^*} \mbox{ and mean } 0 \mbox{ for } i \neq j \,.
\end{align*}
Thus, the key quantity is the signal-to-noise ratio $\frac{(K_{t^*} \varepsilon_{t^*})^2}{K_{t^*}} = K_{t^*} \varepsilon_{t^*}^2$. Using \eqref{eq-def-K-t} and \eqref{eq-bound-varepsilon_t}, we see that
\begin{align}
    K_{t+1} \varepsilon_{t+1}^2 \geq & \Big( 10^{-20} \rho^{20} |\phi''(0)|^2 \Lambda^{-2} K_t^2 \Big) \cdot \Big( \frac{\rho^2 |\phi''(0)| }{ 16 } \varepsilon_t^2 \Big)^2 \nonumber \\
    = \ & \frac{ 10^{-20} \rho^{24} |\phi''(0)|^4 \Lambda^{-4} }{ 256 } \big(K_t \varepsilon_t^2 \big)^2 \,. \label{eq-iter-K-t-varepsilon-t-bound}
\end{align}
Using \eqref{eq-def-K-0} and \eqref{eq-def-varepsilon-0}, we see that $K_0 \varepsilon_0^2 \geq 10^{30} \Lambda^4 \rho^{-30} |\phi''(0)|^{-4}$ and thus $K_{t}\varepsilon_t^2$ is strictly increasing in $t$. In addition, from \eqref{eq-def-t^*} we have that 
\begin{align}
    K_{t^*} \varepsilon_{t^*}^2 &\geq \Big( \frac{ 10^{-20} \rho^{24} |\phi''(0)|^4 \Lambda^{-4} }{ 256 } K_0 \varepsilon_0^2 \Big)^{2^{t^*}} \overset{\eqref{eq-def-K-0}}{\geq} \Big( 10^{-20} \rho^{20} |\phi''(0)|^2 \Lambda^{-2} K_0 \Big)^{2^{t^*}/1.01} \nonumber \\
    & \overset{\eqref{eq-def-K-t}}{\geq} K_{t^*}^{1/1.01} \geq (\log n)^{1.01} \,, \label{eq-bound-signal-t^*}
\end{align}
which implies by a simple union bound that at $t^*$ the signal strength is strong enough to guarantee the correctness of $\widehat{\pi}$ on ``most'' of the coordinates. 

At this point, several obstacles prevent us from rigorously establishing the heuristic arguments in Remark~\ref{remark-intuition-iteration}. Most importantly, due to the adversarial corruptions $E,F$ and the subsequent spectral cleaning procedure, the matrices $\widehat{\mathscr A}$ and $\widehat{\mathscr B}$ are no longer Gaussian. To address this, our strategy is to first analyze the algorithm in a simplified ``clean'' setting where $E,F=\mathbb O$ and the spectral cleaning step is omitted. To be more precise, we fix a good pair $(\mathsf U,\mathsf V)$ and recall that $A'=A+E$ and $B'=B+F$. Also recall that we have sampled i.i.d.\ $\mathcal N(0,1)$ random variables $\{ G_{i,j}, H_{i,j}:1\leq i<j\leq n \}$. Define
\begin{equation}{\label{eq-def-mathscr-A,B-Gaussian}}
\begin{aligned}
    \mathscr A_{i,j} = \frac{ A_{i,j} + G_{i,j} }{ \sqrt{2} }, \mathscr B_{i,j} = \frac{ B_{i,j} + H_{i,j} }{ \sqrt{2} } \mbox{ for } i<j \,, \\
    \mathscr A_{i,j} = \frac{ A_{i,j} - G_{j,i} }{ \sqrt{2} }, \mathscr B_{i,j} = \frac{ B_{i,j} - H_{j,i} }{ \sqrt{2} } \mbox{ for } i>j \,.
\end{aligned}
\end{equation}
(Again, note that $\mathscr{A}$ and $\mathscr{B}$ are not symmetric matrices.) In addition, define $(f^{(0)},g^{(0)})=(\widehat f^{(0)}, \widehat g^{(0)})$ and let
\begin{align}
    & h^{(t)} = \tfrac{1}{\sqrt{n}} \mathscr A_{([n]\setminus \mathsf U \times [n]\setminus \mathsf U)} f^{(t)} \Xi^{(t)} \,, \quad \ell^{(t)} = \tfrac{1}{\sqrt{n}} \mathscr B_{ ([n]\setminus \mathsf V \times [n]\setminus \mathsf V) }  g^{(t)} \Xi^{(t)} \,; \label{eq-def-iter-h-ell-clean} \\
    & f^{(t+1)} = \varphi \circ \big( h^{(t)} \beta^{(t)} \big) \,, \quad g^{(t+1)} = \varphi \circ \big( \ell^{(t)} \beta^{(t)} \big) \,, \label{eq-def-iter-f,g-clean}
\end{align}
In this clean setting, $\mathscr{A}$ and $\mathscr{B}$ are indeed Gaussian matrices, allowing us to analyze the ``cleaned'' iteration $(f^{(t)},g^{(t)},h^{(t)},\ell^{(t)})$ using Gaussian projection techniques. These methods, common in the approximate message passing (AMP) literature (see, e.g., \cite{BM11}), systematically remove the influence of conditioning on previous steps by exploiting the fact that all conditioning events can be expressed as linear combinations of Gaussian variables. Generalizing this approach from a single clean matrix to two matrices with a sophisticated correlation structure is non-trivial; however, it is achievable, as demonstrated in \cite{DL22+}. Following this framework, we can show that the heuristics in Remark~\ref{remark-intuition-iteration} hold for the clean iteration $(f^{(t)}, g^{(t)}, h^{(t)}, \ell^{(t)})$. With a solid understanding of the cleaned iteration, we will then establish a suitable ``comparison'' theorem showing that $(\widehat f^{(t)},\widehat g^{(t)},\widehat h^{(t)},\widehat \ell^{(t)})$ and $(f^{(t)},g^{(t)},h^{(t)},\ell^{(t)})$ are ``close'' in a certain sense. This allows us to transfer results from the analyzable clean system to the more complicated corrupted system, thereby overcoming the previous obstacle.

\subsection{Proof of the main results}{\label{subsec:proof}}

The goal of this section is to prove Theorem~\ref{main-thm}. In addition, without loss of generality, we may assume that
\begin{equation}{\label{eq-assumption-epsilon}}
    \tfrac{1}{(\log n)^{100}} \leq \epsilon =o\Big( \tfrac{1}{(\log n)^{20}} \Big) \,.
\end{equation}
To this end, we first establish the following Lemma. 
\begin{lemma}{\label{lem-max-overlap}}
    With probability $1-o(1)$, for all $\sigma \in \mathfrak S_n$ we have 
    \begin{align*}
        \sum_{i,j=1}^{n} \mathbf 1_{ \{ A'_{i,j} \geq 1 \} } \cdot \mathbf 1_{ \{ B'_{\pi_*(i),\pi_*(j)} \geq 1 \} } \geq \sum_{i,j=1}^{n} \mathbf 1_{ \{ A'_{i,j} \geq 1 \} } \cdot \mathbf 1_{ \{ B'_{\sigma(i),\sigma(j)} \geq 1 \} }  \,,
    \end{align*}
    with equality holds if and only if $\sigma=\pi_*$.
\end{lemma}
The proof of Lemma~\ref{lem-max-overlap} is incorporated in Section~\ref{subsec:proof-lem-3.1} of the appendix. Provided with Lemma~\ref{lem-max-overlap}, we see that once we can show Proposition~\ref{prop-almost-correct-matching}, by the effectiveness of our seeded graph matching algorithm (see Lemma~\ref{lem-boost-algorithm-works}) we can deduce that we have $\widehat{\pi}_{\mathtt i,\mathtt j}=\pi_*$ for all good pairs $(\mathsf V_{\mathtt i},\mathsf V_{\mathtt j})$ and then we can deduce Theorem~\ref{main-thm} using Lemma~\ref{lem-max-overlap} and \eqref{equ-def-final-pi-hat}. 

The rest of this section is devoted to the proof of Proposition~\ref{prop-almost-correct-matching}. Recall \eqref{eq-def-mathscr-A,B-Gaussian}. As suggested by Section~\ref{subsec:discussions}, our approach is to first control the ``cleaned'' iteration $( f^{(t)}, g^{(t)}, h^{(t)}, \ell^{(t)} )$ in a delicate way and then establish proper ``comparison'' results to transfer our knowledge of $( f^{(t)}, g^{(t)}, h^{(t)}, \ell^{(t)} )$ to $( \widehat f^{(t)},\widehat g^{(t)},\widehat h^{(t)},\widehat \ell^{(t)} )$. To this end, we first show the following lemma. The first step of our proof is to establish a delicate control on $( f^{(t)}, g^{(t)}, h^{(t)}, \ell^{(t)} )$ for each $t$. To be more precise, write 
\begin{equation}\label{eq-def-Delta-s}
    \Delta_t = n^{-0.1} (\log n)^{10t} \prod_{i \leq t} K_i^{100}
\end{equation}
for $0 \leq t \leq t^*$. We will first show the following lemma:
\begin{lemma}{\label{lem-good-event-clean}}
    Denote $\mathcal E_t$ to be the following event:
    \begin{enumerate}
        \item[(1)] $\big\| \mathbb J_{(1\times [n] \setminus \mathsf U)} f^{(s)} \big\|_{\infty}, \big\| \mathbb J_{(1\times [n] \setminus \mathsf V)} f^{(s)} \big\|_{\infty} \leq \Delta_s n$ for $s \leq t$.
        \item[(2)] $\big\| \big(f^{(s)}\big)^{\top} f^{(s)} - \Phi^{(s)} \big\|_{\infty},\big\| \big(g^{(s)}\big)^{\top} g^{(s)} - \Phi^{(s)} \big\|_{\infty} \leq \Delta_s n$ for $s \leq t$.
        \item[(3)] $\big\| \big(f^{(s)}\big)^{\top} g^{(s)} - \Psi^{(s)} \big\|_{\infty} \leq \Delta_s n$ for $s \leq t$.
        \item[(4)] $\big\| \big(f^{(s)}\big)^{\top} g^{(r)} \big\|_{\infty},\big\| \big(f^{(r)}\big)^{\top} g^{(s)} \big\|_{\infty} \leq \Delta_{\max(s,r)} n$ for $s \neq r \leq t$.
        \item[(5)] $\big\| f^{(t)}_{ W \times [K_t] } \big\|_{\operatorname{HS}}, \big\| g^{(t)}_{ W \times [K_t] } \big\|_{\operatorname{HS}} \leq 100 \sqrt{K_t \epsilon \log(\epsilon^{-1}) n}$ for all $|W| \leq 10\epsilon n$.
        \item[(6)] $\big\| h^{(t)}_{ W \times [K_t] } \big\|_{\operatorname{HS}}, \big\| \ell^{(t)}_{ W \times [K_t] } \big\|_{\operatorname{HS}} \leq 100 \sqrt{K_t \epsilon \log(\epsilon^{-1}) n}$ for all $|W| \leq 10\epsilon n$.
        \item[(7)] $\#\big\{ i: \| h^{(t)}_i \| \geq \log\log n \big\}, \#\big\{ i: \| \ell^{(t)}_i \| \geq \log\log n \big\} \leq \frac{n}{\log n}$.
    \end{enumerate}
    We then have 
    \begin{equation}{\label{eq-E-t-holds}}
        \mathbb P( \cap_{0 \leq t \leq t^*} \mathcal E_t ) = 1-o(1) \,.
    \end{equation}
\end{lemma}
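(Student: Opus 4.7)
\textbf{Proof plan for Lemma~\ref{lem-good-event-clean}.} The plan is to proceed by induction on $t$, where the inductive step is driven by Gaussian conditioning in the spirit of Bolthausen's state-evolution analysis of AMP (as used in \cite{DL22+}), combined with standard subgaussian concentration. The base case $t=0$ is direct: since $f^{(0)}, g^{(0)}$ are entrywise images of disjoint $K_0$-column windows of $\mathscr A, \mathscr B$ under the bounded analytic denoiser $\varphi$, items~(1)--(4) at $s=0$ reduce to bounds on sums of i.i.d.\ bounded (and, in the case of $(f^{(0)})^{\top} g^{(0)}$, correlated bounded) variables, which Hoeffding/Bernstein controls by $\sqrt{n \log n} \ll \Delta_0 n$; items~(5)--(6) follow because each column of $f^{(0)}, g^{(0)}$ has entries in $[-1,1]$, so any $|W| \leq 10 \epsilon n$ restriction has HS norm at most $\sqrt{K_0 |W|}$, which is within our budget after a union bound over the $\binom{n}{\leq 10 \epsilon n}$ choices of $W$.

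For the inductive step, condition on the $\sigma$-algebra $\mathfrak F_t$ generated by $\{f^{(s)}, g^{(s)}, h^{(s)}, \ell^{(s)} : s \leq t\}$ and on the Rademacher matrices $\beta^{(s)}$. The standard AMP Gaussian conditioning lemma lets us write
\begin{equation*}
   \mathscr A \mid \mathfrak F_t \overset{d}{=} \mathrm{Proj}(\mathscr A; \mathfrak F_t) + P^{\perp}_t \widetilde{\mathscr A} P^{\perp}_t,
\end{equation*}
where the projection term is explicitly a linear combination of the previous iterates and $\widetilde{\mathscr A}$ is an independent copy of $\mathscr A$; analogously for $\mathscr B$, with a cross-covariance structure tied to $\rho/2$ and the latent identity matching. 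The crucial point (as foreshadowed at the end of Subsection~\ref{subsec:discussions}) is that items~(2)--(4) of $\mathcal E_t$ force $(f^{(t)})^{\top} f^{(s)} \Xi^{(s)} \approx 0$ and $(g^{(t)})^{\top} g^{(s)} \Xi^{(s)} \approx 0$ for $s < t$, since $\Xi^{(s)}$ was selected in \eqref{eq-def-Xi-t} to diagonalize $\Phi^{(s)}$ and $\Psi^{(s)}$ simultaneously and its columns are orthogonal to the relevant subspaces from earlier steps. This makes the projection (Onsager) contribution to $h^{(t)}, \ell^{(t)}$ a negligible error bounded by $\Delta_t$, and makes $h^{(t)}, \ell^{(t)}$ rowwise approximately Gaussian with covariance $\Xi^{(t)\top}\Phi^{(t)}\Xi^{(t)} \approx \mathbb I$ and cross-covariance $\Xi^{(t)\top}\Psi^{(t)}\Xi^{(t)}$.

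Given this Gaussian approximation for $h^{(t)}, \ell^{(t)}$, the iterate $f^{(t+1)} = \varphi \circ (h^{(t)} \beta^{(t)})$ has rows that are close to independent Gaussians on which we evaluate $\varphi$ coordinate-wise, and by the definitions \eqref{eq-def-Phi,Psi-t}--\eqref{eq-def-varepsilon-t} the conditional mean of each inner product $(f^{(t+1)})^{\top} f^{(t+1)}$, $(g^{(t+1)})^{\top} g^{(t+1)}$, $(f^{(t+1)})^{\top} g^{(t+1)}$ equals $n\Phi^{(t+1)}$, $n\Phi^{(t+1)}$, $n\Psi^{(t+1)}$ up to error $O(\Delta_t n)$. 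Concentration around this mean follows from Hanson--Wright applied to smooth bounded functions of Gaussian vectors (via the Lipschitz property of $\varphi,\varphi'$), giving fluctuations of order $K_t^{O(1)} \sqrt{n \log n}$, safely within $\Delta_{t+1} n$ by the definition \eqref{eq-def-Delta-s}. The cross terms $(f^{(s)})^{\top} g^{(r)}$ for $s \neq r \leq t+1$ are handled analogously: their conditional means vanish by the orthogonality inherent in different $\beta^{(s)}, \beta^{(r)}$ being independent Rademacher vectors and by the same diagonalization of $\Phi^{(s)}, \Psi^{(s)}$, so only the fluctuation term survives, again of order $K_t^{O(1)} \sqrt{n\log n}$.

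Items~(5)--(6) need a slightly different argument because the union bound ranges over $\exp(O(\epsilon n \log(\epsilon^{-1})))$ sets $W$. For $f^{(t+1)}, g^{(t+1)}$, each entry lies in $[-1,1]$, so $\|f^{(t+1)}_{W \times [K_{t+1}]}\|_{\mathrm{HS}}^2 \leq K_{t+1} |W| \leq 100^2 K_{t+1} \epsilon \log(\epsilon^{-1}) n$ already for deterministic reasons; the real content is the sharper HS bound for $h^{(t+1)}, \ell^{(t+1)}$. For this one writes $h^{(t+1)}_{W \times [K_{t+1}/12]} = \tfrac{1}{\sqrt n} \mathscr A_{W \times ([n]\setminus \mathsf U)} f^{(t+1)} \Xi^{(t+1)}$, bounds the operator norm of $f^{(t+1)} \Xi^{(t+1)}$ via item~(2) at time $t+1$, and then controls the quadratic form by Gaussian concentration together with a chaining/union bound over $W$ of the sort used in \cite[Lemma~3.5]{IS24+}. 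The main obstacle I expect is bookkeeping in the inductive step: propagating the error $\Delta_t$ through the nonlinear map $\varphi$ without amplification requires using that $\varphi$ is globally $1$-Lipschitz with a Lipschitz derivative, so the Taylor expansion of $\phi$ around its defining argument is controlled by the $|c_k| \leq \Lambda \cdot 2^k$ bound from Lemma~\ref{lem-control-Taylor-expansion}, giving a multiplicative loss of at most $K_t^{O(1)}$ per step. Since $t^* = O(\log\log\log n)$ and $K_{t^*} \leq (\log n)^{2.2}$, these factors combine into the $\prod_{i \leq t} K_i^{100}$ cushion built into $\Delta_t$, and a final union bound over $t \leq t^*$ delivers \eqref{eq-E-t-holds}.
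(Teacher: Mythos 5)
Your plan matches the paper's strategy: induction on $t$, Gaussian conditioning in the spirit of \cite{DL22+} to obtain an approximately Gaussian representation of $h^{(t)},\ell^{(t)}$ given the past, Bernstein/Chernoff concentration for Items~(1)--(4), and a union bound over the $\exp(O(\epsilon\log(\epsilon^{-1})n))$ choices of $W$ for Items~(5)--(6). The paper implements the conditioning via a black-box citation of the distributional claim from \cite{DL22+} (Claim~\ref{claim-conditional-distribution}): conditioned on $\mathcal F_t=\sigma\{f^{(s)},g^{(s)},h^{(r)},\ell^{(r)}:s\leq t, r\leq t-1\}$, $(h^{(t)},\ell^{(t)})$ equals an i.i.d.\ Gaussian array plus a Gaussian error of size $O(K_t^{20}\Delta_t)$; note this filtration deliberately excludes $h^{(t)},\ell^{(t)}$ themselves, unlike the $\mathfrak F_t$ you wrote.

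Two more substantive remarks. First, your observation that Item~(5) follows deterministically from $|\varphi|\leq 1$, giving $\|f^{(t)}_{W\times[K_t]}\|_{\mathrm{HS}}^2\leq K_t|W|\leq 10K_t\epsilon n$, is correct and is a genuine simplification of the paper's argument, which instead runs a Bernstein bound at this step. Second, for Item~(6) the paper does \emph{not} go through an operator-norm bound on $f^{(t+1)}\Xi^{(t+1)}$; it bounds $\|h^{(t)}_{W\times[K_t]}\|_{\mathrm{HS}}$ directly via the conditional representation $h^{(t)}=\mathscr G^{(t)}+\delta^{(t)}$, treating $\|\mathscr G^{(t)}_{W\times[K_t]}\|_{\mathrm{HS}}^2$ as (approximately) a $\chi^2$ of $K_t|W|$ degrees of freedom and beating the $\exp(20\epsilon\log(\epsilon^{-1})n)$ union bound by a Chernoff tail. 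Be aware that the operator-norm route you sketch would not suffice on its own: $\|\mathscr A_{W\times\cdot}\|_{\op}$ is $\Theta(\sqrt{n})$ (not $\Theta(\sqrt{\epsilon n})$), so the bound $\tfrac{1}{\sqrt n}\|\mathscr A_{W\times\cdot}\|_{\op}\|f^{(t+1)}\Xi^{(t+1)}\|_{\mathrm{HS}}$ produces a term of order $\sqrt{K_{t+1}n}$ rather than $\sqrt{K_{t+1}\epsilon\log(\epsilon^{-1})n}$; the $\epsilon$-scaling only appears once one uses the conditional independence of $\{h^{(t)}_{u,\cdot}\}_{u\in W}$ across rows, which is exactly what Claim~\ref{claim-conditional-distribution} delivers.
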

The proof of Lemma~\ref{lem-good-event-clean} is postponed to Section~\ref{subsec:proof-lem-3.2} of the appendix. In fact, it has been shown in \cite[Proposition~3.4]{DL22+} that Items~(1)--(4) hold for all $0 \leq t \leq t^*$ with probability $1-o(1)$ (although we need to make some slight modifications since we slightly simplified the iteration process). The main effort in this paper is to establish Items~(5)--(7). Based on Lemma~\ref{lem-good-event-clean}, we can show the following lemma which controls the behavior of $h^{(t^*)},\ell^{(t^*)}$.
\begin{lemma}{\label{lem-final-analysis-clean}}
    Denote $h^{(t^*)}=\big( h^{(t^*)}_1, \ldots, h^{(t^*)}_{n-K_0} \big)^{\top}$ and $\ell^{(t^*)}=\big( \ell^{(t^*)}_1, \ldots, \ell^{(t^*)}_{n-K_0} \big)^{\top}$. With probability $1-o(1)$ we have
    \begin{align*}
        & \langle h^{(t^*)}_i, \ell^{(t^*)}_i \rangle \geq \frac{9}{10} K_{t^*} \varepsilon_{t^*} \mbox{ for all } 1 \leq i \leq n-K_0 \,; \\
        & \big| \langle h^{(t^*)}_i, \ell^{(t^*)}_j \rangle \big| \leq \frac{1}{10} K_{t^*} \varepsilon_{t^*} \mbox{ for all } 1 \leq i \neq j \leq n-K_0 \,.
    \end{align*}
\end{lemma}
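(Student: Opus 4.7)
The plan is to reduce Lemma~\ref{lem-final-analysis-clean} to a concentration statement for quadratic forms in approximately Gaussian vectors, with mean and variance governed by the AMP state summaries $\Phi^{(t^*)}, \Psi^{(t^*)}$, and then to take a union bound over the $n^2$ choices of $(i,j)$. The crucial input is the signal-to-noise bound \eqref{eq-bound-signal-t^*}, $K_{t^*} \varepsilon_{t^*}^2 \geq (\log n)^{1.01}$, which will make each pointwise failure probability of order $\exp(-\Omega((\log n)^{1.01}))$, comfortably absorbing the $n^2$ union bound. I would work throughout on the event $\bigcap_{s \leq t^*} \mathcal{E}_s$ from Lemma~\ref{lem-good-event-clean}, which holds with probability $1-o(1)$.

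First I would invoke the standard Gaussian conditioning technique of \cite{BM11}, adapted to the correlated two-matrix setting as in \cite{DL22+}. Let $\mathfrak{F}_{t^*-1}$ be the $\sigma$-algebra generated by all AMP iterates through step $t^*-1$, together with $\beta^{(t^*-1)}$ and $\Xi^{(t^*)}$. Conditional on $\mathfrak{F}_{t^*-1}$, the matrices $\mathscr{A}, \mathscr{B}$ decompose into a conditional mean term supported on the span of previous iterates, plus independent Gaussian components $\widetilde{\mathscr{A}}, \widetilde{\mathscr{B}}$ on the orthogonal complement. The key structural fact, which is precisely why the orthogonality design \eqref{eq-orthogonal}--\eqref{eq-unit} of $\Xi^{(t)}$ is tuned the way it is, is that the resulting ``Onsager correction'' to $h^{(t^*)}$ and $\ell^{(t^*)}$ vanishes up to lower-order terms, so that $h^{(t^*)} \approx \tfrac{1}{\sqrt{n}} \widetilde{\mathscr{A}} f^{(t^*)} \Xi^{(t^*)}$ and similarly for $\ell^{(t^*)}$.

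With this representation in hand, Items~(2)--(3) of Lemma~\ref{lem-good-event-clean} give the covariance structure. Treating $f^{(t^*)}, g^{(t^*)}, \Xi^{(t^*)}$ as $\mathfrak{F}_{t^*-1}$-measurable, each row $h^{(t^*)}_i$ is conditionally an approximately centered Gaussian vector in $\mathbb{R}^{K_{t^*}/12}$ with covariance $\tfrac{1}{n}(\Xi^{(t^*)})^{\top}(f^{(t^*)})^{\top} f^{(t^*)} \Xi^{(t^*)} \approx (\Xi^{(t^*)})^{\top} \Phi^{(t^*)} \Xi^{(t^*)} = \mathbb{I}_{K_{t^*}/12}$, and analogously for $\ell^{(t^*)}_i$. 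Because the correlation between entries of $\mathscr{A}$ and $\mathscr{B}$ at the same location is $\tfrac{\rho}{2}$ while entries in distinct rows are independent up to the $O(1/\sqrt{n})$ correction from Wigner symmetry,
\begin{align*}
\mathbb{E}\big[ h^{(t^*)}_i \big(\ell^{(t^*)}_j\big)^{\top} \,\big|\, \mathfrak{F}_{t^*-1} \big] \approx \mathbf{1}_{\{i=j\}} \cdot \tfrac{\rho}{2}(\Xi^{(t^*)})^{\top} \Psi^{(t^*)} \Xi^{(t^*)},
\end{align*}
which is a diagonal $K_{t^*}/12 \times K_{t^*}/12$ matrix with trace of order $\rho K_{t^*} \varepsilon_{t^*}$ by Lemma~\ref{lem-spectral-condition}. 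Hence the conditional mean of $\langle h^{(t^*)}_i, \ell^{(t^*)}_j \rangle$ is of order $\rho K_{t^*} \varepsilon_{t^*}$ when $i = j$ and essentially $0$ otherwise.

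The final ingredient is Hanson--Wright applied to the quadratic form $\langle h^{(t^*)}_i, \ell^{(t^*)}_j \rangle$ in the conditionally Gaussian vectors, whose operator-norm scale is $O(1)$ and Hilbert--Schmidt scale is $O(\sqrt{K_{t^*}})$. This yields
\begin{align*}
\mathbb{P}\Big( \big| \langle h^{(t^*)}_i, \ell^{(t^*)}_j \rangle - \mathbb{E}[\cdot \mid \mathfrak{F}_{t^*-1}] \big| \geq \tfrac{1}{100} K_{t^*} \varepsilon_{t^*} \Big) \leq \exp\big( -c K_{t^*} \varepsilon_{t^*}^2 \big) \leq \exp\big( -c(\log n)^{1.01} \big),
\end{align*}
and a union bound over the $n^2$ pairs yields the claim (after absorbing $\rho$-dependent constants into the $\tfrac{9}{10}$ and $\tfrac{1}{10}$ of the statement). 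The main obstacle is to make the conditional Gaussian representation rigorous across all $t^*$ iterative steps while controlling the accumulated approximation errors: since $\Delta_{t^*} = o(\varepsilon_{t^*})$ under the parameter choices, the errors are comfortably subleading, but verifying that the Onsager correction cancels at every step is the technically heaviest part, and it relies crucially on the spectral design \eqref{eq-def-Xi-t} together with the smoothness of $\varphi$ provided by Definition~\ref{def-denoiser-function}.
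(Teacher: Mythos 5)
The paper does not prove Lemma~\ref{lem-final-analysis-clean} internally but refers to \cite[Subsection~3.5]{DL22+}. Your route --- Gaussian conditioning as in \cite{BM11,DL22+}, orthogonality design killing the Onsager term, approximate covariance from Items~(2)--(3) of Lemma~\ref{lem-good-event-clean}, a quadratic-form concentration inequality, and a union bound over $n^2$ pairs powered by the signal-to-noise bound $K_{t^*}\varepsilon_{t^*}^2 \geq (\log n)^{1.01}$ from \eqref{eq-bound-signal-t^*} --- is the same strategy that underlies the referenced proof, and it is also what drives the paper's own Claim~\ref{claim-conditional-distribution} in the proof of Lemma~\ref{lem-good-event-clean}, so this is the right skeleton.

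Two quantitative points deserve more care than the phrase ``absorbing $\rho$-dependent constants'' allows. First, your own covariance computation gives
$\mathbb{E}[\langle h^{(t^*)}_i, \ell^{(t^*)}_i\rangle] = \tfrac{\rho}{2}\,\mathrm{tr}\big((\Xi^{(t^*)})^{\top}\Psi^{(t^*)}\Xi^{(t^*)}\big)$, and by \eqref{eq-orthogonal}--\eqref{eq-unit} this trace is between $\tfrac{K_{t^*}}{24}\varepsilon_{t^*}$ and $\tfrac{K_{t^*}}{6}\varepsilon_{t^*}$, so the conditional mean is $\Theta(\rho K_{t^*}\varepsilon_{t^*})$. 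Since the paper explicitly reduces to the regime where $\rho$ is a small constant (beginning of Subsection~\ref{subsec:preprocessing}) and the lemma target $\tfrac{9}{10}K_{t^*}\varepsilon_{t^*}$ carries no $\rho$ factor, one cannot simply ``absorb'' a factor $\rho/24 \ll 1$ into $9/10$; you should either trace through the exact normalizations in \cite{DL22+} to reconcile the constant, or flag that the lemma as written may need the bound to read $c(\rho)K_{t^*}\varepsilon_{t^*}$ for a $\rho$-dependent constant (which is all Proposition~\ref{prop-almost-correct-matching} really needs). Second, the invocation of ``the $O(1/\sqrt{n})$ correction from Wigner symmetry'' is not quite right here: the preprocessing in \eqref{eq-def-mathscr-A,B-Gaussian} deliberately makes $\mathscr{A}_{i,j}$ and $\mathscr{A}_{j,i}$ independent, so distinct rows of $h^{(t^*)}$ and $\ell^{(t^*)}$ are exactly conditionally independent and no symmetry correction appears. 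Neither issue changes the viability of your approach --- the failure probability per pair is still $\exp(-\Omega(K_{t^*}\varepsilon_{t^*}^2))$, which beats $n^{-2}$ --- but both should be fixed for a rigorous write-up.
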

The proof of Lemma~\ref{lem-final-analysis-clean} is incorporated in Section~\ref{sec:proof-lem-3.3} of the appendix. Now we need to establish the following lemma which shows that $\widehat f^{(t)}, \widehat g^{(t)}, \widehat h^{(t)}, \widehat \ell^{(t)}$ is ``close'' to $f^{(t)}, g^{(t)}, h^{(t)}, \ell^{(t)}$ in certain sense. The proof of the following lemma is incorporated in Section~\ref{sec:proof-approx-lem} of the appendix.
\begin{lemma}{\label{lem-approx-f,g,h,ell}}
    Define
    \begin{equation}{\label{eq-def-aleph}}
        \aleph_t = \prod_{s \leq t} \Big( 1000 \log(\epsilon^{-1})^2 K_s \Big)
    \end{equation}
    With probability $1-o(1)$ we have the following results: for all $0 \leq t \leq t^*$
    \begin{align}
        & \big\| \widehat f^{(t)} - f^{(t)} \big\|_{\Fop} \,, \big\| \widehat g^{(t)} - g^{(t)} \big\|_{\Fop} \leq \aleph_t \cdot \sqrt{\epsilon n} \,, \label{eq-approx-f,g} \\ 
        & \big\| \widehat h^{(t)} - h^{(t)} \big\|_{\Fop} \,, \big\| \widehat \ell^{(t)} - \ell^{(t)} \big\|_{\Fop} \leq 1000 \aleph_t \sqrt{K_t\log(\epsilon^{-1})} \cdot \sqrt{\epsilon n} \,, \label{eq-approx-h,ell}
    \end{align}
\end{lemma}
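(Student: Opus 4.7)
I will prove both families of bounds simultaneously by induction on $t$. Within step $t$, the bound on $\widehat f^{(t)}-f^{(t)}$ (and symmetrically $\widehat g^{(t)}-g^{(t)}$) implies the bound on $\widehat h^{(t)}-h^{(t)}$ via the linear iteration \eqref{eq-def-iter-h-ell}, and this in turn implies the bound on $\widehat f^{(t+1)}-f^{(t+1)}$ via the denoiser step \eqref{eq-def-iter-f,g}. The $g/\ell$ side is handled by the identical argument with $(\mathscr B,\widehat{\mathscr B},R,T)$ replacing $(\mathscr A,\widehat{\mathscr A},Q,S)$ and using the corresponding entries on the $g$-side of $\mathcal E_t$; I write only the $f/h$ case. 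The base case $t=0$ is trivial for the $\widehat f^{(0)}-f^{(0)}$ bound since by the convention at the start of Section~\ref{sec:analysis} we have $(f^{(0)},g^{(0)})=(\widehat f^{(0)},\widehat g^{(0)})$.

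The core is the step from $\widehat f^{(t)}-f^{(t)}$ to $\widehat h^{(t)}-h^{(t)}$. I write
\begin{align*}
\widehat h^{(t)}-h^{(t)}=\tfrac{1}{\sqrt n}\widehat{\mathscr A}_*(\widehat f^{(t)}-f^{(t)})\Xi^{(t)}+\tfrac{1}{\sqrt n}D_* f^{(t)}\Xi^{(t)}=:T_1+T_2,
\end{align*}
where $D:=\widehat{\mathscr A}-\mathscr A$ and $*$ denotes restriction to $([n]\setminus\mathsf U)^2$. Using $\|\widehat{\mathscr A}\|_{\op}\leq 10\sqrt n$ from the spectral cleaning, $\|\Xi^{(t)}\|_{\op}\leq 2$ (which follows from $(\Xi^{(t)})^\top\Phi^{(t)}\Xi^{(t)}=\mathbb I$ together with the spectral assumption \eqref{eq-spectral-assumption} lower-bounding the relevant eigenvalues of $\Phi^{(t)}$ by $0.9$), and the induction hypothesis, I immediately obtain $\|T_1\|_{\Fop}\leq 20\aleph_t\sqrt{\epsilon n}$.

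For $T_2$ I further decompose $D=D^{(1)}-D^{(2)}$, where $D^{(1)}:=\widehat E\,\mathbf 1_{([n]\setminus S)^2}$ is the corruption surviving the cleaning (supported on $(Q\setminus S)\times(Q\setminus S)$) and $D^{(2)}:=\mathscr A\,\mathbf 1_{\{i\in S\text{ or }j\in S\}}$ is the cleaning residue (supported on $S\times[n]\cup[n]\times S$). The Wigner bound $\|\mathscr A\|_{\op}\leq 2\sqrt n$ (which holds w.h.p.) gives $\|D^{(2)}\|_{\op}\leq 6\sqrt n$ and hence $\|D^{(1)}\|_{\op}\leq\|D\|_{\op}+\|D^{(2)}\|_{\op}\leq 18\sqrt n$. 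Since $D^{(1)}f^{(t)}$ vanishes outside rows in $Q\setminus S$, item~(5) of $\mathcal E_t$ yields $\|D^{(1)}f^{(t)}\Xi^{(t)}\|_{\Fop}\leq\|D^{(1)}\|_{\op}\|f^{(t)}_{Q\setminus S}\|_{\Fop}\|\Xi^{(t)}\|_{\op}\lesssim\sqrt n\cdot\sqrt{K_t\log(\epsilon^{-1})\epsilon n}$, of the required order. For $D^{(2)}f^{(t)}$ the rows outside $S$ reduce to $\mathscr A_{([n]\setminus\mathsf U\setminus S)\times S}f^{(t)}_S$, again controlled by $\|\mathscr A\|_{\op}$ and item~(5). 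The main obstacle, and what justifies the engineering of item~(6) of $\mathcal E_t$, is the rows in $S$: the naive bound $\|\mathscr A_{S\times[n]}\|_{\op}\cdot\|f^{(t)}\|_{\Fop}=O(\sqrt{nK_t})$ is too lossy by a factor $1/\sqrt\epsilon$. I bypass this via the exact identity $(D^{(2)}f^{(t)}\Xi^{(t)})_S=\sqrt n\,h^{(t)}_S$ (which is just the definition of $h^{(t)}$ restricted to rows in $S\subset[n]\setminus\mathsf U$), so that item~(6) of $\mathcal E_t$ gives $\|(D^{(2)}f^{(t)}\Xi^{(t)})_S\|_{\Fop}\leq\sqrt n\cdot 100\sqrt{K_t\log(\epsilon^{-1})\epsilon n}$. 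Summing the pieces produces $\|T_2\|_{\Fop}\leq C\sqrt{K_t\log(\epsilon^{-1})\epsilon n}$ for an absolute constant $C$, and because $\aleph_0\geq 1000$ by the definition \eqref{eq-def-aleph}, combining $T_1$ and $T_2$ closes at the desired $1000\aleph_t\sqrt{K_t\log(\epsilon^{-1})\epsilon n}$.

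Finally, the step from $\widehat h^{(t)}-h^{(t)}$ to $\widehat f^{(t+1)}-f^{(t+1)}$ uses the 1-Lipschitz property $|\varphi'|\leq 1$ from Definition~\ref{def-denoiser-function}:
\begin{align*}
\|\widehat f^{(t+1)}-f^{(t+1)}\|_{\Fop}\leq\|(\widehat h^{(t)}-h^{(t)})\beta^{(t)}\|_{\Fop}\leq\|\widehat h^{(t)}-h^{(t)}\|_{\Fop}\cdot\|\beta^{(t)}\|_{\op},
\end{align*}
together with the trivial bound $\|\beta^{(t)}\|_{\op}\leq\|\beta^{(t)}\|_{\Fop}=\sqrt{K_{t+1}}$ (since $\beta^{(t)}$ has $\pm\sqrt{12/K_t}$ entries and dimensions $(K_t/12)\times K_{t+1}$). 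Comparing with the target $\aleph_{t+1}\sqrt{\epsilon n}=1000\log(\epsilon^{-1})^2K_{t+1}\aleph_t\sqrt{\epsilon n}$ reduces the recursion to $\sqrt{K_t/K_{t+1}}\lesssim\log(\epsilon^{-1})^{3/2}$; by \eqref{eq-def-K-t} the left side is bounded by a constant depending only on $\rho$ and $\varphi$, while under \eqref{eq-assumption-epsilon} the right side diverges as $n\to\infty$, so the recursion closes for $n$ large and the induction completes.
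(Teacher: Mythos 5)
Your proposal is correct and follows essentially the same route as the paper's proof: an induction alternating between $\widehat f^{(t)}\!-\!f^{(t)}$ and $\widehat h^{(t)}\!-\!h^{(t)}$, with the same triangle--inequality split $\widehat{\mathscr A}\widehat f - \mathscr A f = \widehat{\mathscr A}(\widehat f - f) + (\widehat{\mathscr A}-\mathscr A)f$, the same use of $\|\widehat{\mathscr A}\|_{\op}\le 10\sqrt n$ and the induction hypothesis for the first term, the same use of Item~(5) of $\mathcal E_t$ for the surviving corruption and for the columns in $S$, the same identity $\mathscr A_{S\times\cdot}f^{(t)}\Xi^{(t)}=\sqrt n\,h^{(t)}_{S\times\cdot}$ combined with Item~(6) to treat the zeroed-out rows, and the same Lipschitz step $|\varphi'|\le 1$ plus $\|\beta^{(t)}\|$ to pass to $t+1$. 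The only departures are harmless tightenings: you bound $\|\Xi^{(t)}\|_{\op}$ by an absolute constant via the spectral assumption where the paper uses the looser $\sqrt{K_t}$, and you use $\|\beta^{(t)}\|_{\op}$ in place of the paper's $\|\beta^{(t)}\|_{\Fop}=\sqrt{K_{t+1}}$; neither changes the final recursion, which you correctly reduce to $\sqrt{K_t/K_{t+1}}\lesssim\log(\epsilon^{-1})^{3/2}$, closing the induction.
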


\subsection{Proof of Proposition~\ref{prop-almost-correct-matching}}{\label{sec:proof-main-prop}}

In this section we prove Proposition~\ref{prop-almost-correct-matching} using Lemmas~\ref{lem-good-event-clean}, \ref{lem-final-analysis-clean} and \ref{lem-approx-f,g,h,ell}. Note that using Lemma~\ref{lem-approx-f,g,h,ell}, we have
\begin{align*}
    \big\| \widehat h^{(t^*)} - h^{(t^*)} \big\|_{\Fop}, \big\| \widehat \ell^{(t^*)} - \ell^{(t^*)} \big\|_{\Fop} \leq \aleph_{t^*} \sqrt{\epsilon n} \leq \frac{\varepsilon_{t^*}\sqrt{n}}{10000(\log n)^2}  \,,
\end{align*}
where in the last inequality we use the fact that $\epsilon=o\big( \tfrac{1}{(\log n)^{20}} \big), t^*=O(\log\log\log n)$ and
\begin{align*}
    \aleph_{t^*} \varepsilon_{t^*}^{-1} \overset{\eqref{eq-def-K-t},\eqref{eq-bound-signal-t^*}}{\leq} K_{t^*}^{2} \log(\epsilon^{-1})^{2t^*}  \overset{\eqref{eq-def-t^*}}{\leq} (\log n)^{5} \ll \epsilon^{-1/2} \,.
\end{align*}
Thus, using Chebyshev's inequality we have
\begin{align}
    \#\Big\{ i: \big\| \widehat h^{(t^*)}_i - h^{(t^*)}_i \big\| \leq \frac{\varepsilon_{t^*}}{100} \Big\}, \#\Big\{ i: \big\| \widehat \ell^{(t^*)}_i - \ell^{(t^*)}_i \big\| \leq \frac{K_{t^*}\varepsilon_{t^*}}{100} \Big\} \leq \frac{n}{\log n} \,. \label{eq-bound-card-unapprox-set}
\end{align}
Recall Lemmas~\ref{lem-final-analysis-clean}. We define $\mathtt U$ to be the collection of $u \in [n]$ such that
\begin{align*}
    \big\langle \widehat h^{(t^*)}_u, \widehat \ell^{(t^*)}_u \big\rangle < \frac{K_{t^*}\varepsilon_{t^*}}{2} \,,
\end{align*}
and we define $\mathtt{E}$ to be the collection of directed edges $(u,w)\in [n] \times [n]$ (with $u\neq w$) such that 
\begin{align*}
    \big\langle \widehat h^{(t^*)}_u, \widehat \ell^{(t^*)}_w \big\rangle > \frac{K_{t^*}\varepsilon_{t^*}}{8} \,.
\end{align*}
It is clear that $\mathtt U$ and $\mathtt E$ will potentially lead to mis-matching for our algorithm in the finishing stage. In addition, from \eqref{eq-bound-card-unapprox-set} and Item~(7) in Lemma~\ref{lem-good-event-clean} we have the following observations:
\begin{enumerate}
    \item[(I)] $|\mathtt U|\leq \frac{2n}{\log n}$;
    \item[(II)] All subset of $\mathtt E$ has cardinality at most $\frac{2n}{\log n}$ if each vertex is incident to at most one edge in this subset.
\end{enumerate}
To this end, Let $V_{\mathrm{fail}} = \{ v \in [n] : \Hat{\pi}(v) \neq v \}= \{ w_1,\ldots,w_m \}$. Note that if $\widehat{\pi}(u)=v$ and $\widehat{\pi}(v)=w$ for some $u \neq v$ (it is possible that $u=w$), at least one of the following four events 
\begin{align*}
    \big\{ v \in \mathtt U \big\}, \big\{ (u,v) \in \mathtt E \big\}, \big\{ (v,w) \in \mathtt E \big\}, \big\{ (u,w) \in \mathtt E \big\} 
\end{align*}
must occur, since otherwise by setting
\begin{align*}
    \widetilde{\pi}(u)=w, \widetilde{\pi}(v)=v \mbox{ and } \widetilde{\pi}(w) = \widehat{\pi}(w) \mbox{ otherwise}
\end{align*}
will make
\begin{align*}
    \big\langle \widehat h^{(t^*)}, \widehat \ell^{(t^*)}(\widehat{\pi}) \big\rangle < \big\langle \widehat h^{(t^*)}, \widehat \ell^{(t^*)}(\widetilde{\pi}) \big\rangle \,.
\end{align*}
We then construct a directed graph $\overrightarrow{H}$ on vertices $\{ w_1, w_2, \ldots, w_m  \} \cup \mathtt U$ as follows: for each $v \in \{ w_1, w_2, \ldots, w_m  \}$, if the finishing step matches $v$ to some $u$ with $u \neq v$, then we connect a directed edge from $v$ to $u$. Note our algorithm will not match a vertex twice, so all vertices have in-degree and out-degree both at most 1. Thus, the directed graph $\overrightarrow{H}$ is a collection of non-overlapping directed cycles $\mathcal C_1,\ldots, \mathcal C_r$. Recall that each $w_k \not \in \mathtt U$ is incident to at least one edge in $\overrightarrow{H}$, we then have
\begin{align*}
    |\mathcal C_1| + \ldots + |\mathcal C_r| \geq \frac{m-|\mathtt U|}{2} \,.
\end{align*}
Now, for each $\mathcal C_i$, using the above argument we can easily verify that there exists at least $\frac{|\mathcal C_i \setminus \mathtt U|}{10}$ non-overlapping edges in $\mathtt E$ with endpoints in $\mathcal C_i$. Thus, we can get a matching with cardinality at least 
\begin{align*}
    \frac{|\mathcal C_1|+\ldots+|\mathcal C_r|-|\mathtt U|}{10} \geq \frac{m-3|\mathtt U|}{20} \,.
\end{align*}
By Observation~(II), we see that
\begin{align*}
    \frac{m-3|\mathtt U|}{20} \leq \frac{2n}{\log n} \,.
\end{align*}
Combined with Observation~(I), we have $m \leq 100n/\log n$, completing the proof.

%Based on Lemmas~\ref{lem-final-analysis-clean} and \ref{lem-approx-f,g,h,ell}, we can deduce Proposition~\ref{prop-almost-correct-matching}. Intuitively speaking, this is because that using Lemma~\ref{lem-approx-f,g,h,ell} and a simple Chebyshev inequality (recall that $\epsilon=o(\tfrac{1}{(\log n)^{20}})$ and $K_{t^*} \varepsilon_{t^{*}} \geq \tfrac{1}{(\log n)^2}$), we see that for all but at most $\frac{n}{\log n}$ vertices $i \in [n]$, we have 
%\begin{align}{\label{eq-good-vert}}
%    \big\| \widehat{h}_i^{(t^*)}-h_i^{(t^*)} \big\|, \big\| \widehat{h}_i^{(t^*)}-h_i^{(t^*)} \big\| = o( K_{t^*} \varepsilon_{t^*} ) \,.
%\end{align}
%Thus, using Lemma~\ref{lem-final-analysis-clean} and recall \eqref{eq-linear-assignment}, we expect that our algorithm will correctly matches nearly all the ``good'' vertices satisfying \eqref{eq-good-vert}. The rigorous proof of Proposition~\ref{prop-almost-correct-matching} is incorporated in Appendix~\ref{sec:proof-main-prop}.

\section{Conclusions and open problems}

In this work, we give a polynomial time approximate message passing algorithm for matching two correlated Gaussian matrices under adversarial principal minor corruptions. This work is inspired by the work of \cite{DL22+} and \cite{IS24+}, and we address several specific issues that arise in the setting of robust random graph matching as we elaborate below.

\emph{A robust spectral subroutine.} The original spectral design in \cite{DL22+} involves solving certain linear equations with coefficients that depend on all prior AMP iterations up to $t-1$. This dependence makes their approach highly sensitive to adversarial perturbations. Our key algorithmic contribution is a modified spectral subroutine that operates \emph{independently of the AMP iteration} while still preserving sufficient signal (see Section~\ref{subsec:spec-subroutine} for details). This modification enables us to introduce a time-dependent matrix multiplication step within the iteration, which simultaneously enlarges the feature dimension and cancels the correlation during the iteration. 

\emph{Handling sophisticated correlation structures.} The analysis in \cite{IS24+} assumes the data matrix is a ``clean'' GOE matrix. In contrast, our data matrix is two GOE matrices with sophisticated \emph{correlation structures}. Thus, a main difficulty in our analysis is to deal with the correlation structure and the adversarial corruption \emph{simultaneously}. In addition, our AMP algorithm has $\omega(1)$ iterative steps and we need to show the output only changes $O(\tfrac{1}{\mathrm{poly}(\log n)})$ fraction under adversarial perturbations (see Lemma~\ref{lem-approx-f,g,h,ell} for details). We achieve this by establishing a sequence of concentration bounds in Sections~\ref{sec:analysis}, allowing us to iteratively control both correlation and corruption effects.

\emph{A seeded graph matching step.} Finally, due to the aforementioned complications we are only able to show that our AMP algorithm constructs an almost exact matching. To obtain an exact matching, we will employ the method of seeded graph matching (see Algorithm~\ref{algo:seeded-matching}). Although our seeded graph matching algorithm is a modified version of \cite[Algorithm~4]{BCL+19}, analyzing it requires careful treatment under adversarial corruptions.

Our work also highlights several important directions for future research, which we discuss below.

\underline{Optimal corruption scale.} In this paper, we propose an efficient Gaussian matrix matching algorithm that is robust under ${\tfrac{n}{(\log n)^{20}}*\tfrac{n}{(\log n)^{20}}}$ size of adversarial corruptions. However, an interesting open problem is whether it is possible to develop Gaussian matching algorithms for any $\epsilon n * \epsilon n$ adversarial perturbations where $\epsilon$ is a small constant.

\underline{Sparse graphs.} Although our algorithm can be extended to correlated \ER graphs with edge density $q \in (0,1)$ being \emph{a constant}, to deal with the adversarial perturbations, our current design and analysis of the algorithm crucially relies on the fact that the two matrices are \emph{dense} (i.e., each column and row of the adjacency matrix have $n^{1-o(1)}$ non-zero entries) and cannot extend to the case where the average density of a graph $q=n^{-c+o(1)}$ for some $c>0$. In such sparse regimes, exact matching recovery is not feasible, as an adversarial perturbation could corrupt all edges incident to a single vertex. Nonetheless, it remains an open question whether near-exact matching recovery is still achievable by efficient algorithms in this regime. Perhaps an even more challenging case is when the average degree of the graph is a constant (i.e., $nq=O(1)$). In this case, if no adversarial perturbation occurs, it was shown in \cite{GML24,GMS24,MWXY21+,MWXY23} that efficient partial matching algorithms exist given the correlation $\rho>\sqrt{\alpha}$, where $\alpha\approx 0.338$ is the Otter's constant. An intriguing question is whether partial matching is still achievable when $o(n)$ edges in both graphs are adversarially corrupted.

\underline{Other graph models.} Another important direction is to find robust graph matching algorithms for other important correlated random graph models, such as the random geometric graph model \cite{WWXY22+, GL24+, EGMM24}, the random inhomogeneous graph model \cite{DFW23+} and the stochastic block model \cite{RS21, CDGL24+, CR24+, CDGL25+, CDGL26}. We emphasize that it is also important to propose and study correlated graph models based on important real-world and scientific problems, albeit the models do not appear to be ``canonical'' from a mathematical point of view.

\section*{Acknowledgment}

Z.L. thanks Hang Du and Shuyang Gong for stimulating discussions. {We also warmly thank the anonymous reviewers for their careful reading and helpful comments which lead to a significant improvement on exposition.} This work is partially supported by National Key R$\&$D program of China (No. 2023YFA1010102) and NSFC Key Program (Project No. 12231002).

\begin{appendix}

\section{Supplementary proofs in Section~\ref{sec:alg-and-discussions}}{\label{sec:supp-proofs-sec-2}}

\subsection{Proof of Lemma~\ref{lem-bound-card-T,T'}}{\label{sec:statement-spectral-cleaning-alg}}

This section is devoted to the proof of Lemma~\ref{lem-bound-card-T,T'}. Although similar arguments have been established in \cite[Lemma~3.5]{IS24+}, we still choose to present the whole formal proof here for completeness. Let $\mathscr{M}^{(1)}, \ldots, \mathscr{M}^{(t)}$ be the matrix $\mathscr{M}$ after each iteration of the ``while'' loop. Denote $Q^{(t)} \subset Q$ to be the set of non-zeroed out indices at $t$ and let $E^{(t)}$ be the restriction of $E$ on $Q^{(t)}$. Note that the iteration will terminate once $Q^{(t)}=\emptyset$. We will show that with probability $1-o(1)$ we will have $\| \mathscr{M}^{(t)} \|_{\op} \leq 10\sqrt{n}$ under at most $4\epsilon n$ iterations via the following lemma.

\begin{lemma}{\label{lem-characterize-top-eigenvector}}
    Suppose the iteration does not terminate at $t$. Let $v,u$ be the left and right singular eigenvector of $\mathscr{M}^{(t)}$ corresponding to the leading eigenvalue, respectively. Then with probability $1-o(1)$ we have
    \begin{align*}
        \sum_{ i \in Q^{(t)} } \frac{v_i^2 + u_i^2}{2} \geq \frac{1}{2} \,.
    \end{align*}
\end{lemma}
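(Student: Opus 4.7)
The plan is to derive the inequality as a deterministic consequence of the two hypotheses already at hand, namely $\|M\|_{\op}\leq 2\sqrt{n}$ and $\|\mathscr{M}^{(t)}\|_{\op}\geq 10\sqrt{n}$ (the latter being equivalent to the while-loop not terminating at step $t$). The ``with probability $1-o(1)$'' qualifier in the statement is inherited from the main event on which the assumption $\|M\|_{\op}\leq 2\sqrt{n}$ holds; conditional on that event, the bound is deterministic and in fact yields $p\geq \tfrac{2}{3}$, strictly stronger than the claimed $\tfrac12$.

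First I would decompose $\mathscr{M}^{(t)}=M^{(t)}+E^{(t)}$, where $M^{(t)}$ and $E^{(t)}$ are obtained from $M$ and $E$ respectively by zeroing out the rows and columns indexed by the previously zeroed set. Since $M^{(t)}$ is obtained from $M$ by a coordinate projection on both sides, $\|M^{(t)}\|_{\op}\leq\|M\|_{\op}\leq 2\sqrt{n}$, and by the triangle inequality $\|E^{(t)}\|_{\op}\leq \sigma+2\sqrt{n}$, where $\sigma:=\|\mathscr{M}^{(t)}\|_{\op}\geq 10\sqrt{n}$. Importantly, $E^{(t)}$ remains supported on $Q^{(t)}\times Q^{(t)}$ by construction.

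Next I would exploit the variational identity $\sigma=v^{\top}\mathscr{M}^{(t)}u=v^{\top}M^{(t)}u+v^{\top}E^{(t)}u$. Bounding the $M^{(t)}$-summand in absolute value by $2\sqrt{n}$ forces $|v^{\top}E^{(t)}u|\geq \sigma-2\sqrt{n}$. Writing $v_Q, u_Q$ for the restrictions of $v, u$ to $Q^{(t)}$, the support condition on $E^{(t)}$ gives $v^{\top}E^{(t)}u=v_Q^{\top}\widetilde{E}u_Q$, where $\widetilde{E}$ is the corresponding principal submatrix (sharing its operator norm with $E^{(t)}$). Cauchy-Schwarz followed by AM-GM then yields
\begin{align*}
    |v^{\top}E^{(t)}u|\leq \|E^{(t)}\|_{\op}\cdot \|v_Q\|\cdot \|u_Q\|\leq (\sigma+2\sqrt{n})\cdot\tfrac12\big(\|v_Q\|^2+\|u_Q\|^2\big)=(\sigma+2\sqrt{n})\cdot p.
\end{align*}
Comparing the two bounds gives $p\geq (\sigma-2\sqrt{n})/(\sigma+2\sqrt{n})\geq \tfrac{8}{12}=\tfrac{2}{3}$.

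There is essentially no obstacle: the whole argument boils down to the observation that the leading singular value is dominated by the adversarial piece $E^{(t)}$, whose support on $Q^{(t)}\times Q^{(t)}$ forces the leading singular vectors to concentrate their $\ell^{2}$-mass on $Q^{(t)}$. The only mild bookkeeping point is that the operator-norm bound on $E^{(t)}$ must come from the triangle inequality $\|E^{(t)}\|_{\op}\leq \|\mathscr{M}^{(t)}\|_{\op}+\|M^{(t)}\|_{\op}$ rather than from any a priori assumption on $E$ itself (on which no such bound is available).
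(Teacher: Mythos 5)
Your proof is correct and takes essentially the same route as the paper's: decompose $\mathscr M^{(t)}$ as the cleaned signal plus $E^{(t)}$ (supported on $Q^{(t)}\times Q^{(t)}$), use the variational identity $\sigma=v^{\top}\mathscr M^{(t)}u$ to force $|v^{\top}E^{(t)}u|\geq\sigma-2\sqrt n$, bound the left side by $\|E^{(t)}\|_{\op}\|v_Q\|\|u_Q\|$ with $\|E^{(t)}\|_{\op}\leq\sigma+2\sqrt n$ by triangle inequality, and finish with AM--GM to get $p\geq(\sigma-2\sqrt n)/(\sigma+2\sqrt n)\geq 2/3\geq 1/2$. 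Your use of $\|M^{(t)}\|_{\op}\le\|M\|_{\op}$ directly is a minor streamlining over the paper, which writes $v^{\top}E^{(t)}u=v^{\top}(\mathscr M^{(t)}-M)u$ and thus implicitly invokes the fact that the top singular vectors of $\mathscr M^{(t)}$ vanish on the zeroed-out coordinates; your version sidesteps this point.
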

\begin{proof}
    Recall that we have assumed $\| M \|_{\op} \leq 2\sqrt{n}$ (which follows from the standard GOE spectral bound). Since the iteration does not terminate at $t$, we have $| v^{\top} \mathscr{M}^{(t)} u | = \| \mathscr M^{(t)} \|_{\op} > 10\sqrt{n}$. Let $\widetilde{v}$ be the restriction of $v$ in $Q^{(t)}$ and $\widetilde{u}$ be the restriction of $u$ in $Q^{(t)}$. We then have
    \begin{align*}
        \| E^{(t)} \|_{\op} \cdot  \| \widetilde v \| \| \widetilde u \| \geq  \widetilde{v}^{\top} E^{(t)} \widetilde{u} = v^{\top} E^{(t)} u = v^{\top} (\mathscr M^{(t)}-M) u \geq \| \mathscr{M}^{(t)} \|_{\op} - \| M \|_{\op}  \,.
    \end{align*}
    In addition, we have $\| E^{(t)} \|_{\op} \leq \| M^{(t)} - M \|_{\op} \leq \| M^{(t)} \|_{\op} + \| M \|_{\op}$. Thus, 
    \begin{align*}
         \frac{\| \widetilde v \|^2+ \| \widetilde u \|^2}{2} \geq \| \widetilde v \| \| \widetilde u \| \geq \frac{ \| \mathscr{M}^{(t)} \|_{\op} - \| M \|_{\op} }{ \| E^{(t)} \|_{\op} } \geq \frac{ \| \mathscr{M}^{(t)} \|_{\op} - \| M \|_{\op} }{ \| \mathscr{M}^{(t)} \|_{\op} + \| M \|_{\op} } \geq \frac{1}{2} \,,
    \end{align*}
    as desired.
\end{proof}

To prove that our ``while'' loop terminates in $4\epsilon n$ steps with probability $1-o(1)$, define the stopping time $\tau=\min\big\{ t \geq 0: \| \mathscr{M}^{(t)} \|_{\op} \leq 10 \sqrt{n} \big\}$. Now for each $t \leq \tau$, let $I_t$ be the indicator of whether index removed between $\mathscr{M}^{(t)}$ and $\mathscr{M}^{(t+1)}$ was in $Q$. Then we have conditioned on $\tau>t$ and $I_1,\ldots,I_{t-1}$, each $I_t$ is stochastically dominated by a Bernoulli random variable with parameter $\frac{1}{2}$. Thus, we have
\begin{equation*}
    \mathbb P\big( \tau \geq 4\epsilon n \big) \leq \mathbb P\big( I_1 + \ldots + I_{4\epsilon n} \leq \epsilon n \big) = o(1) \,. 
\end{equation*}

\subsection{Proof of Lemma~\ref{lem-existence-Xi-t}}{\label{subsec:existence-Xi}}

Assume \eqref{eq-spectral-assumption} holds for $t$. We may write the spectral decomposition 
\begin{equation}{\label{eq-spectral-decomposition}}
    \Phi^{(t)} = \sum_{i=1}^{K_t} \lambda_i^{(t)} \nu_i^{(t)} \Big( \nu_i^{(t)} \Big)^{\top} \mbox{ and } \Psi^{(t)} = \sum_{i=1}^{K_t} \mu_i^{(t)} \zeta_i^{(t)} \Big( \zeta_i^{(t)} \Big)^{\top} \,,
\end{equation}
where for $1 \leq i \leq \frac{3K_t}{4}$ we have $\lambda_i^{(t)} \in (0.9,1.1)$ and $\mu^{(t)}_i \in (0.9\varepsilon_t, 1.1\varepsilon_t)$ (in particular, these eigenvalues are not in sorted order). As shown in \cite[Equations~(2.10),(2.11)]{DL22+}, we can choose
\begin{align*}
    \eta^{(t)}_1, \ldots, \eta^{(t)}_{K_t/12} \in \mathrm{span} \Big\{ \nu_1^{(t)}, \ldots, \nu^{(t)}_{3K_t/4} \Big\} \cap \mathrm{span} \Big\{ \zeta_1^{(t)}, \ldots, \zeta^{(t)}_{3K_t/4} \Big\} 
\end{align*}
such that
\begin{align}
    & \eta^{(t)}_i \Phi^{(t)} \eta^{(t)}_j = \eta^{(t)}_i \Psi^{(t)} \eta^{(t)}_j =0 \mbox{ for } i \neq j \,, \label{eq-orthogonal} \\
    & \eta^{(t)}_i \Phi^{(t)} \eta^{(t)}_i = 1, 1.1\varepsilon_t \geq \eta^{(t)}_i \Psi^{(t)} \eta^{(t)}_i \geq 0.9\varepsilon_t \mbox{ for } 1\leq i \leq K_t/12 \,. \label{eq-unit}
\end{align}
Set $\Xi^{(t)}$ to be a $K_t*\frac{K_t}{12}$ matrix such that
\begin{equation}{\label{eq-def-Xi-t}}
    \Xi^{(t)} = 
    \begin{pmatrix}
        \eta^{(t)}_1 &\ldots &\eta^{(t)}_{\frac{K_t}{12}}
    \end{pmatrix} \,.
\end{equation}
Note that using \eqref{eq-orthogonal} and \eqref{eq-unit}, we see that 
\begin{align*}
    \big(\Xi^{(t)}\big)^{\top} \Phi^{(t)} \Xi^{(t)} = \mathbb I_{K_t/12} \,,
\end{align*}
and $\big(\Xi^{(t)}\big)^{\top} \Psi^{(t)} \Xi^{(t)}$ is a $\frac{K_t}{12}*\frac{K_t}{12}$ diagonal matrix with diagonal entries lie in $(0.9\varepsilon_t, 1.1\varepsilon_t)$.

\subsection{Proof of Lemma~\ref{lem-spectral-condition}}{\label{subsec:spectral-preprocess}}

Recall that we sample $\beta^{(t)}$ to be a $\frac{K_t}{12} * K_{t+1}$ matrix such that $\beta^{(t)}_{i,j}$ are i.i.d.\ uniform random variables in $\{ -\sqrt{12/K_t},+\sqrt{12/K_t} \}$. Denote $\beta^{(t)}=\big( \beta^{(t)}_1,\ldots, \beta^{(t)}_{K_{t+1}} \big)$. Also recall \eqref{eq-def-Phi,Psi-t} and Lemma~\ref{lem-existence-Xi-t}. Thus we have 
\begin{align*}
    \frac{12}{K_t}\mathrm{tr} \Big( \big(\Xi^{(t)}\big)^{\top} \Psi^{(t)} \Xi^{(t)} \Big) \in (0.9\varepsilon_t, 1.1\varepsilon_t) \,.
\end{align*}
We now present the proof of Lemma~\ref{lem-spectral-condition}. Our proof is based on induction and thus from now on we assume that Lemma~\ref{lem-spectral-condition} holds up to time $t$. We first need the following auxiliary result.
\begin{lemma}{\label{lem-control-random-sampling}}
    Recall that we sample $\beta^{(t)}$ to be a $\frac{K_t}{12} * K_{t+1}$ matrix with entries uniformly in $\big\{-\sqrt{12/K_t},+\sqrt{12/K_t}\big\}$. Also denote $\beta^{(t)}=\big( \beta^{(t)}_1,\ldots, \beta^{(t)}_{K_{t+1}} \big)$. With probability at least $\frac{1}{2}$ we have the following conditions hold:
    \begin{align}
        & \Big\| (\beta^{(t)})^{\top} \beta^{(t)} \Big\|_{\infty} \leq \sqrt{\log K_t/K_t}, \label{eq-beta-condition-1} \\
        & \Big\| (\beta^{(t)})^{\top} (\Xi^{(t)})^{\top} \Psi^{(t)} \Xi^{(t)} \beta^{(t)} \Big\|_{\infty} \leq 2\varepsilon_t\sqrt{\log K_t/K_t} \,; \label{eq-beta-condition-2} \\
        & \sum_{1 \leq i,j \leq K_{t+1}} \big( (\beta^{(t)}_i)^{\top} \beta^{(t)}_j \big)^4 \leq 100 K_{t+1}^2/K_t^2, \label{eq-beta-condition-3} \\
        & \sum_{1 \leq i,j \leq K_{t+1}} \big( (\beta^{(t)}_i)^{\top} (\Xi^{(t)})^{\top} \Psi^{(t)} \Xi^{(t)} \beta^{(t)}_j \big)^4 \leq 100\varepsilon_t^2 K_{t+1}^2/K_{t}^2 \,. \label{eq-beta-condition-4}
    \end{align}
\end{lemma}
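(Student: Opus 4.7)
The plan is to establish all four conditions via concentration plus moment arguments, each failing with probability at most $1/8$, so that a union bound yields the desired simultaneous success probability $\geq 1/2$. The starting observation is structural: since each column $\beta^{(t)}_i$ has entries of magnitude $\sqrt{12/K_t}$ in $\mathbb{R}^{K_t/12}$, its norm is identically $1$, and by the preceding discussion in the text, $(\Xi^{(t)})^\top \Psi^{(t)} \Xi^{(t)}$ is a diagonal matrix with diagonal entries in $(0.5\varepsilon_t, 2\varepsilon_t)$. Hence off-diagonal entries of the two quadratic forms in items \eqref{eq-beta-condition-1}, \eqref{eq-beta-condition-2} are zero-mean sums of independent bounded terms, while diagonal entries are deterministic (equal to $1$ and to $\frac{12}{K_t}\mathrm{tr}((\Xi^{(t)})^\top \Psi^{(t)} \Xi^{(t)})\in(0.5\varepsilon_t,2\varepsilon_t)$, respectively), so they trivially obey the stated bounds. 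Thus the content of items \eqref{eq-beta-condition-1}, \eqref{eq-beta-condition-2} lies in the off-diagonal entries.

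For items \eqref{eq-beta-condition-1} and \eqref{eq-beta-condition-2}, I apply Hoeffding's inequality pair-by-pair. For $i \neq j$, $(\beta^{(t)}_i)^\top\beta^{(t)}_j = \sum_{k=1}^{K_t/12}\beta^{(t)}_{k,i}\beta^{(t)}_{k,j}$ is a sum of i.i.d.\ Rademacher terms of magnitude $12/K_t$, so
\[
\mathbb{P}\bigl(|(\beta^{(t)}_i)^\top\beta^{(t)}_j| > x\bigr) \le 2\exp\bigl(-c\,x^2 K_t\bigr).
\]
Taking $x$ a small multiple of $\sqrt{\log K_t/K_t}$ and union-bounding over $\binom{K_{t+1}}{2}$ pairs (recalling $K_{t+1}$ is polynomial in $K_t$) yields item \eqref{eq-beta-condition-1} with failure probability $\le 1/8$. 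For item \eqref{eq-beta-condition-2}, the off-diagonal entry becomes $\sum_k d_k^{(t)} \beta^{(t)}_{k,i}\beta^{(t)}_{k,j}$ with $d_k^{(t)}\in(0.5\varepsilon_t,2\varepsilon_t)$ from \eqref{eq-unit}, so the same Hoeffding bound applies with an extra factor of $\varepsilon_t$ in the scale, giving the threshold $2\varepsilon_t\sqrt{\log K_t/K_t}$.

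For items \eqref{eq-beta-condition-3} and \eqref{eq-beta-condition-4}, I bound the expectation and apply Markov. For $i\neq j$, the standard fourth-moment identity for Rademacher sums gives
\[
\mathbb{E}\bigl[((\beta^{(t)}_i)^\top\beta^{(t)}_j)^4\bigr] \le 3\bigl((K_t/12)(12/K_t)^2\bigr)^2 = O(1/K_t^2),
\]
and for item \eqref{eq-beta-condition-4} an additional $\varepsilon_t^2$ multiplier appears. Summing over the $K_{t+1}^2$ pairs and controlling the diagonal contributions using that $K_{t+1} = 10^{-20}\rho^{20}|\phi''(0)|^2\Lambda^{-2}K_t^2 \ll K_t^2$, one obtains $\mathbb{E}[\text{sum}] = O(K_{t+1}^2/K_t^2)$ for \eqref{eq-beta-condition-3} and $O(\varepsilon_t^2 K_{t+1}^2/K_t^2)$ for \eqref{eq-beta-condition-4}. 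Markov's inequality with the generous constant $100$ then yields failure probability at most $1/8$ apiece.

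The main technical nuisance will be keeping the constants compatible: the factor $\sqrt{\log K_t}$ in the threshold must be large enough to absorb the $\log(K_{t+1}^2) \asymp 4\log K_t$ arising from the union bound in items \eqref{eq-beta-condition-1}, \eqref{eq-beta-condition-2}, and the constant $100$ in items \eqref{eq-beta-condition-3}, \eqref{eq-beta-condition-4} must swallow both the combinatorial prefactor from the fourth moment computation and the Markov loss. Both reduce to verifying that $K_t$ is large enough, which is ensured by the growing lower bound $K_t\ge K_0$ and \eqref{eq-def-K-0}.
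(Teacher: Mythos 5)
Your overall strategy (Hoeffding for the $\infty$-norm bounds, fourth-moment plus Markov for the fourth-power sums, then a union bound) is a reasonable route, and the paper itself does not supply a proof — it points to the external reference \cite{DL22+} — so the route is not directly comparable. However, there are genuine arithmetic gaps in your execution, all of which stem from the diagonal entries.

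First, your claim that the diagonal entries ``trivially obey the stated bounds'' is false. Since each column $\beta^{(t)}_i$ is a unit vector, we have $(\beta^{(t)}_i)^\top\beta^{(t)}_i = 1$, and $1 \not\le \sqrt{\log K_t/K_t}$ once $K_t$ exceeds a small constant. Likewise, $(\beta^{(t)}_i)^\top(\Xi^{(t)})^\top\Psi^{(t)}\Xi^{(t)}\beta^{(t)}_i$ lies in $(0.5\varepsilon_t, 2\varepsilon_t)$, which is not $\le 2\varepsilon_t\sqrt{\log K_t/K_t}$ for $K_t$ large. So the $\infty$-norm in \eqref{eq-beta-condition-1}--\eqref{eq-beta-condition-2} must be understood as restricted to off-diagonal entries (which is consistent with how the lemma is deployed in the proof of Lemma~\ref{lem-spectral-condition}, where $\Phi$ is decomposed as identity plus a correction), but you have asserted the opposite.

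Second, the diagonal contributions to \eqref{eq-beta-condition-3} are not controlled by the stated bound; the inequality $K_{t+1}\ll K_t^2$ that you invoke actually cuts the wrong way. The diagonal part of the sum is $\sum_i ((\beta^{(t)}_i)^\top\beta^{(t)}_i)^4 = K_{t+1}$, and
\[
\frac{K_{t+1}}{100 K_{t+1}^2/K_t^2} = \frac{K_t^2}{100\,K_{t+1}} = \frac{10^{18}\rho^{-20}|\phi''(0)|^{-2}\Lambda^2}{1} \gg 1
\]
by \eqref{eq-def-K-t} for $\rho$ small. So the diagonal alone already exceeds $100K_{t+1}^2/K_t^2$, and a similar computation shows the diagonal part of \eqref{eq-beta-condition-4} can also exceed its target (e.g.\ at $t=0$, where $\varepsilon_0 = \Theta(\rho^2)$). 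Again, the sums must be read as restricted to $i\neq j$.

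Third, even restricting to $i\neq j$, the Markov step does not close with the constant $100$. Your own fourth-moment bound is
\[
\mathbb{E}\bigl[((\beta^{(t)}_i)^\top\beta^{(t)}_j)^4\bigr] \le 3\bigl(12/K_t\bigr)^2 = 432/K_t^2,
\]
which after summing over $K_{t+1}(K_{t+1}-1)$ ordered off-diagonal pairs gives $\mathbb{E}[\text{sum}] \le 432\,K_{t+1}^2/K_t^2$. This already \emph{exceeds} the threshold $100\,K_{t+1}^2/K_t^2$, so Markov yields a probability bound larger than one rather than $\le 1/8$. Either a tighter constant in the moment bound, a second-moment/concentration argument, or a larger numerical constant in the lemma is needed; as written, the step is vacuous and you should have flagged that the numbers do not close.
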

\begin{proof}
    The proof of Lemma~\ref{lem-control-random-sampling} was incorporated in \cite[Proposition~2.4]{DL22+}, and we omit further details here.
\end{proof}

We are now finally ready to provide the proof of Lemma~\ref{lem-spectral-condition}.
\begin{proof}[Proof of Lemma~\ref{lem-spectral-condition}]
We first consider $\Phi$. By \eqref{eq-def-Phi,Psi-t} and Lemma~\ref{lem-control-Taylor-expansion}, we can write $\Phi$ as 
\begin{align*}
    \Phi = \mathbb{I} + \sum_{k=2}^{\infty} c_k \Phi_k \mbox{ with } \Phi_k(i,j) = \big\langle \beta^{(t)}_i, \beta^{(t)}_j \big\rangle^{k} \,.
\end{align*}
By Lemma~\ref{lem-control-random-sampling}, we have (also recall $c_2=\frac{1}{2}\varphi''(0)$)
\begin{align}
    \Big\| c_2 \Phi_2 \Big\|_{\Fop}^2 &= \sum_{i,j} \Big( c_2 \Phi_2(i,j) \Big)^2 \leq \sum_{i \neq j} c_2^2 \Big( \frac{12}{K_t} \big\langle \beta^{(t)}_i, \beta^{(t)}_j \big\rangle \Big)^4 \nonumber \\
    &\overset{\eqref{eq-beta-condition-3}}{\leq} 10^6 |\varphi''(0)|^2 \cdot \frac{K_{t+1}^2}{K^2_t} \overset{\eqref{eq-def-K-t}}{\leq} \frac{1}{4} \cdot 10^{-6}K_{t+1} \,. \label{eq-Phi-2-HS-norm}
\end{align}
In addition, by Lemmas~\ref{lem-control-random-sampling} and \ref{lem-control-Taylor-expansion}, we have 
\begin{align*}
    \Big\| \sum_{k=3}^{\infty} c_k \Phi_k \Big\|_\infty \leq \sum_{k=3}^{\infty} 2^k \Big( \frac{ 24 \sqrt{ \log K_t}}{\sqrt{K_t}} \Big)^k \leq \frac{ {10^6} (\log K_t)^{1.5} }{ K_t^{1.5} }\,.
\end{align*}
Thus we have (using $K_t \geq K_0 \geq 10^{24}$)
\begin{align}
    \Big\| \sum_{k=3}^{\infty} c_k \Phi_k \Big\|_{\Fop}^2 &\leq K_{t+1}^2 \Big\| \sum_{k=3}^{\infty} c_k \Phi_k \Big\|_{\infty}^2 \leq \frac{ 10^{12} K_{t+1}^2 (\log K_t)^3 }{ K_t^3 } \nonumber \\
    &\overset{\eqref{eq-def-K-t}}{\leq} \frac{ \Lambda^2 10^{12} \Lambda^2 (\log K_t)^3 }{ K_t } \cdot K_{t+1} \leq \frac{1}{4} 10^{-6} K_{t+1} \,. \label{eq-Phi-geq-3-HS-norm}
\end{align}
Using $\| \mathrm{A+B} \|^2_{\Fop} \leq 2 (\| \mathrm{A} \|_{\Fop}^2+\| \mathrm{B} \|_{\Fop}^2)$ for all $\mathrm{A}$ and $\mathrm{B}$, we have
\begin{align*}
    \Big\| \sum_{k=2}^{\infty} c_k \Phi_k \Big\|^2_{\Fop} \leq 2\Big( \Big\| c_2 \Phi_2 \Big\|^2_{\Fop} + \Big\| \sum_{k=3}^{\infty} c_k \Phi_k \Big\|^2_{\Fop} \Big) \leq 10^{-6} K_{t+1}\,.
\end{align*}
Applying \cite[Lemma~2.12]{DL22+}, we then have that
\begin{equation}\label{eq-Phi-3-eigenvalue-bound}
    \#\Big\{ l: \Big| \varsigma_l \Big( \sum_{k=2}^{\infty} c_k \Phi_k \Big) \Big| \geq 0.01 \Big\}  \leq 0.01 K_{t+1}\,.
\end{equation}
Using standard facts in linear algebra (see, e.g., \cite[Lemmas~2.10]{DL22+}), we can write $\sum_{k=2}^{\infty} \Phi_k = C + D$, where $\| C \|_{\mathrm{op}} \leq 0.01$ and $\mathrm{rank}(D) \leq 0.01 K_{t+1}$. Noting that $\Phi = (\mathbb{I} + C) + D$, we get from standard linear algebra facts that (see \cite[Lemmas~2.11]{DL22+})  
\begin{align*}
    & \varsigma_{0.99 K_{t+1}} (\Phi) \geq \varsigma_{K_{t+1}} ( \mathbb{I} + C ) \geq 0.99\,,  \\
    & \varsigma_{0.01 K_{t+1}+1} (\Phi) \leq \varsigma_{1} ( \mathbb{I} + C ) \leq 1.01\,.
\end{align*}
This shows that $\Phi$ has at least $0.98K_{t+1}$ eigenvalues in $(0.99,1.01)$.

We deal with $\Psi$ in a similar way. By \eqref{eq-def-varepsilon-t}, \eqref{eq-def-Phi,Psi-t} and Lemma~\ref{lem-control-Taylor-expansion}, we can write $\Psi$ as 
\begin{align*}
    \Psi = \varepsilon_t \mathbb{I} + \sum_{k=2}^{\infty} c_k \Psi_k \mbox{ with } \Psi_k(i,j) = \Big( (\beta^{(t)}_i)^{\top} (\Xi^{(t)})^{\top} \Psi^{(t)} \Xi^{(t)} \beta^{(t)}_j \Big)^k \,.
\end{align*}
Again by \eqref{eq-beta-condition-4}, we have
\begin{align}
    \Big\| c_2 \Psi_2 \Big\|_{\Fop}^2 &= \sum_{i,j} \Big( c_2 \Psi_2(i,j) \Big)^2 \leq \frac{4^2 \cdot 10^{5} \rho^4 \varepsilon^4_t}{2^4} \frac{K_{t+1}^2}{K^2_t} \nonumber \\
    &\overset{\eqref{eq-def-varepsilon-t}}{\leq} \frac{10^{12} \varepsilon^2_{t+1}}{ {|\phi''(0)|}^2 } \frac{K_{t+1}^2}{K_t^2} \overset{\eqref{eq-def-K-t}}{\leq} \frac{1}{4} 10^{-6} \varepsilon^2_{t+1} K_{t+1} \,, \label{eq-Psi-2-HS-norm}
\end{align}
By Lemmas~\ref{lem-control-random-sampling} and \ref{lem-control-Taylor-expansion},
\begin{align*}
    \Big\| \sum_{k=3}^{\infty} c_k \Psi_k \Big\|_\infty \leq \sum_{k=3}^{\infty} 2^k \Big( \frac{\rho}{2} \frac{ 24 \Lambda \varepsilon_t \sqrt{ \log K_t}}{\sqrt{K_t}}  \Big)^k \leq \frac{ 10^6 \rho^3 \varepsilon_t^3 \Lambda (\log K_t)^{1.5} }{ K_t^{1.5} } \,.
\end{align*}
Thus we have 
\begin{align}
    &\Big\| \sum_{k=3}^{\infty} c_k \Psi_k \Big\|^2_{\Fop} \leq K_{t+1}^2 \Big\| \sum_{k=3}^{\infty} c_k \Psi_k \Big\|^2_{\infty} \leq \frac{ 10^{12} \rho^6 \varepsilon_t^6 \Lambda^2 (\log K_t)^3 K_{t+1}^2 }{ K_t^3 } \nonumber \\
    \overset{\eqref{eq-def-K-t}}{\leq} \ &\frac{ 10^{12} \rho^4 \varepsilon_t^4 \Lambda^2 \Lambda^2 (\log K_t)^3 K_{t+1}^2 }{ K_t^3 } \overset{ \eqref{eq-def-varepsilon-t},\eqref{eq-def-K-t} }{\leq} \frac{ \varepsilon_{t+1}^2 (\log K_t)^3 }{ K_t } K_{t+1} \leq \frac{1}{4} 10^{-6} \varepsilon_{t+1}^2 K_{t+1} \,.  \label{eq-Psi-geq-3-HS-norm}
\end{align}
Combined with \eqref{eq-Psi-2-HS-norm}, it yields that
\begin{align*}
    \Big\| \sum_{k=2}^{\infty} c_k \Psi_k \Big\|^2_{\Fop} \leq 2 \Big( \Big\| c_2 \Psi_2 \Big\|^2_{\Fop} + \Big\| \sum_{k=3}^{\infty} c_k \Psi_k \Big\|^2_{\Fop} \Big) \leq 10^{-6} K_{t+1} \varepsilon_{t+1}^2 \,.
\end{align*}
By \cite[Lemma~2.12]{DL22+} the matrix $\sum_{k=2}^{\infty} c_k \Psi_k$ has at most $0.01K_{t+1}$ eigenvalues with absolute values larger than $0.01\varepsilon_{t+1}$. By \cite[Lemma~2.10]{DL22+}, we can write $\sum_{k=2}^{\infty} c_k \Psi_k = C + D$, where $\| C \|_{\mathrm{op}}\leq 0.01 \varepsilon_{t+1}$ and $\mathrm{rank}(D) \leq 0.01 K_{t+1}$. By \cite[Lemma~2.11]{DL22+}, we know $\Psi = ( \varepsilon_t \mathbb{I} + C ) + D$ satisfies $\varsigma_{0.99K_{t+1}} (\Psi) \geq 0.98 \varepsilon_{t+1}$ and $\varsigma_{0.01K_{t+1}+1} (\Psi) \leq 1.02 \varepsilon_{t+1}$. This completes the proof of the lemma.
\end{proof}

\subsection{Proof of Lemma~\ref{lem-boost-algorithm-works}}{\label{sec:rounding}}

This section is devoted to the proof of Lemma~\ref{lem-boost-algorithm-works}. Clearly, it suffices to show the following result:
\begin{lemma}{\label{lem-large-neighborhood}}
    With probability $1-o(1)$, for all $\sigma\in\mathfrak S_n$ such that $\sigma$ agrees on $\pi_*$ on at least $(1-\tfrac{10}{\log n})n$ vertices, we have
    \begin{align*}
        N_{\sigma}(u,\pi_*(u)) \geq 2\Delta \mbox{ for all } u \in [n] \mbox{ and } N_{\sigma}(u,v) \leq \frac{\Delta}{20} \mbox{ for all } v \neq \pi_*(u) \,.
    \end{align*}
\end{lemma}
\begin{proof}
    Recall that $A'_{i,j}=A_{i,j}$ for all $(i,j) \not \in Q \times Q$. In addition, let $Q'=\{ i \in [n]: \pi_*(i) \neq \sigma(i) \}$, we have $|Q'|\leq \tfrac{10n}{\log n}$ and $|Q|\leq \epsilon n$. Thus, for all $u \in [n]$ we have
    \begin{align}
        &\mathbb P\big( N_{\sigma}(u,\pi_*(u)) \leq\ 2\Delta \big) \nonumber \\
        \leq\ & \mathbb P\Big( \sum_{w \in [n] \setminus Q\cup Q'} \big( \mathbf 1_{ \{ A'_{u,w} \geq 1 \} } - \alpha \big) \big( \mathbf 1_{ \{ B'_{\pi_*(u),\pi_*(w)} \geq 0 \} } - \alpha \big) \leq 2.1\Delta \Big) \nonumber \\
        =\ & \mathbb P\Big( \sum_{v \in [n] \setminus Q\cup Q'} \big( \mathbf 1_{ \{ A_{u,w} \geq 1 \} } - \alpha \big) \big( \mathbf 1_{ \{ B_{\pi_*(u),\pi_*(w)} \geq 0 \} } - \alpha \big) \leq 2.1\Delta \Big) \nonumber \\
        \leq\ & e^{-\rho^2 n/100} \,, \label{eq-Neighbor-bound-1}
    \end{align}
    where in the first inequality we use the fact that $|Q|,|Q'| \ll \Delta$ and in the second inequality we used Bernstein's inequality \cite[Theorem~1.4]{DP09}. Similarly, for all $u\neq v \in [n]$ we have
    \begin{align}
        &\mathbb P\big( N_{\sigma}(u,\pi_*(v)) \geq \tfrac{\Delta}{10} \big) \nonumber \\
        \leq\ & \mathbb P\Big( \sum_{w \in [n] \setminus Q\cup Q'} \big( \mathbf 1_{ \{ A'_{u,w} \geq 1 \} } - \alpha \big) \big( \mathbf 1_{ \{ B'_{\pi_*(v),\pi_*(w)} \geq 0 \} } - \alpha \big) \geq \tfrac{\Delta}{20} \Big) \nonumber \\
        =\ & \mathbb P\Big( \sum_{w \in [n] \setminus Q\cup Q'} \big( \mathbf 1_{ \{ A_{u,w} \geq 1 \} } - \alpha \big) \big( \mathbf 1_{ \{ B_{\pi_*(v),\pi_*(w)} \geq 0 \} } - \alpha \big) \geq \tfrac{\Delta}{20} \Big) \nonumber \\
        \leq\ & e^{-\rho^2 n/100} \,, \label{eq-Neighbor-bound-2}
    \end{align}
    where in the third inequality we again used Bernstein's inequality. Then the desired result follows from a simple union bound.
\end{proof}

We now present the proof of Lemma~\ref{lem-boost-algorithm-works}.
\begin{proof}[Proof of Lemma~\ref{lem-boost-algorithm-works}]
    Note that for all $\widetilde{\pi} \in \mathfrak S_n$ such that $\widehat{\pi}$ agrees with $\pi_*$ on at least $(1-\tfrac{10}{\log n})n$ coordinates, we have
    \begin{equation}{\label{eq-character-N(u,v)}}
        \begin{aligned}
            &N_{\widehat{\pi}}(u,\pi_*(u)) \geq 2 \Delta - \frac{n}{\log n} > \Delta \mbox{ for all } u \,, \\
            \mbox{ and } &N_{\widehat{\pi}}(u,v) \leq \frac{\Delta}{20} + \frac{n}{\log n} < \frac{\Delta}{10} \mbox{ for all } v \neq \pi_*(u) \,.
        \end{aligned}
    \end{equation}
    Thus, in each update in Step~5 of Algorithm~\ref{algo:seeded-matching} will correct a mistaken coordinate, and thus Step~5 will terminate at a permutation $\widehat{\pi} \in \mathfrak S_n$ such that $\widehat{\pi}(u)=\pi_*(u)$ for all $\widetilde{\pi}(u)=\pi_*(u)$. Note that if there exists $u \neq v \in [n]$ such that $\widehat{\pi}(u)=\pi_*(v) \neq \pi_*(u)$, then using \eqref{eq-character-N(u,v)} Step~5 should not stop and correct $u$ to $\pi_*(u)$, this yields $\widehat{\pi}=\pi_*$ with probability $1-o(1)$.
\end{proof}

\section{Supplementary proofs in Section~\ref{sec:analysis}}{\label{sec:supp-proofs-sec-3}}

\subsection{Proof of Lemma~\ref{lem-max-overlap}}{\label{subsec:proof-lem-3.1}}

In this section we present the proof of Lemma~\ref{lem-max-overlap}.
Without loss of generality, we may assume that $\pi_*=\mathsf{id}$, the identity permutation. Denote $\overline{A}_{i,j} = \mathbf 1_{A_{i,j} \geq 1}$ and $\overline{B}_{i,j}=\mathbf 1_{B_{i,j}\geq 1}$. Define $\overline{A}'_{i,j}$ and $\overline{B}'_{i,j}$ in a similar manner. Note that for all $\pi \in \mathfrak S_n \setminus \mathsf{id}$, we have $\pi$ admits a cycle decomposition $\pi=\sqcup_{O \in \mathcal O(\pi)} O$. We then have (denote $N(\pi)=\#\{ i\in [n]:\pi(i) \neq i \}$)
\begin{align*}
    \sum_{i,j} \overline{A}'_{i,j} \overline{B}'_{i,j} - \sum_{i,j} \overline{A}'_{i,j} \overline{B}'_{\pi(i),\pi(j)} &\geq \sum_{i,j} \overline{A}_{i,j} \overline{B}_{i,j} - \sum_{i,j} \overline{A}_{i,j} \overline{B}_{\pi(i),\pi(j)} - \epsilon n \cdot N(\pi) \\
    &= \sum_{O \in \mathcal O(\pi)} Z_O - \epsilon n \cdot N(\pi) \,,
\end{align*}
where 
\begin{align*}
    Z_O = \prod_{(i,j) \in O} \overline{A}_{i,j} \big( \overline{B}_{i,j} - \overline{B}_{\pi(i),\pi(j)} \big) \,.
\end{align*}
Note that marginally $(\overline{A}_{i,j},\overline{B}_{i,j})$ are two centered Bernoulli random variables with parameter $\alpha$ and correlation $\phi(\rho)$. Thus, using \cite[Lemma~8]{WXY22} we have $\{ Z_O : O \in \mathcal O(\pi) \}$ are independent and
\begin{align*}
    \mathbb E[ e^{-Z_O} ] = (1-\alpha\phi(\rho))^{|O|/2} \,.
\end{align*}
Thus, we have
\begin{align*}
    & \mathbb P \Big( \sum_{i,j} \overline{A}'_{i,j} \overline{B}'_{i,j} - \sum_{i,j} \overline{A}'_{i,j} \overline{B}'_{\pi(i),\pi(j)} \leq 0 \Big) 
    \leq \mathbb P\Big( \sum_{O \in \mathcal O(\pi)} Z_O \leq \epsilon n \cdot N(\pi) \Big) \\
    \leq\ & e^{\epsilon n N(\pi)} \mathbb E\Big[ e^{ -\sum_{O \in \mathcal O(\pi)} Z_O} \Big] \leq e^{\epsilon n N(\pi)} \prod_{O \in \mathcal O(\pi)} (1-\alpha\phi(\rho))^{|O|/2} \\
    \leq\ & e^{\epsilon n N(\pi)} (1-\alpha\phi(\rho))^{nN(\pi)/2} \,.
\end{align*}
Thus, by a union bound we have
\begin{align*}
    & \mathbb P\Big( \exists \pi \in \mathfrak S_n \setminus \{ \mathsf{id}\} \,, \sum_{i,j} \overline{A}'_{i,j} \overline{B}'_{i,j} \leq \sum_{i,j} \overline{A}'_{i,j} \overline{B}'_{\pi(i),\pi(j)}  \Big) \\
    \leq\ & \sum_{k=1}^{n} e^{\epsilon nk} (1-\alpha\phi(\rho))^{nk} \cdot \#\{ \pi: N(\pi)=k \} \\
    \leq\ & \sum_{k=1}^{n} \binom{n}{k} e^{\epsilon nk} (1-\alpha\phi(\rho))^{nk} = o(1) \,,
\end{align*}
where in the last inequality we use $\epsilon=o(\tfrac{1}{(\log n)^4})$. This leads to Lemma~\ref{lem-max-overlap}.

\subsection{Proof of Lemma~\ref{lem-good-event-clean}}{\label{subsec:proof-lem-3.2}}

Now we prove Lemma~\ref{lem-good-event-clean} by induction. We first show that Items~(1)--(5) holds for time $t=0$. Recall \eqref{eq-def-initial-f,g-Gaussian} and $(f^{(0)},g^{(0)})=(\widehat f^{(0)},\widehat g^{(0)})$. We then have (denote $\mathsf U=\{ u_1,\ldots,u_{K_0} \}$ and $\mathsf V=\{ v_1,\ldots,v_{K_0} \}$)
\begin{align*}
    \Big( \mathbb J_{1\times[n] \setminus \mathsf U} f^{(0)} \Big)_k = \sum_{i \in [n] \setminus \mathsf U} \varphi(\widehat{\mathscr A}_{i,u_k}) = \sum_{i \in [n] \setminus \mathsf U} \varphi(\mathscr A_{i,u_k}) \,,
\end{align*}
where in the last equality we use the fact that $\mathsf U \cap (Q \cup S)=\emptyset$ and thus $\widehat{\mathscr A}_{i,u_k}= \mathscr A_{i,u_k}$. Note that from Definition~\ref{def-denoiser-function}, we have
\begin{align*}
    \Big\{ \varphi(\mathscr A_{i,u_k}): i \in [n] \setminus \mathsf U \Big\}
\end{align*}
are i.i.d.\ bounded random variables with mean zero and variance $1$. Thus, using Bernstein's inequality \cite[Theorem~1.4]{DP09} we see that 
\begin{align}
    \mathbb P\Big( \big| \big( \mathbb J_{1\times[n] \setminus \mathsf U} f^{(0)} \big)_k \big| > \Delta_0 n \Big) \leq e^{-n^{0.5}} \,. \label{eq-concentration-bound}
\end{align} 
Thus, from a union bound on $k$ we see that $\big\| \mathbb J_{1\times[n] \setminus \mathsf U} f^{(0)} \big\|_{\infty} \leq \Delta_0 n$ holds with probability $1-O(e^{-n^{0.1}})$. Similarly, we can show that $\big\| \mathbb J_{1\times[n] \setminus \mathsf U} g^{(0)} \big\|_{\infty} \leq \Delta_0 n$ holds with probability $1-O(e^{-n^{0.1}})$ and thus Item~(1) holds for $t=0$ with probability $1-O(e^{-n^{0.1}})$. In addition, recall \eqref{eq-def-Phi,Psi-0} we see that 
\begin{align*}
    \Big( (f^{(0)})^{\top} f^{(0)}-\Phi^{(0)} \Big)_{i,j}, \Big( (g^{(0)})^{\top} g^{(0)}-\Phi^{(0)} \Big)_{i,j}, \Big( (f^{(0)})^{\top} g^{(0)}-\Psi^{(0)} \Big)_{i,j}
\end{align*}
can be written as sums of i.i.d.\ mean-zero bounded random variables. For instance,
\begin{align*}
    \Big( (f^{(0)})^{\top} g^{(0)}-\Psi^{(0)} \Big)_{i,i} = \sum_{i \in [n] \setminus \mathsf U} \Big( \varphi(\mathscr A_{i,u_k}) \varphi(\mathscr B_{i,u_k}) - \varepsilon_0 \Big)
\end{align*}
(recall that we have assumed ${\pi_*}=\mathsf{id}$ and $\mathsf V={\pi_*}(\mathsf U)=\mathsf U$). Thus we can obtain similar concentration bounds as in \eqref{eq-concentration-bound}. This yields that Items~(2)--(4) hold for $t=0$ with probability $1-O(e^{-n^{0.1}})$. Finally, using Bernstein's inequality again, for all $|W|\leq \epsilon n$ we have
\begin{align*}
    &\mathbb P\Big( \big\| f^{(0)}_{ W \times [K_0] } \big\|_{\Fop} > 10\sqrt{K_0\epsilon\log(\epsilon^{-1})n} \Big) \\
    =\ & \mathbb P\Bigg( \sum_{1 \leq k \leq K_0} \sum_{i \in W} \varphi(\mathscr A_{i,u_k})^2 > 100 K_0 \epsilon\log(\epsilon^{-1}) n \Bigg) \\
    \leq\ & \exp\Big( -90K_0 \epsilon\log(\epsilon^{-1}) n \Big) \,.
\end{align*}
Since the enumerations of $W$ is bounded by 
\begin{align*}
    \sum_{k \leq \epsilon n} \binom{n}{k} \leq \exp\big( 2\epsilon\log(\epsilon^{-1})n \big) \,,
\end{align*}
we conclude by a union bound that we have $\big\| f^{(0)}_{ W \times [K_0] } \big\|_{\Fop} \leq 10 \sqrt{K_0\epsilon\log(\epsilon^{-1})n}$ with probability $1-O(e^{-\epsilon n})$. We can similarly show that $\big\| g^{(0)}_{ W \times [K_0] } \big\|_{\Fop} \leq 10 \sqrt{K_0\epsilon\log(\epsilon^{-1})n}$ with probability $1-O(e^{-\epsilon n})$. In conclusion, we have shown that
\begin{equation}{\label{eq-initialization-prob}}
    \mathbb P\Big( \mbox{Items~(1)--(5) hold for } t=0 \Big) \geq 1-O(e^{-n^{0.1}}) \,.
\end{equation}
Now we assume that Items~(1)--(5) in Lemma~\ref{lem-good-event-clean} hold up to time $t$ and Items~(6)--(7) hold up to time $t-1$ (we denote this event as $\widetilde{E}_t$). Our goal is to bound the probability that Items~(6)--(7) hold for time $t$ and Items~(1)--(5) hold for time $t+1$. To this end, define
\begin{equation}{\label{eq-def-mathcal-F-t}}
    \mathcal F_t := \sigma\Big\{ f^{(s)}, g^{(s)}, h^{(r)}, \ell^{(r)}: s \leq t, r \leq t-1 \Big\} \,.
\end{equation}
We will use the following key observation constructed in \cite{DL22+}, which characterized the conditional distribution of $h^{(t)}$ and $\ell^{(t)}$ given $\mathcal F_t$.
\begin{claim}{\label{claim-conditional-distribution}}
    We have 
    \begin{align}
        \big( h^{(t)}, \ell^{(t)} \big)\big|_{\mathcal F_t} \overset{d}{=} \big( \mathscr G^{(t)} +   \delta^{(t)}, \mathscr H^{(t)}+ \kappa^{(t)} \big) \,, \label{eq-conditional-distribution}
    \end{align}
    where $\mathscr G^{(t)}_{u,i}, \mathscr H^{(t)}_{u,i}$ are independent mean-zero normal random variables with variances $1+O\big( K_t^{20} \Delta_t \big)$, and $\delta^{(t)}_{u,i},\kappa^{(t)}_{u,i}$ are Gaussian random variables with
    \begin{align*}
        & \mathbb E\big[ (\delta^{(t)}_{u,i})^2 \big] =\mathbb E\big[ (\kappa^{(t)}_{u,i})^2 \big] = O\big( K_t^{40} \Delta_t^2 \big) \,.
    \end{align*}
\end{claim}
The proof of Claim~\ref{claim-conditional-distribution} is established \cite{DL22+} in which they take
\begin{align*}
    \varphi(x)=\mathbf 1_{ \{ |x| \geq 10 \} }-\mathbb P(|\mathcal N(0,1)| \geq 10) \,;
\end{align*}
their proof can be easily adapted to the case of all symmetric, mean-zero and bounded $\varphi$ and thus we omit further details here for simplicity. In particular, by a simple union bound we have
\begin{align}{\label{eq-bound-small-Gaussian}}
    \mathbb P\Big( |\delta^{(t)}_{u,i}|, |\kappa^{(t)}_{u,i}| \leq K_t^{20} (\log n)^2 \Delta_t \Big) \geq 1 - e^{-(\log n)^2} \,,
\end{align}
which we will assume to happen throughout the remaining part of this section.

\subsubsection{Proofs of Items~(6) and (7)}

We first show that Item~(6) holds for $t$. Note that conditioned on $\mathcal F_t$, we have
\begin{align*}
    \big\| h^{(t)}_{ W \times [K_t] } \big\|_{\Fop} = \big\| \mathscr G^{(t)}_{ W \times [K_t] } + \delta^{(t)}_{ W \times [K_t] } \big\|_{\Fop} \leq \big\| \mathscr G^{(t)}_{ W \times [K_t] } \big\|_{\Fop} + \big\| \delta^{(t)} \big\|_{\Fop} \,.
\end{align*}
Using \eqref{eq-bound-small-Gaussian}, we see that we have
\begin{align*}
    \big\| \delta^{(t)} \big\|_{\Fop} \leq \sqrt{K_t n} \cdot \big\| \delta^{(t)} \big\|_{\infty} \leq \sqrt{K_t n} \cdot (\log n)^3 K_t^{20} \Delta_t \,.
\end{align*}
Using \eqref{eq-def-Delta-s}, we see that it suffices to show that
\begin{align}
    \big\| \mathscr G^{(t)}_{ W \times [K_t] } \big\|_{\Fop} \leq 90 \sqrt{K_t \epsilon \log(\epsilon^{-1}) n} \mbox{ for all } |W| = 10 \epsilon n \,.  \label{eq-item-(6)-relax-1}
\end{align}
We now verify \eqref{eq-item-(6)-relax-1} via a union bound on $W$. For each fixed $|W|\leq \epsilon n$, using Chernoff's inequality we have
\begin{align*}
    \mathbb P\Big( \big\| \mathscr G^{(t)}_{ W \times [K_t] } \big\|_{\Fop} > 90 \sqrt{K_t \epsilon \log(\epsilon^{-1}) n} \Big) \leq \exp( -100 K_t \epsilon \log(\epsilon^{-1}) n ) \,,
\end{align*}
thus leading to \eqref{eq-item-(6)-relax-1} since the enumeration of $W$ is bounded by 
\begin{align*}
    \sum_{k \leq 10\epsilon n} \binom{n}{k} \leq \exp( 20\epsilon \log(\epsilon^{-1}) n )  \,.
\end{align*}
We can similarly show that $\big\| \ell^{(t)}_{ W \times [K_t] } \big\|_{\Fop} \leq 10\sqrt{K_t \epsilon \log(\epsilon^{-1}) n}$ for all $|W| \leq \epsilon n$. Now we focus on Item~(7). Write 
\begin{align*}
    (h^{(t)})^{\top} = \big( (h^{(t)}_i)^{\top}:i \in [n] \setminus \mathsf U \big) \mbox{ and } (\ell^{(t)})^{\top} = \big( (\ell^{(t)}_i)^{\top}:i \in [n] \setminus \mathsf V \big) \,.
\end{align*}
Note that 
\begin{align*}
    \big\| h^{(t)}_i \big\| = \big\| \mathscr G^{(t)}_i + \delta^{(t)}_i \big\| \leq \big\| \mathscr G^{(t)}_i \big\| + K_t \Delta_t \,.
\end{align*}
Thus, we have
\begin{align}
    &\mathbb P \Big( \#\big\{ i: \big\| h^{(t)}_i \big\| > \log\log n \big\} > \tfrac{n}{\log n} \Big) \nonumber \\
    \leq\ & \mathbb P \Big( \#\big\{ i: \big\| \mathscr G^{(t)}_i \big\| > \log\log n/2 \big\} > \tfrac{n}{\log n} \Big) \nonumber \\
    \leq\ & \mathbb P\Big( \mathrm{Binom}(n,e^{-(\log\log n)^2/2}) > \tfrac{n}{\log n} \Big) \leq e^{-n/\log n} \,. \label{eq-item-7}
\end{align}
Similarly we can show that
\begin{align*}
    \mathbb P \Big( \#\big\{ i: \big\| \ell^{(t)}_i \big\| > \log\log n \big\} > \tfrac{n}{\log n} \Big) \leq e^{-n/\log n} \,.
\end{align*}
Thus we have
\begin{equation}{\label{eq-prob-item-(6)}}
    \mathbb P\Big( \mbox{Items~(6) and (7) holds for } t \mid \widetilde{\mathcal E}_t \Big) \geq 1-O(e^{\epsilon n}) \,.
\end{equation}

\subsubsection{Proof of Item~(1)}

In this subsection we show that Item~(1) holds for $t+1$. Recall \eqref{eq-def-iter-f,g-clean}. We have conditioned on $\mathcal F_t$
\begin{align*}
    f^{(t)}_{u,i} &= \varphi\Big( \big( h^{(t)} \beta^{(t)} \big)_{u,i} \Big) = \varphi\Big( \sum_{j} h^{(t)}_{u,j} \beta^{(t)}_{j,i} \Big) \overset{d}{=} \varphi\Big( \sum_{j} \mathscr G^{(t)}_{u,j} \beta^{(t)}_{j,i} + \sum_{j} \delta^{(t)}_{u,j} \beta^{(t)}_{j,i} \Big) \\
    &= \varphi\Big( \sum_{j} \mathscr G^{(t)}_{u,j} \beta^{(t)}_{j,i} \Big) + O(1) \cdot \Big| \sum_{j} \delta^{(t)}_{u,j} \beta^{(t)}_{j,i} \Big| \\
    &= \varphi\Big( \sum_{j} \mathscr G^{(t)}_{u,j} \beta^{(t)}_{j,i} \Big) + O(K_{t+1} K_t^{20} (\log n)^2 \Delta_t) \,,
\end{align*}
where in the last equality we use \eqref{eq-bound-small-Gaussian}. Thus, we have (recall \eqref{eq-def-Delta-s})
\begin{align*}
    \Big( \mathbb J_{1\times [n] \setminus \mathsf U} f^{(t)} \Big)_{i} = \sum_{u \in [n] \setminus \mathsf U} \varphi\Big( \sum_{j} \mathscr G^{(t)}_{u,j} \beta^{(t)}_{j,i} \Big) + o( \Delta_{t+1} n) \,.
\end{align*}
Note that 
\begin{align*}
    \Big\{ \sum_{j} \mathscr G^{(t)}_{u,j} \beta^{(t)}_{j,i} : u \in [n] \setminus \mathsf U \Big\}
\end{align*}
are independent Gaussian random variables with mean zero and variance $1+O(K_t^{20} \Delta_t)$, (recall that $\varphi$ is symmetric and bounded) using Chernoff's inequality we have
\begin{align*}
    \mathbb P\Bigg( \sum_{u \in [n] \setminus \mathsf U} \varphi\Big( \sum_{j} \mathscr G^{(t)}_{u,j} \beta^{(t)}_{j,i} \Big) \geq \tfrac{\Delta_{t+1}}{2} n \Bigg) \leq \exp(-n^{0.1}) \,.
\end{align*}
Thus by a union bound we have $\big\| \mathbb J_{1\times [n] \setminus \mathsf U} f^{(t)} \big\|_{\infty}$ holds with probability $1-o(e^{-(\log n)^2})$. Similarly result holds for $\big\| \mathbb J_{1\times [n] \setminus \mathsf V} g^{(t)} \big\|_{\infty}$. Thus, we get that
\begin{equation}{\label{eq-bound-prob-item-1}}
    \mathbb P\Big( \mbox{Item~(1) holds for } t+1 \mid \widetilde{\mathcal E}_t \Big) \geq 1-O(e^{-n^{0.1}}) \,.
\end{equation}

\subsubsection{Proofs of Items~(2)--(4)}

In this subsection we show that Items~(2)--(4) hold for $t+1$. Recall that we have shown
\begin{align*}
    f^{(t+1)}_{u,i} = \varphi\Big( \sum_{j} \mathscr G^{(t)}_{u,j} \beta^{(t)}_{j,i} \Big) + O(K_{t+1} K_t^{20} (\log n)^2 \Delta_t) \,.
\end{align*}
Thus, combining the fact that $\varphi(x)$ is bounded by $1$ we have
\begin{align*}
    &\Big( \big( f^{(t+1)} \big)^{\top} f^{(t+1)} \Big)_{i,j} = \sum_{u \in [n] \setminus \mathsf U} f^{(t+1)}_{u,i} f^{(t+1)}_{u,j} \\
    =\ & \sum_{u \in [n] \setminus \mathsf U} \varphi\Big( \sum_{k} \mathscr G^{(t)}_{u,k} \beta^{(t)}_{k,i} \Big) \varphi\Big( \sum_{j} \mathscr G^{(t)}_{u,k} \beta^{(t)}_{k,j} \Big) + O(K_{t+1} K_t^{20} (\log n)^2 \Delta_t n) \\
    =\ & \sum_{u \in [n] \setminus \mathsf U} \varphi\Big( \sum_{k} \mathscr G^{(t)}_{u,k} \beta^{(t)}_{k,i} \Big) \varphi\Big( \sum_{j} \mathscr G^{(t)}_{u,k} \beta^{(t)}_{k,j} \Big) + o(\Delta_{t+1} n) \,,
\end{align*}
where in the last equality we use \eqref{eq-def-Delta-s}.
Note that 
\begin{align*}
    \Big\{ \varphi\Big( \sum_{k} \mathscr G^{(t)}_{u,k} \beta^{(t)}_{k,i} \Big) \varphi\Big( \sum_{j} \mathscr G^{(t)}_{u,k} \beta^{(t)}_{k,j} \Big) : u \in [n] \setminus \mathsf U \Big\}
\end{align*}
are independent bounded random variables, with 
\begin{align*}
    & \mathbb E \Big[ \varphi\Big( \sum_{k} \mathscr G^{(t)}_{u,k} \beta^{(t)}_{k,i} \Big) \varphi\Big( \sum_{j} \mathscr G^{(t)}_{u,k} \beta^{(t)}_{k,j} \Big) \Big] \\
    =\ & \mathbb E\Big[ \varphi(X)\varphi(Y): X,Y \sim \mathcal N(0,1+O(K_t^{20} \Delta_t)), \mathrm{Cov}(X,Y)= (1+O(K_t^{20} \Delta_t)) \langle \beta^{(t)}_i, \beta^{(t)}_j \rangle \Big] \\
    =\ & \phi(\langle \beta^{(t)}_i, \beta^{(t)}_j \rangle) + O(K_t^{20} \Delta_t) = \Phi^{(t+1)}_{i,j} + O(K_t^{20} \Delta_t) \,.
\end{align*}
Thus, using Bernstein's inequality we see that
\begin{align*}
    & \mathbb P\Bigg( \Big| \big( \big( f^{(t+1)} \big)^{\top} f^{(t+1)} \big)_{i,j} - n \Phi^{(t+1)}_{i,j} \Big| > \Delta_{t+1}n \Bigg) \\
    \leq\ & \mathbb P\Bigg( \Big| \sum_{u \in [n] \setminus \mathsf U} \varphi\Big( \sum_{k} \mathscr G^{(t)}_{u,k} \beta^{(t)}_{k,i} \Big) \varphi\Big( \sum_{j} \mathscr G^{(t)}_{u,k} \beta^{(t)}_{k,j} \Big) - n \Phi^{(t+1)}_{i,j} \Big| > \Delta_{t+1}n/2 \Bigg) \leq e^{-n^{0.1}} \,.
\end{align*}
Thus, using a union bound we see that 
\begin{align*}
    & \mathbb P\Big( \big\| \big( f^{(t+1)} \big)^{\top} f^{(t+1)} - n \Phi^{(t+1)}_{i,j} \big\|_{\infty} \leq \Delta_{t+1}n \Big) \geq 1- n^2 e^{-n^{0.1}} \,.
\end{align*}
Similar results also holds for $\big( g^{(t+1)} \big)^{\top} g^{(t+1)}$. Thus we have
\begin{equation}{\label{eq-bound-prob-item-2}}
    \mathbb P\big( \mbox{Item~(2) holds for } t+1 \mid \widetilde{\mathcal E}_t \big) \geq 1- 2n^2 e^{-n^{0.1}} \,.
\end{equation}
Similarly, we have
\begin{align*}
    & \Big( \big( f^{(t+1)} \big)^{\top} g^{(t+1)} \Big)_{i,j} \\
    =\ & \sum_{u \in [n] \setminus \mathsf U} \varphi\Big( \sum_{k} \mathscr G^{(t)}_{u,k} \beta^{(t)}_{k,i} \Big) \varphi\Big( \sum_{j} \mathscr H^{(t)}_{u,k} \beta^{(t)}_{k,j} \Big) + O(K_{t+1} K_t^{20} \Delta_t n) \,,
\end{align*}
where
\begin{align*}
    \Big\{ \varphi\Big( \sum_{k} \mathscr G^{(t)}_{u,k} \beta^{(t)}_{k,i} \Big) \varphi\Big( \sum_{j} \mathscr H^{(t)}_{u,k} \beta^{(t)}_{k,j} \Big) : u \in [n] \setminus \mathsf U \Big\}
\end{align*}
are independent bounded random variables with 
\begin{align*}
    \mathbb E \Big[ \varphi\Big( \sum_{k} \mathscr G^{(t)}_{u,k} \beta^{(t)}_{k,i} \Big) \varphi\Big( \sum_{j} \mathscr H^{(t)}_{u,k} \beta^{(t)}_{k,j} \Big) \Big] = \Psi^{(t+1)}_{i,j} + O(K_t^{20} \Delta_t) \,.
\end{align*}
Thus we have
\begin{equation}{\label{eq-bound-prob-item-3}}
    \mathbb P\big( \mbox{Item~(3) holds for } t+1 \mid \widetilde{\mathcal E}_t \big) \geq 1- 2n^2 e^{-n^{0.1}} \,.
\end{equation}
Furthermore, we control the concentration of $\| (f^{(s)})^{\top} f^{(t+1)} \|_{\infty}$. Note that under $\mathcal{F}_{t}$, $f^{(s)}$ is fixed for $s\leq t$. So, 
\begin{align*}
    \big( (f^{(s)})^{\top} f^{(t+1)} \big)_{i,j} = \sum_{u \in [n] \setminus \mathsf U} f^{(s)}_{i,u} \varphi\Big( \sum_{k} \mathscr G^{(t)}_{u,k} \beta^{(t)}_{k,j} \Big) + O(K_t^{20}\Delta_t n) \,,
\end{align*}
which can be handled similarly to that for $\big\| \mathbb J_{1 \times [n] \setminus \mathsf U} f^{(t+1)} \big\|_{\infty}$. We omit further details since the modifications are minor. In conclusion, we have shown that
\begin{equation}{\label{eq-bound-prob-item-4}}
    \mathbb P\big( \mbox{Item~(4) holds for } t+1 \mid \widetilde{\mathcal E}_t \big) \geq 1- 3n^2 e^{-n^{0.1}} \,.
\end{equation}

\subsubsection{Proof of Item~(5)}

In this section we prove that Item~(5) holds for time $t+1$. Recall again that
\begin{align*}
    f^{(t+1)}_{u,i} = \varphi\Big( \sum_{j} \mathscr G^{(t)}_{u,j} \beta^{(t)}_{j,i} \Big) + O(K_{t+1} K_t^{20} (\log n)^2 \Delta_t) \,.
\end{align*}
Thus, for all $|W| \leq 10\epsilon n$ we have
\begin{align*}
    \big\| f^{(t+1)}_{W \times [K_t]} \big\|_{\operatorname{HS}}^2 &= \sum_{ u \in W } \sum_{i \leq K_{t+1}} \Big( \varphi\Big( \sum_{j} \mathscr G^{(t)}_{u,j} \beta^{(t)}_{j,i} \Big)^2 + O(K_{t+1} K_t^{20} (\log n)^2 \Delta_t) \Big) \\
    &\leq \sum_{ u \in W } \sum_{i \leq K_{t+1}}  \varphi\Big( \sum_{j} \mathscr G^{(t)}_{u,j} \beta^{(t)}_{j,i} \Big)^2 + O(K_{t+1}^2 K_t^{20} (\log n)^2 \Delta_t n)  \,.
\end{align*}
Thus, it suffices to show that
\begin{align}
    \sum_{ u \in W } \sum_{i \leq K_{t+1}} \varphi\Big( \sum_{j} \mathscr G^{(t)}_{u,j} \beta^{(t)}_{j,i} \Big)^2 \leq 90 K_{t+1}^2 \epsilon \log(\epsilon^{-1}) n \mbox{ for all } |W| \leq 10 \epsilon n \,. \label{eq-item-5-relax-1}
\end{align}
For each fixed $|W| \leq 10\epsilon n$, note that 
\begin{align*}
    \Big\{ \varphi\Big( \sum_{j} \mathscr G^{(t)}_{u,j} \beta^{(t)}_{j,i} \Big)^2 : u \in W \Big\}
\end{align*}
are bounded independent random variables with mean bound by $1$. Thus, using Bernstein's inequality again we get that
\begin{align*}
    & \mathbb P\Big( \sum_{ u \in W } \sum_{i \leq K_{t+1}} \varphi\Big( \sum_{j} \mathscr G^{(t)}_{u,j} \beta^{(t)}_{j,i} \Big)^2 > 90 K_{t+1}^2 \epsilon \log(\epsilon^{-1}) n \Big) \\
    \leq\ & K_{t+1} \mathbb P\Big( \sum_{ u \in W } \varphi\Big( \sum_{j} \mathscr G^{(t)}_{u,j} \beta^{(t)}_{j,i} \Big)^2 > 90 K_{t+1} \epsilon \log(\epsilon^{-1}) n \Big) \leq e^{-90 \epsilon \log(\epsilon^{-1}) n} \,.
\end{align*}
This yields \eqref{eq-item-5-relax-1} since the enumeration of $W$ is bounded by 
\begin{align*}
    \sum_{k \leq 10\epsilon n} \binom{n}{k} \leq \exp( 20\epsilon\log(\epsilon^{-1}) n ) \,.
\end{align*}
We can similarly show that $\big\| g^{(t)}_{ W \times [K_t] } \big\|_{\operatorname{HS}} \leq 10\sqrt{K_t \epsilon \log(\epsilon^{-1}) n}$ for all $|W| \leq 10\epsilon n$. Thus we have
\begin{equation}{\label{eq-prob-item-(5)}}
    \mathbb P\Big( \mbox{Item~(5) holds for } t+1 \mid \widetilde{\mathcal E}_t \Big) \geq 1-O(e^{-\epsilon n}) \,.
\end{equation}

\subsubsection{Conclusion}
By putting together \eqref{eq-bound-small-Gaussian}, \eqref{eq-bound-prob-item-1}, \eqref{eq-prob-item-(6)}, \eqref{eq-bound-prob-item-2}, \eqref{eq-bound-prob-item-3}, \eqref{eq-bound-prob-item-4} and \eqref{eq-prob-item-(5)}, we have proved 
\begin{equation*}
    \mathbb P\big( \widetilde{\mathcal E}_{t+1} \mid \widetilde{\mathcal E}_{t} \big) \geq 1-O(e^{-(\log n)^2}) \,.
\end{equation*}
In addition, since $t^*+1 =O(\log\log\log n)$, our quantitative bounds imply that all these hold simultaneously for $0 \leq t \leq t^*+1$ except with probability $O(e^{-0.5(\log n)^2})$. This concludes Lemma~\ref{lem-good-event-clean}.

\subsection{Proof of Lemma~\ref{lem-final-analysis-clean}}{\label{sec:proof-lem-3.3}}

Now we can present the proof of Lemma~\ref{lem-final-analysis-clean} formally. Based on Lemma~\ref{lem-good-event-clean}, it remains to show that under $\mathcal E_{\diamond}=\cap_{t \leq t^*} \mathcal E_{t}$, we have 
\begin{align*}
    \mathcal T =\ & \Bigg( \cap_{ 1 \leq i \leq n } \Big\{ \big\langle h^{(t^*)}_i, \ell^{(t^*)}_i \big\rangle \geq \frac{9}{10} K_{t^*} \varepsilon_{t^*} \Big\} \Bigg) \\
    &\bigcap \Bigg( \cap_{ 1 \leq i \neq j \leq n } \Big\{ \big\langle h^{(t^*)}_i, \ell^{(t^*)}_j \big\rangle \leq \frac{1}{10} K_{t^*} \varepsilon_{t^*} \Big\} \Bigg)
\end{align*}
occurs with probability $1-o(1)$. Recall \eqref{eq-conditional-distribution} and \eqref{eq-bound-small-Gaussian}. Thus, we have
\begin{align*}
    \big\langle h^{(t^*)}_i, \ell^{(t^*)}_i \big\rangle | \mathcal F_{t^*-1} \overset{d}{=} \langle \mathscr G^{(t^*)}_i, \mathscr H^{(t^*)}_i \rangle + O(n^{-0.01}) \,.
\end{align*}
Thus, we get that
\begin{align*}
    \mathbb P\Big( \big\langle h^{(t^*)}_i, \ell^{(t^*)}_i \big\rangle \leq \frac{9}{10} K_{t^*} \varepsilon_{t^*}; \mathcal E_{\diamond} \Big) \leq \exp\big( -K_{t^*} \varepsilon_{t^*}^2/100 \big) \leq n^{-4} \,,
\end{align*}
and similarly
\begin{align*}
    \mathbb P\Big( \big\langle h^{(t^*)}_i, \ell^{(t^*)}_j \big\rangle \geq \frac{1}{10} K_{t^*} \varepsilon_{t^*}; \mathcal E_{\diamond} \Big) \leq \exp\big( -K_{t^*} \varepsilon_{t^*}^2/100 \big) \leq n^{-4} \,.
\end{align*}
Combining these two estimates, we get from a simple union bound that
\begin{align*}
    \mathbb P\big( \mathcal T; \mathcal E_{\diamond} \big) \geq 1- \tfrac{1}{n} \,,
\end{align*}
which concludes the proof of Lemma~\ref{lem-final-analysis-clean}.

\subsection{Proof of Lemma~\ref{lem-approx-f,g,h,ell}}{\label{sec:proof-approx-lem}}

In this section we prove Lemma~\ref{lem-approx-f,g,h,ell} formally. Using Lemma~\ref{lem-good-event-clean}, we may work under the event $\cap_{t \leq t^*} \mathcal E_t$.
Our proof is based on induction on $t$. Recall that we have $\widehat f^{(0)}=f^{(0)}$ and $\widehat g^{(0)}=g^{(0)}$. Now suppose \eqref{eq-approx-f,g} holds for $t$. Recall from \eqref{eq-def-Xi-t} that the columns of $\Xi^{(t)}$ are unit vectors, we have
    \begin{align}
        \sqrt{n}\big\| \widehat h^{(t)} - h^{(t)} \big\|_{\Fop} & \overset{\eqref{eq-def-iter-h-ell},\eqref{eq-def-iter-h-ell-clean}}{=} \Big\| \big( \widehat{\mathscr A}_{([n]\setminus \mathsf U \times [n] \setminus \mathsf U)} \widehat f^{(t)} - \mathscr A_{([n] \setminus \mathsf U \times [n] \setminus \mathsf U)} f^{(t)} \big) \Xi^{(t)} \Big\|_{\Fop} \nonumber \\
        &\leq \Big\| \widehat{\mathscr A}_{([n]\setminus \mathsf U \times [n] \setminus \mathsf U)} \widehat f^{(t)} - \mathscr A_{([n]\setminus \mathsf U \times [n] \setminus \mathsf U)} f^{(t)}  \Big\|_{\Fop} \cdot \| \Xi^{(t)} \|_{\op} \nonumber \\
        &\leq \sqrt{K_t} \cdot \Big\| \widehat{\mathscr A}_{([n]\setminus \mathsf U \times [n] \setminus \mathsf U)} \widehat f^{(t)} - \mathscr A_{([n] \setminus \mathsf U \times [n] \setminus \mathsf U)} f^{(t)} \Big\|_{\Fop} \,. \label{eq-approx-h,ell-relax-1}
    \end{align}
    In addition, using triangle inequality we have \eqref{eq-approx-h,ell-relax-1} is bounded by $\sqrt{K_t}$ times
    \begin{align}
        & \Big\| \widehat{\mathscr A}_{([n] \setminus \mathsf U \times [n] \setminus \mathsf U)} \big( \widehat f^{(t)} - f^{(t)} \big) \Big\|_{\Fop} + \Big\| \big( \widehat{\mathscr A}_{([n] \setminus \mathsf U \times [n] \setminus \mathsf U)} - \mathscr A_{([n]\times [n] \setminus \mathsf U)} \big) f^{(t)} \Big\|_{\Fop} \nonumber \\ 
        \leq\ & \big\| \widehat{\mathscr A}_{([n] \setminus \mathsf U \times [n] \setminus \mathsf U)} \big\|_{\op} \big\| \widehat f^{(t)} - f^{(t)} \big\|_{\Fop} + \Big\| \big( \widehat{\mathscr A}_{([n] \setminus \mathsf U \times [n] \setminus \mathsf U)} - \mathscr A_{([n] \setminus \mathsf U \times [n] \setminus \mathsf U)} \big) f^{(t)} \Big\|_{\Fop} \nonumber \\
        \leq\ & 10 \aleph_t \cdot n\sqrt{\epsilon} + \Big\| \big( \widehat{\mathscr A}_{([n] \setminus \mathsf U \times [n] \setminus \mathsf U)} - \mathscr A_{([n]\times [n] \setminus \mathsf U)} \big) f^{(t)} \Big\|_{\Fop} \,, \label{eq-approx-h,ell-relax-2}
    \end{align}
    where in the last inequality we use $\| \widehat{\mathscr A}_{([n] \setminus \mathsf U \times [n] \setminus \mathsf U)} \|_{\op} \leq \| \widehat{\mathscr A} \|_{\op} \leq 10\sqrt{n}$ and the induction hypothesis.
    Recall \eqref{eq-de-f-widehat-A'}--\eqref{eq-de-f-widehat-F}. Also recall \eqref{eq-def-mathscr-A,B-Gaussian} and \eqref{eq-def-widehat-mathscr-A,B-Gaussian}, we have
    \begin{align*}
        &\big(\widehat{\mathscr A}_{([n]\times [n] \setminus \mathsf U)} - \mathscr A_{([n]\times [n] \setminus \mathsf U)}\big)_{i,j} \\
        &= \begin{cases}
            \tfrac{\widehat E_{i,j} + \mathscr A_{i,j}}{\sqrt{2}}  \,, & (i,j) \in (Q \setminus S) \times (Q \setminus S) \,; \\
            \tfrac{\mathscr A_{i,j}}{\sqrt{2}} \,, & i \in S \mbox{ or } j \in S, (i,j) \not\in  (Q \setminus S) \times (Q \setminus S) \,; \\
            0 \,, & \mbox{otherwise} \,.
        \end{cases}
    \end{align*}
    Thus, we have
    \begin{align}
        & \Big( \widehat{\mathscr A}_{([n]\setminus \mathsf U \times [n] \setminus \mathsf U)} - \mathscr A_{ (Q \cup S \setminus \mathsf U) \times (Q \cup S \setminus \mathsf U) } \Big) \widehat f^{(t)} \nonumber \\
        =\ & \widehat E_{ (Q \setminus S) \times (Q \setminus S) } f^{(t)}_{ (Q \setminus S) \times [K_t] } + \mathscr A_{ ([n] \setminus (\mathsf U \cap S)) \times S } f^{(t)}_{S\times [K_t]} + \mathscr A_{S \times [n] \setminus \mathsf U} f^{(t)} \,. \label{eq-approx-h,ell-relax-3}
    \end{align}
    Note that $\widehat E_{ (Q \setminus S) \times (Q \setminus S) } = \widehat{\mathscr A}_{(Q \setminus S) \times (Q \setminus S)} - \mathscr A_{(Q \setminus S) \times (Q \setminus S)}$, we then have
    \begin{align*}
        \big\| \widehat E_{ (Q \cap S) \times (Q \cap S) } \big\|_{\op} \leq \big\| \widehat{\mathscr A} \big\|_{\op} + \big\| \mathscr A \big\|_{\op} \leq 20 \sqrt{n} \,.
    \end{align*}
    Thus, we have
    \begin{align}
        &\big\| E_{ (Q \cap S) \times (Q \cap S) } f^{(t)}_{ (Q \cap S) \times [K_t] } \big\|_{\Fop} \nonumber \\
        \leq\ & \big\| E_{ (Q \cap S) \times (Q \cap S) } \big\|_{\op} \cdot \big\| f^{(t)}_{ (Q \cap S) \times [K_t] } \big\|_{\Fop} \nonumber \\
        \leq\ & 20 \sqrt{n} \cdot 10\sqrt{K_t \epsilon \log(\epsilon^{-1}) n} = 200n \sqrt{\epsilon \log(\epsilon^{-1}) K_t} \,, \label{eq-approx-h,ell-relax-4}
    \end{align}
    where in the second inequality we used Item~(5) in Lemma~\ref{lem-good-event-clean}. Similarly, we also have
    \begin{align}
        \big\| \mathscr A_{ ([n] \setminus (\mathsf U \cap S)) \times S } f^{(t)}_{S\times [K_t]} \big\|_{\Fop} &\leq \big\| \mathscr A_{ ([n] \setminus (\mathsf U \cap S)) \times S } \big\|_{\op} \big\| f^{(t)}_{S\times [K_t]} \big\|_{\Fop} \nonumber \\
        &\leq 2\sqrt{n} \big\| f^{(t)}_{S\times [K_t]} \big\|_{\Fop} \leq 20n \sqrt{\epsilon \log(\epsilon^{-1}) K_t} \,. \label{eq-approx-h,ell-relax-5} 
    \end{align}
    Finally, we have
    \begin{align}
        \big\| \mathscr A_{S \times [n] \setminus \mathsf U} f^{(t)} \big\|_{\Fop} \overset{\eqref{eq-def-iter-h-ell}}{=} \sqrt{n} \cdot \big\| h^{(t)}_{ S \times [n] \setminus \mathsf U } \big\|_{\Fop} \leq 10n\sqrt{K_t \epsilon \log(\epsilon^{-1})} \,. \label{eq-approx-h,ell-relax-6}
    \end{align}
    Plugging \eqref{eq-approx-h,ell-relax-4}, \eqref{eq-approx-h,ell-relax-5} and \eqref{eq-approx-h,ell-relax-6} into \eqref{eq-approx-h,ell-relax-3} we get that
    \begin{align*}
        \big\| \widehat{\mathscr A}_{([n]\setminus \mathsf U \times [n] \setminus \mathsf U)} - \mathscr A_{ (Q \cup S \setminus \mathsf U) \times (Q \cup S \setminus \mathsf U) } \widehat f^{(t)} \big\|_{\Fop} \leq 300n \sqrt{\epsilon \log(\epsilon^{-1}) K_t}
    \end{align*}
    Combined with \eqref{eq-approx-h,ell-relax-2}, we see that
    \begin{equation}{\label{eq-approx-h,ell-relax-7}}
        \big\| \widehat h^{(t)} - h^{(t)} \big\|_{\Fop} \leq 1000 \aleph_t \cdot \sqrt{K_t \epsilon \log(\epsilon^{-1}) n} \,.
    \end{equation}
    Similarly we can show $\big\| \widehat \ell^{(t)} - \ell^{(t)} \big\|_{\Fop} \leq 1000 \aleph_t \cdot \sqrt{K_t \epsilon \log(\epsilon^{-1}) n}$. Thus we have \eqref{eq-approx-h,ell} holds for $t$. Recall \eqref{eq-def-iter-h-ell} and \eqref{eq-def-iter-h-ell-clean}. Using the fact that $\varphi'$ is uniformly bounded by $1$ we have
    \begin{align}
        \big\| \widehat f^{(t+1)} - f^{(t+1)} \big\|_{\Fop}^2 &= \sum_{i=1}^{n} \sum_{j=1}^{K_{t+1}} \Big( \varphi\big( h^{(t)} \beta^{(t)} \big)_{i,j} - \varphi\big( \widehat h^{(t)} \beta^{(t)} \big)_{i,j} \Big)^2 \nonumber \\
        &\leq \sum_{i=1}^{n} \sum_{j=1}^{K_{t+1}} \Big( \big( h^{(t)} \beta^{(t)} \big)_{i,j} - \big( \widehat h^{(t)} \beta^{(t)} \big)_{i,j} \Big)^2 \nonumber \\
        &= \big\| \big( \widehat h^{(t)} - h^{(t)} \big) \beta^{(t)} \big\|_{\Fop}^2 \leq \big\| \big( \widehat h^{(t)} - h^{(t)} \big) \big\|_{\Fop}^2 \big\| \beta^{(t)} \big\|_{\Fop}^2  \nonumber \\
        &\leq K_{t+1} \cdot \Big( 1000 \aleph_t \cdot \sqrt{K_t \epsilon (\log(\epsilon^{-1})) n} \Big)^2 \overset{\eqref{eq-def-aleph}}{\leq} \aleph_{t+1}^2 \epsilon n \,.  \label{eq-approx-f,g-relax-1}
    \end{align}
    We can similarly show that 
    \begin{align*}
        \big\| \widehat \ell^{(t+1)} - \ell^{(t+1)} \big\|_{\Fop}^2 \leq \aleph_{t+1}^2 \epsilon n \,.
    \end{align*}
    Thus we have \eqref{eq-approx-f,g} holds for $t+1$. This completes our induction.

\end{appendix}

\small

\end{document}